\DeclareMathOperator*{\argmin}{arg\,min}
\newtheorem{theorem}{Theorem}
\newtheorem{proposition}[theorem]{Proposition}%
\begin{document}

\title[Predictive Low Rank Matrix Learning under Partial Observations: Mixed-Projection ADMM]{Predictive Low Rank Matrix Learning under Partial Observations: Mixed-Projection ADMM}


\author[1]{\fnm{Dimitris} \sur{Bertsimas}}\email{dbertsim@mit.edu}
\equalcont{These authors contributed equally to this work.}

\author*[2]{\fnm{Nicholas} \sur{Johnson}}\email{nagj@mit.edu}
\equalcont{These authors contributed equally to this work.}

\affil[1]{\orgdiv{Sloan School of Management}, \orgname{Massachusetts Institute of Technology}, \orgaddress{\street{100 Main Street}, \city{Cambridge}, \postcode{02142}, \state{MA}, \country{USA}}}

\affil[2]{\orgdiv{Operations Research Center}, \orgname{Massachusetts Institute of Technology}, \orgaddress{\street{1 Amherst Street}, \city{Cambridge}, \postcode{02142}, \state{MA}, \country{USA}}}


\abstract{We study the problem of learning a partially observed matrix under the low rank assumption in the presence of fully observed side information that depends linearly on the true underlying matrix. This problem consists of an important generalization of the Matrix Completion problem, a central problem in Statistics, Operations Research and Machine Learning, that arises in applications such as recommendation systems, signal processing, system identification and image denoising. We formalize this problem as an optimization problem with an objective that balances the strength of the fit of the reconstruction to the observed entries with the ability of the reconstruction to be predictive of the side information. We derive a mixed-projection reformulation of the resulting optimization problem and present a strong semidefinite cone relaxation. We design an efficient, scalable alternating direction method of multipliers algorithm that produces high quality feasible solutions to the problem of interest. Our numerical results demonstrate that in the small rank regime ({\color{black}$k \leq 10$}), our algorithm outputs solutions that achieve on average {\color{black}$2.3\%$} lower objective value and {\color{black}$41\%$} lower $\ell_2$ reconstruction error than the solutions returned by the best performing benchmark method on synthetic data. The runtime of our algorithm is competitive with and often superior to that of the benchmark methods. Our algorithm is able to solve problems with $n = 10000$ rows and $m = 10000$ columns in less than a minute. On large scale real world data, our algorithm produces solutions that achieve $67\%$ lower out of sample error than benchmark methods in $97\%$ less execution time.}

\keywords{Matrix Completion; Rank; Mixed-Projection; ADMM;}



\maketitle

\section{Introduction} \label{sec_lrml4:intro}

In many real world applications, we are faced with the problem of recovering a (often large) matrix from a (often small) subset of its entries. This problem, known as Matrix Completion (MC), has gained significant attention due to its broad range of applications in areas such as signal processing \citep{candes2010matrix}, system identification \citep{liu2019power} and image denoising \citep{ji2010robust}. The fundamental task in MC is to accurately reconstruct the missing entries of a matrix given a limited number of observed entries. The challenge is particularly pronounced when the number of observed entries is small relatively to the dimension of the matrix, yet this is the common scenario in practice.

One of the most prominent uses of MC is in recommendation systems, where the goal is to predict user preferences for items (e.g., movies, products) based on a partially observed user-item rating matrix \citep{ramlatchan2018survey}. The Netflix Prize competition highlighted the potential of MC techniques, where the objective was to predict missing ratings in a user-movie matrix to improve recommendation accuracy \citep{bell2007lessons}. The success of such systems hinges on the assumption that the underlying rating matrix is low rank, meaning that the preferences of users can be well-approximated by a small number of factors. Indeed, it has been well studied that many real world datasets are low rank \citep{udell2019big}.

In many practical applications, in addition to a collection of observed matrix entries we additionally have access to auxiliary side information that can be leveraged when performing the reconstruction. For example, in a recommendation system, side information might consist of social network data or item attributes. The vast majority of existing approaches to MC in the presence of side information incorporate the side information by making additional structural restrictions on the reconstructed matrix beyond the usual low rank assumption (see, for example, \cite{xu2013speedup, chiang2015matrix, bertsimas2020fast}). In this work, we take an alternate approach by assuming that the side information can be well {\color{black}modeled} as a linear function of the underlying full matrix. In this setting, the side information can be thought of as labels for a regression problem where the unobserved matrix consists of the regression features. This assumption is in keeping with ideas from the predictive low rank kernel learning literature \citep{bach2005predictive} (note however that low rank kernel learning assumes a fully observed input matrix). {\color{black}One motivation for this approach is that it incorporates the side information in a more flexible manner in contrast to existing approaches that make fixed structural assumptions through an explicit parametrization of the partially observed matrix as a function of the side information. Indeed, the model we present is most natural for the setting of MC with noisy side information in contrast to the setting of perfectly noiseless side information \citep{farhat2013genomic, natarajan2014inductive} or graph based side information \citep{elmahdy2020matrix, suh2021use}.}

Formally, let $\Omega \subseteq [n] \times [m]$ denote a collection of revealed entries of a partially observed matrix $\bm{A} \in \mathbb{R}^{n \times m}$, let $\bm{Y} \in \mathbb{R}^{n \times d}$ denote a matrix of side information and let $k$ denote a specified target rank. We consider the problem given by
\begin{equation}
    \begin{aligned}
        \min_{\bm{X} \in \mathbb{R}^{n \times m}, \bm{\alpha} \in \mathbb{R}^{m \times d}} \sum_{(i, j) \in \Omega}(X_{ij}-A_{ij})^2 + \lambda \Vert \bm{Y} - \bm{X}\bm{\alpha} \Vert_F^2 + \gamma \Vert \bm{X} \Vert_{\star} \quad \text{s.t.} \quad \text{rank}(\bm{X}) \leq k,
    \end{aligned} \label{opt_lrml4:MC_primal}
\end{equation} where $\lambda, \gamma > 0$ are hyperparameters that in practice can either take a default value or can be cross-validated by minimizing a validation metric \citep{validation} to obtain strong out-of-sample performance \citep{bousquet2002stability}. We assume that the ground truth matrix $\bm{A}$ has low rank and that the side information can be well approximated as $\bm{Y} = \bm{A} \bm{\alpha} + \bm{N}$ for some weighting matrix $\bm{\alpha}$ and noise matrix $\bm{N}$. The first term in the objective function of \eqref{opt_lrml4:MC_primal} measures how well the observed entries of the unknown matrix are fit by the estimated matrix $\bm{X}$, the second term of the objective function measures how well the side information $\bm{Y}$ can be represented as a linear function of the estimated matrix $\bm{X}$ and the final term of the objective is a regularization term. To the best of our knowledge, Problem \eqref{opt_lrml4:MC_primal} has not previously been directly studied despite its very natural motivation. 

\subsection{Contribution and Structure}

In this paper, we tackle \eqref{opt_lrml4:MC_primal} by developing novel mixed-projection optimization techniques \citep{bertsimas2020mixed}. We show that solving \eqref{opt_lrml4:MC_primal} is equivalent to solving an appropriately defined robust optimization problem. We develop an exact reformulation of \eqref{opt_lrml4:MC_primal} by combining a parametrization of the $\bm{X}$ decision variable as the product of two low rank factors with the introduction of a projection matrix to model the column space of $\bm{X}$. We derive a semidefinite cone convex relaxation for our mixed-projection reformulation and we present an efficient, scalable alternating direction method of multipliers (ADMM) algorithm that produces high quality feasible solutions to \eqref{opt_lrml4:MC_primal}. Our numerical results show that across all synthetic data experiments in the small rank regime ({\color{black}$k \leq 10$}), our algorithm outputs solutions that achieve on average {\color{black}$2.3\%$} lower objective value in \eqref{opt_lrml4:MC_primal} and {\color{black}$41\%$} lower $\ell_2$ reconstruction error than the solutions returned by the best performing benchmark method on a per experiment basis. For the $5$ synthetic data experiments with $k > 15$, the only benchmark that returns a solution with superior quality than that returned by our algorithm takes on average $3$ times as long to execute. The runtime of our algorithm is competitive with and often superior to that of the benchmark methods. Our algorithm is able to solve problems with $n = 10000$ rows and $m = 10000$ columns in less than a minute. On large scale real world data, our algorithm produces solutions that achieve $67\%$ lower out of sample error than benchmark methods in $97\%$ less execution time.

The rest of the paper is laid out as follows. In Section \ref{sec_lrml4:lit_review}, we review previous work that is closely related to \eqref{opt_lrml4:MC_primal}. In Section \ref{sec_lrml4:form_properties}, we study \eqref{opt_lrml4:MC_primal} under a robust optimization lens and investigate formulating \eqref{opt_lrml4:MC_primal} as a two stage optimization problem where the inner stage is a regression problem that can be solved in closed form. We formulate \eqref{opt_lrml4:MC_primal} as a mixed-projection optimization problem in Section \ref{sec_lrml4:mixed_proj} and present a natural convex relaxation. In Section \ref{sec_lrml4:admm}, we present and rigorously study our ADMM algorithm. Finally, in Section \ref{sec_lrml4:experiments} we investigate the performance of our algorithm against benchmark methods on synthetic and real world data.

\paragraph{Notation:} We let nonbold face characters such as $b$ denote scalars, lowercase bold faced characters such as $\bm{x}$ denote vectors, uppercase bold faced characters such as $\bm{X}$ denote matrices, and calligraphic uppercase characters such as $\mathcal{Z}$ denote sets. We let $[n]$ denote the set of running indices $\{1, ..., n\}$. We let $\bm{0}_n$ denote an $n$-dimensional vector of all $0$'s, $\bm{0}_{n\times m}$ denote an $n \times m$-dimensional matrix of all $0$'s, and $\bm{I}_n$ denote the $n \times n$ identity matrix. We let $\mathcal{S}^n$ denote the cone of $n \times n$ symmetric matrices and $\mathcal{S}^n_+$ denote the cone of $n \times n$ positive semidefinite matrices.

\section{Literature Review} \label{sec_lrml4:lit_review}

In this section, we review a handful of notable approaches from the literature that have been employed to solve MC and to solve general low rank optimization problems. As an exhaustive literature review of MC methods is outside of the scope of this paper, we focus our review on a handful of well studied approaches which we will employ as benchmark methods in this work. We additionally give an overview of the ADMM algorithmic framework which is of central relevance to this work. For a more detailed review of the MC literature, we refer the reader to \cite{ramlatchan2018survey} and \cite{nguyen2019low}.

\subsection{Matrix Completion Methods} \label{ssec_lrml4:mc_methods}

\subsubsection{Iterative-SVD}

Iterative-SVD is an expectation maximization style algorithm \citep{dempster1977maximum} that generates a solution to the MC problem by iteratively computing a singular value decomposition (SVD) of the current iterate and estimating the missing values by performing a regression against the low rank factors returned by SVD \citep{troyanskaya2001missing}. This is one of a handful of methods in the literature that leverage the SVD as their primary algorithmic workhorse \citep{billsus1998learning, sarwar2000application}. Concretely, given a partially observed matrix $\{X_{ij}\}_{(i, j) \in \Omega}$ where $\Omega \subseteq [n] \times [m]$ and a target rank $k \in \mathbf{N}_+$, Iterative-SVD proceeds as follows:

\begin{enumerate}
    \item Initialize the iteration count $t \leftarrow 0$ and initialize missing entries of $X_{ij}, (i, j) \notin \Omega$ with the row average $X_{ij} = \frac{\sum_{l:(i, l)\in \Omega} X_{il}}{\vert\{l:(i, l)\in \Omega\}\vert}$.
    \item Compute a rank $k$ SVD $\bm{X}_t = \bm{U}_t\bm{\Sigma}_t\bm{V}_t^T$ of the current iterate where $\bm{U}_t \in \mathbb{R}^{n \times k}, \bm{\Sigma}_t \in \mathbb{R}^{k \times k}, \bm{V}_t \in \mathbb{R}^{m \times k}$.
    \item For each $(i, j) \notin \Omega$, estimate the missing value $(\bm{X}_{t+1})_{ij}$ by regressing all other entries in row $i$ against all except the $j^{th}$ row of $\bm{V}_t$. Concretely, letting $\bm{\Tilde{x}} = (\bm{X}_t)_{i, \star \setminus j} \in \mathbb{R}^{m-1}$ denote the column vector consisting of the $i^{th}$ row of $\bm{X}_t$ excluding the $j^{th}$ entry, letting $\bm{\Tilde{V}}=(\bm{V}_t)_{\star \setminus j, \star} \in \mathbb{R}^{(m-1) \times k}$ denote the matrix formed by eliminating the $j^{th}$ row from $\bm{V}_t$ and letting $\bm{\hat{v}} = (\bm{V}_t)_{j, \star} \in \mathbb{R}^k$ denote the column vector consisting of the $j^{th}$ row of $\bm{V}_t$, we set $(\bm{X}_{t+1})_{ij} = \bm{\hat{v}}^T(\bm{\Tilde{V}}^T\bm{\Tilde{V}})^{-1}\bm{\Tilde{V}}^T\bm{\Tilde{x}}.$
    \item Terminate if the total change between $\bm{X}_t$ and $\bm{X}_{t+1}$ is less than $0.01$. Otherwise, increment $t$ and return to Step 2.
\end{enumerate}

\subsubsection{Soft-Impute}

Soft-Impute is a convex relaxation inspired algorithm that leverages the nuclear norm as a low rank inducing regularizer \citep{JMLR:v11:mazumder10a}. This approach is one of a broad class of methods that tackle MC from a nuclear norm minimization lens \citep{fazel2002matrix, candes2010power, candes2012exact}. Seeking a reconstruction with minimum nuclear norm is typically motivated by the observation that the nuclear norm ball given by $\mathcal{B}=\{\bm{X} \in \mathbb{R}^{n \times n}: \Vert \bm{X} \Vert_\star \leq k\}$ is the convex hull of the nonconvex set $\mathcal{X} = \{\bm{X}\in \mathbb{R}^{n \times n}: \text{rank}(\bm{X})\leq k, \Vert \bm{X} \Vert_\sigma \leq 1\}$, where $\Vert \cdot \Vert_\sigma$ denotes the spectral norm. Moreover, several conditions have been established under which nuclear norm minimization methods are guaranteed to return the ground truth matrix \citep{candes2010power, candes2012exact} though such conditions tend to be strong and hard to verify in practice. Soft-Impute proceeds by iteratively replacing the missing elements of the matrix with those obtained from a soft thresholded low rank singular value decomposition. Accordingly, similarly to Iterative-SVD, Soft-Impute relies on the computation of a low rank SVD as the primary algorithmic workhorse. The approach relies on the result that for an arbitrary matrix $\bm{X}$, the solution of the problem $\min_{\bm{Z}} \frac{1}{2} \Vert \bm{X} - \bm{Z} \Vert_F^2 + \lambda \Vert \bm{Z} \Vert_\star$ is given by $\bm{\Hat{Z}} = S_\lambda(\bm{X})$ where $S_\lambda(\cdot)$ denotes the soft-thresholding operation \citep{donoho1995wavelet}. Explicitly, Soft-Impute proceeds as follows for a given regularization parameter $\lambda > 0$ and termination threshold $\epsilon > 0$:

\begin{enumerate}
    \item Initialize the iteration count $t \leftarrow 0$ and initialize $\bm{Z}_t = \bm{0}_{n \times m}$.
    \item Compute $\bm{Z}_{t+1} = S_\lambda(P_\Omega(\bm{X}) + P_\Omega^\perp(\bm{Z}_t))$ where $P_\Omega(\cdot)$ denotes the operation that projects onto the revealed entries of $\bm{X}$ while $P_\Omega^\perp(\cdot)$ denotes the operation that projects onto the missing entries of $\bm{X}$.
    \item Terminate if $\frac{\Vert\bm{Z}_t-\bm{Z}_{t+1}\Vert_F^2}{\Vert\bm{Z}_t\Vert_F^2}$. Otherwise, increment $t$ and return to Step 2.
\end{enumerate}

\subsubsection{Fast-Impute}

Fast-Impute is a projected gradient descent approach to MC that has desirable global convergence properties \citep{bertsimas2020fast}. Fast-Impute belongs to the broad class of methods that solve MC by factorizing the target matrix as $\bm{X} = \bm{U} \bm{V}^T$ where $\bm{U} \in \mathbb{R}^{n \times k}, \bm{V} \in \mathbb{R}^{m \times k}$ and performing some variant of gradient descent (or alternating minimization) on the matrices $\bm{U}$ and $\bm{V}$ \citep{koren2009matrix, jain2015fast, zheng2016convergence, jin2016provable}. We note that we leverage this common factorization in the approach to \eqref{opt_lrml4:MC_primal} presented in this work. Gradient descent based methods have shown great success. Despite the non-convexity of the factorization, it has been shown that in many cases gradient descent and its variants will nevertheless converge to a globally optimal solution \citep{chen2015fast, ge2015escaping, sun2016guaranteed, ma2018implicit, bertsimas2020fast}. Fast-Impute takes the approach of expressing $\bm{U}$ as a closed form function of $\bm{V}$ after performing the facorization and directly performs projected gradient descent updates on $\bm{V}$ with classic Nesterov acceleration \citep{nesterov1983method}. Moreover, to enhance scalability of their method, \cite{bertsimas2020fast} design a stochastic gradient extension of Fast-Impute that estimates the gradient at each update step by only considering a sub sample of the rows and columns of the target matrix. {\color{black}Finally, given side information $\bm{Y} \in \mathbb{R}^{m \times p}, $\cite{bertsimas2020fast} present a version of their algorithm that incorporates the side information by factorizing the target matrix as $\bm{X} = \bm{U}\bm{S}^T\bm{Y}^T$ for $\bm{U} \in \mathbb{R}^{n \times k}, \bm{S} \in \mathbb{R}^{p \times k}$.} 

\subsection{Low Rank Optimization Methods}

\subsubsection{ScaledGD} \label{sssec_lrml4:scaledGD}

ScaledGD is a highly performant method to obtain strong solutions to low rank matrix estimation problems that take the following form: \[\min_{\bm{X}\in \mathbb{R}^{n \times m}} f(\bm{X}) = \frac{1}{2}\Vert \mathcal{A}(\bm{X}) - \bm{y}\Vert_2^2 \quad \text{s.t.  rank}(\bm{X}) \leq k, \] where $\mathcal{A}(\cdot): \mathbb{R}^{n \times m} \rightarrow \mathbb{R}^l$ models some measurement process and we have $\bm{y} \in \mathbb{R}^l$ \citep{tong2021accelerating}. ScaledGD proceeds by factorizing the target matrix as $\bm{X} = \bm{U}\bm{V}^T$ and iteratively performing gradient updates on the low rank factors $\bm{U}, \bm{V}$ after preconditioning the gradients with an adaptive matrix that is efficient to compute. Doing so yields a linear convergence rate that is notably independent of the condition number of the low rank matrix. In so doing, ScaledGD combines the desirable convergence rate of alternating minimization with the desirable low per-iteration cost of gradient descent. Explicitly, letting $\mathcal{L}(\bm{U}, \bm{V}) = f(\bm{U}\bm{V}^T)$, ScaledGD updates the low rank factors as:
\[\bm{U}_{t+1} \leftarrow \bm{U}_t - \eta\nabla_{\bm{U}}\mathcal{L}(\bm{U}_t, \bm{V}_t)(\bm{V}^T\bm{V})^{-1},\]
\[\bm{V}_{t+1} \leftarrow \bm{V}_t - \eta\nabla_{\bm{V}}\mathcal{L}(\bm{U}_t, \bm{V}_t)(\bm{U}^T\bm{U})^{-1},\] where $\eta > 0$ denotes the step size.

\subsubsection{Mixed-Projection Conic Optimization}

Mixed-projection conic optimization is a recently proposed modelling and algorithmic framework designed to tackle a broad class of matrix optimization problems \citep{bertsimas2020mixed, bertsimas2021perspective}. Specifically, this approach considers problems that have the following form:

\begin{equation}
    \begin{aligned}
        \min_{\bm{X}\in \mathbb{R}^{n \times m}} \langle \bm{C}, \bm{X} \rangle + \lambda \cdot \text{rank}(\bm{X}) + \Omega(\bm{X}) \quad \text{s.t.} \quad \bm{A} \bm{X} = \bm{B}, \,\, \text{rank}(\bm{X}) \leq k, \,\, \bm{X} \in \mathcal{K},
    \end{aligned} \label{opt_lrml4:MPCO}
\end{equation} where $\bm{C} \in \mathbb{R}^{n \times m}$ is a cost matrix, $\lambda > 0$, $k \in \mathbb{N}_+$, $\bm{A} \in \mathbb{R}^{l \times n}, \bm{B} \in \mathbb{R}^{l \times m}, \mathcal{K}$ denotes a proper cone in the sense of \cite{boyd2004convex} and $\Omega(\cdot)$ is a Frobenius norm regularization function or a spectral norm regularization function of the input matrix. The main workhorse of mixed-projection conic optimization is the use of a projection matrix to cleverly model the rank terms in \eqref{opt_lrml4:MPCO}. This can be viewed as the matrix generalization of using binary variables to model the sparsity of a vector in mixed-integer optimization. \cite{bertsimas2020mixed} show that for an arbitrary matrix $\bm{X} \in \mathbb{R}^{n \times m}$, we have
\[\text{rank}(\bm{X}) \leq k \iff \exists \bm{P} \in \mathbb{S}^n: \bm{P}^2=\bm{P}, \text{Tr}(\bm{P}) \leq k, \bm{X}=\bm{P}\bm{X}.\] Introducing projection matrices allows the rank functions to be eliminated from \eqref{opt_lrml4:MPCO} at the expense of introducing non convex quadratic equality constraints. From here, most existing works that leverage mixed-projection conic optimization have either focused on obtaining strong semidefinite based convex relaxations \citep{bertsimas2020mixed, bertsimas2021perspective} or have focused on obtaining certifiably optimal solutions for small and moderately sized problem instances \citep{bertsimas2023slr, bertsimas2023optimal}. In this work, we leverage the mixed-projection framework to scalably obtain high quality solutions to large problem instances.

\subsection{Alternating Direction Method of Multipliers}

Alternating direction method of multipliers (ADMM) is an algorithm that was originally designed to solve linearly constrained convex optimization problems of the form
\begin{equation}
    \begin{aligned}
        &\min_{\bm{x}\in \mathbb{R}^n, \bm{z} \in \mathbb{R}^m} && f(\bm{x})+g(\bm{z}) \quad \text{s.t. 
 }\bm{A}\bm{x}+\bm{B}\bm{z}=\bm{c},
    \end{aligned} \label{opt_lrml4:ADMM_review}
\end{equation} where we have $\bm{A}\in \mathbb{R}^{l \times n}, \bm{B} \in \mathbb{R}^{l \times m}, \bm{c} \in \mathbb{R}^l$ and the functions $f$ and $g$ are assumed to be convex \citep{boyd2011distributed}. The main benefit of ADMM is that it can combine the decomposition benefits of dual ascent with the desirable convergence properties of the method of multipliers. Letting $\bm{y} \in \mathbb{R}^l$ denote the dual variable, the augmented Lagrangian of \eqref{opt_lrml4:ADMM_review} is given by \[\mathcal{L}^A(\bm{x}, \bm{z}, \bm{y}) = f(\bm{x})+g(\bm{z}) + \bm{y}^T(\bm{A}\bm{x}+\bm{B}\bm{z}-\bm{c})+\frac{\rho}{2}\Vert \bm{A}\bm{x}+\bm{B}\bm{z}-\bm{c} \Vert_2^2,\] where $\rho > 0$ is the augmented Lagrangian parameter. ADMM then proceeds by iteratively updating the primal variable $\bm{x}$, updating the primal variable $\bm{z}$ and taking a gradient ascent step on the dual variable $\bm{y}$. Explicitly, ADMM consists of the following updates:
\begin{enumerate}
    \item $\bm{x}_{t+1} \leftarrow \argmin_{\bm{x}} \mathcal{L}^A(\bm{x}, \bm{z}_t, \bm{y}_t)$,
    \item $\bm{z}_{t+1} \leftarrow \argmin_{\bm{z}} \mathcal{L}^A(\bm{x}_{t+1}, \bm{z}, \bm{y}_t)$,
    \item $\bm{y}_{t+1} \leftarrow \bm{y}_t + \rho (\bm{A}\bm{x}_{t+1}+\bm{B}\bm{z}_{t+1}-\bm{c})$.
\end{enumerate}
Under very mild regularity conditions on $f, g$ and $\mathcal{L}^A$, it is well known that ADMM is guaranteed to produce a sequence of primal iterates that converges to the optimal value of \eqref{opt_lrml4:ADMM_review} and a sequence of dual iterates that converge to the optimal dual variable (note that there is no guarantee of primal variable convergence) \citep{boyd2011distributed}. Importantly, although ADMM was originally designed for linearly constrained convex optimization, it has often been applied to non convex optimization problems and yielded empirically strong results \citep{xu2016empirical}. This observation has motivated work to explore the theoretical convergence behavior of ADMM and its variants on specific classes of non convex optimization problems \citep{guo2017convergence, wang2019global, wang2021nonconvex}.

\section{Formulation Properties} \label{sec_lrml4:form_properties}

In this section, we rigorously investigate certain key features of \eqref{opt_lrml4:MC_primal}. Specifically, we establish an equivalence between \eqref{opt_lrml4:MC_primal} and an appropriately defined robust optimization problem. Moreover, we illustrate that \eqref{opt_lrml4:MC_primal} can be reduced to an optimization problem over only $\bm{X}$ and establish that the resulting objective function is not convex, not concave and non-smooth. Finally, we study how efficient evaluations of the reduced problem objective function can be performed.

\subsection{Equivalence Between Regularization and Robustness}

Real-world datasets frequently contain inaccuracies and missing values, which hinder the ability of machine learning models to generalize effectively to new data when these inconsistencies are not appropriately {\color{black}modeled}. Consequently, robustness is a crucial quality for machine learning models, both in theory and application \citep{xu2009robustness, bertsimas2020robust}. In this section, we show that our regularized problem \eqref{opt_lrml4:MC_primal} can be viewed as a robust optimization (RO) problem. This finding justifies the inclusion of the nuclear norm regularization term in \eqref{opt_lrml4:MC_primal} and is in a similar flavor as known results from the
robust optimization literature in the case of vector \citep{bertsimas2018characterization} and matrix \citep{bertsimas2023slr} problems. {\color{black}Note that the equivalence presented in Proposition \ref{prop_lrml4:ro} follows from the well studied duality of the nuclear norm and the spectral norm.}

\begin{proposition} \label{prop_lrml4:ro}
    Problem \eqref{opt_lrml4:MC_primal} is equivalent to the following robust optimization problem:
    \begin{equation}
    \begin{aligned}
        \min_{\substack{\bm{X} \in \mathbb{R}^{n \times m} \\ \bm{\alpha} \in \mathbb{R}^{m \times d}}} \max_{\bm{\Delta} \in \mathcal{U}} \sum_{(i, j) \in \Omega}(X_{ij}-A_{ij})^2 + \lambda \Vert \bm{Y} - \bm{X}\bm{\alpha} \Vert_F^2 + \langle \bm{X}, \bm{\Delta} \rangle \quad \text{s.t.} \,\, \text{rank}(\bm{X}) \leq k_0,
    \end{aligned} \label{opt_lrml4:MC_primal_ro}
    \end{equation} where $\mathcal{U}=\{\bm{\Delta} \in \mathbb{R}^{n \times m}: \Vert \bm{\Delta} \Vert_\sigma \leq \gamma\}$.
\end{proposition}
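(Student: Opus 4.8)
The plan is to show that the nuclear norm regularization term $\gamma \Vert \bm{X} \Vert_\star$ in \eqref{opt_lrml4:MC_primal} is exactly the value of the inner maximization $\max_{\bm{\Delta} \in \mathcal{U}} \langle \bm{X}, \bm{\Delta} \rangle$ in \eqref{opt_lrml4:MC_primal_ro}, for every fixed $\bm{X}$. Since the remaining terms in the two objectives are identical and the feasible region (the rank constraint on $\bm{X}$ together with $\bm{\alpha} \in \mathbb{R}^{m \times d}$) is the same in both problems, establishing this pointwise identity immediately yields equivalence of the two problems: for each fixed feasible $(\bm{X}, \bm{\alpha})$ the objectives coincide, so taking the outer minimum over $(\bm{X}, \bm{\alpha})$ gives the same optimal value and the same set of optimal solutions.

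The key step is therefore to evaluate the inner maximization. First I would recall that the spectral norm $\Vert \cdot \Vert_\sigma$ and the nuclear norm $\Vert \cdot \Vert_\star$ are dual norms to one another. The uncertainty set is $\mathcal{U} = \{\bm{\Delta} : \Vert \bm{\Delta} \Vert_\sigma \leq \gamma\}$, which is simply $\gamma$ times the unit spectral-norm ball. By the definition of the dual norm,
\begin{equation*}
    \max_{\Vert \bm{\Delta} \Vert_\sigma \leq 1} \langle \bm{X}, \bm{\Delta} \rangle = \Vert \bm{X} \Vert_\star,
\end{equation*}
and scaling the radius by $\gamma$ scales the optimal value linearly, so
\begin{equation*}
    \max_{\bm{\Delta} \in \mathcal{U}} \langle \bm{X}, \bm{\Delta} \rangle = \gamma \Vert \bm{X} \Vert_\star.
\end{equation*}
To make this fully self-contained I would exhibit the maximizer explicitly: writing the singular value decomposition $\bm{X} = \bm{U} \bm{\Sigma} \bm{V}^T$, the choice $\bm{\Delta}^\star = \gamma \bm{U} \bm{V}^T$ satisfies $\Vert \bm{\Delta}^\star \Vert_\sigma = \gamma$ (hence lies in $\mathcal{U}$) and achieves $\langle \bm{X}, \bm{\Delta}^\star \rangle = \gamma \,\mathrm{Tr}(\bm{V}\bm{U}^T \bm{U} \bm{\Sigma} \bm{V}^T) = \gamma \,\mathrm{Tr}(\bm{\Sigma}) = \gamma \Vert \bm{X} \Vert_\star$, while the von Neumann trace inequality (or the dual-norm inequality $\langle \bm{X}, \bm{\Delta} \rangle \leq \Vert \bm{X} \Vert_\star \Vert \bm{\Delta} \Vert_\sigma$) shows no feasible $\bm{\Delta}$ can do better. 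This confirms the maximum equals $\gamma \Vert \bm{X} \Vert_\star$.

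I do not anticipate a genuine obstacle here, as the result is essentially a direct application of norm duality; the only mild care needed is to justify that the maximum is attained (the feasible set $\mathcal{U}$ is compact and the objective is linear hence continuous in $\bm{\Delta}$, so attainment follows from the extreme value theorem) and to be precise that the equivalence holds jointly over the outer variables. Concretely, after substituting the evaluated inner maximum, the robust objective of \eqref{opt_lrml4:MC_primal_ro} becomes $\sum_{(i,j) \in \Omega}(X_{ij} - A_{ij})^2 + \lambda \Vert \bm{Y} - \bm{X}\bm{\alpha} \Vert_F^2 + \gamma \Vert \bm{X} \Vert_\star$, which is identical to the objective of \eqref{opt_lrml4:MC_primal}; since both problems share the rank constraint (taking $k_0 = k$), the two optimization problems are equivalent, completing the proof.
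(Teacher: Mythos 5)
Your proposal is correct and follows essentially the same route as the paper's proof: reduce the claim to the identity $\max_{\bm{\Delta} \in \mathcal{U}} \langle \bm{X}, \bm{\Delta} \rangle = \gamma \Vert \bm{X} \Vert_\star$ via spectral--nuclear norm duality, certify the upper bound (the paper expands the trace against an SVD where you invoke von Neumann's trace inequality), and exhibit the same maximizer $\gamma \bm{U}\bm{V}^T$ to show attainment. No gaps; the additional remarks on compactness and on the equivalence holding jointly over $(\bm{X}, \bm{\alpha})$ are fine but not needed beyond what the paper does.
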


\begin{proof}
    To establish this result, it suffices to argue that $\max_{\bm{\Delta} \in \mathcal{U}}\langle \bm{X}, \bm{\Delta}\rangle = \gamma \Vert \bm{X}\Vert_\star$. This equivalence follows immediately from the fact that the nuclear norm is dual to the spectral norm. So as to keep this manuscript self contained, we present a proof of this equivalence below.

    Consider any matrix $\bm{\Bar{\Delta}} \in \mathbb{R}^{n \times m}$ such that $\Vert \bm{\Bar{\Delta}} \Vert_\sigma \leq \gamma$. Let $\bm{X} = \bm{U}\bm{\Sigma}\bm{V}^T$ be a singular value decomposition of $\bm{X}$ where we let $r = \text{rank}(\bm{X})$ and we have $\bm{U} \in \mathbb{R}^{n \times r}, \bm{\Sigma} \in \mathbb{R}^{r \times r}, \bm{V} \in \mathbb{R}^{m \times r}$. We have
    \begin{equation*}
    \begin{aligned}
        \langle \bm{X}, \bm{\Bar{\Delta}} \rangle &= \text{Tr}(\bm{\Bar{\Delta}}^T\bm{U}\bm{\Sigma}\bm{V}^T) = \text{Tr}(\bm{V}^T\bm{\Bar{\Delta}}^T\bm{U}\bm{\Sigma}) = \langle \bm{U}^T\bm{\Bar{\Delta}}\bm{V}, \bm{\Sigma}\rangle = \sum_{i=1}^r \Sigma_{ii} (\bm{U}^T\bm{\Bar{\Delta}}\bm{V})_{ii} \\
        &= \sum_{i=1}^r \Sigma_{ii} U_i^T\bm{\Bar{\Delta}}V_i \leq \sum_{i=1}^r \Sigma_{ii} \sigma_1(\bm{\Bar{\Delta}}) \leq \gamma \sum_{i=1}^r \Sigma_{ii} = \gamma \Vert \bm{X} \Vert_{\star},
    \end{aligned}
    \end{equation*} where we have used the fact that $\bm{\Sigma}$ is a diagonal matrix and the columns of $\bm{U}$ and $\bm{V}$ have unit length. Thus, we have shown that $\gamma \Vert \bm{X}\Vert_\star$ is an upper bound for $\max_{\bm{\Delta} \in \mathcal{U}}\langle \bm{X}, \bm{\Delta}\rangle$. To show that the upper bound is always achieved, consider the matrix $\bm{\Tilde{\Delta}} = \gamma \bm{U} \bm{V}^T \in \mathbb{R}^{n \times m}$ where $\bm{U}$ and $\bm{V}$ are taken from a singular value decomposition of $\bm{X}$. Observe that \[\Vert \bm{\Tilde{\Delta}} \Vert_\sigma = \gamma \Vert \bm{U} \bm{V}^T \Vert_\sigma \leq \gamma \implies \bm{\Tilde{\Delta}} \in \mathcal{U}.\] We conclude by noting that \[\langle \bm{X}, \bm{\Tilde{\Delta}} \rangle = Tr(\bm{V}\bm{\Sigma}\bm{U}^T\gamma \bm{U}\bm{V}^T) = \gamma Tr(\bm{V}^T\bm{V}\bm{\Sigma}\bm{U}^T\bm{U}) = \gamma \text{Tr}(\bm{I}\bm{\Sigma}\bm{I}) = \gamma \Vert \bm{X}\Vert_\star.\]
    
\end{proof}
Proposition \ref{prop_lrml4:ro} implies that solving the nuclear norm regularized \eqref{opt_lrml4:MC_primal} is equivalent to solving an unregularized robust optimization problem that protects against adversarial perturbations that are bounded in spectral norm. This result is not surprising given the duality of norms, yet is nevertheless insightful.

\subsection{A Partial Minimization} \label{ssec_lrml4:partial_min}

Let $g(\bm{X}, \bm{\alpha})$ denote the objective function of \eqref{opt_lrml4:MC_primal}. Note that $g(\bm{X}, \bm{\alpha})$ is bi-convex in $(\bm{X}, \bm{\alpha})$ but is not jointly convex due to the product $\bm{X}\bm{\alpha}$. Observe that we can simplify \eqref{opt_lrml4:MC_primal} by performing a partial minimization in $\bm{\alpha}$. For any $\bm{X}$, the problem in $\bm{\alpha}$ requires finding the unconstrained minimum of a convex quadratic function. The gradient of $g$ with respect to $\bm{\alpha}$ is given by $\nabla_{\bm{\alpha}}g(\bm{X}, \bm{\alpha}) = 2 \lambda \bm{X}^T(\bm{X}\bm{\alpha}-\bm{Y})$. Setting $\nabla_{\bm{\alpha}}g(\bm{X}, \bm{\alpha})$ to $0$ yields $\bm{\alpha}^\star = (\bm{X}^T\bm{X})^\dagger \bm{X}^T\bm{Y}$ as a minimizer of $g$ over $\bm{\alpha}$. Note that $\bm{M}^\dagger$ denotes the pseudo-inverse of a (possibly rank deficient) square matrix $\bm{M} \in \mathbb{R}^{l \times l}$. Specifically, letting $r = \text{rank}(\bm{M})$ and $\bm{M} = \bm{U}\bm{\Sigma}\bm{V}^T$ be a singular value decomposition of $\bm{M}$ with $\bm{U}, \bm{V} \in \mathbb{R}^{l \times r}$ and $\bm{\Sigma} \in \mathbb{R}^{l \times l}$, we have $\bm{M}^\dagger = \bm{U}\bm{\Sigma}^{-1}\bm{V}^T$. Letting $f(\bm{X})$ correspond to the partially minimized objective function of  \eqref{opt_lrml4:MC_primal}, we have 
\begin{equation*}
    \begin{aligned}
        f(\bm{X}) &= \min_{\bm{\alpha}}g(\bm{X}, \bm{\alpha}) = \sum_{(i, j) \in \Omega}(X_{ij}-A_{ij})^2 + \lambda \Vert (\bm{I}_n - \bm{X}(\bm{X}^T\bm{X})^\dagger \bm{X}^T)\bm{Y} \Vert_F^2 + \gamma \Vert \bm{X} \Vert_{\star} \\
        &= \sum_{(i, j) \in \Omega}(X_{ij}-A_{ij})^2 + \lambda \text{Tr}\big{(}\bm{Y}^T(\bm{I}_n - \bm{X}(\bm{X}^T\bm{X})^\dagger \bm{X}^T)\bm{Y}\big{)} + \gamma \Vert \bm{X} \Vert_{\star}.
    \end{aligned}
\end{equation*} We note that $\bm{\alpha}^\star$ corresponds to the well studied ordinary least squares solution. When $\bm{X}^T\bm{X}$ has full rank, $\bm{\alpha}$ is the unique minimizer of $g$. If $\bm{X}^T\bm{X}$ is rank deficient, $\bm{\alpha}^\star$ corresponds to the minimizer with minimum norm.

Though we have simplified the objective function of \eqref{opt_lrml4:MC_primal}, $f(\bm{X})$ is not a particularly well behaved function. We formalize this statement in Proposition \eqref{prop_lrml4:f_properties}.

\begin{proposition} \label{prop_lrml4:f_properties}
    The function $f(\bm{X})$ is in general neither convex nor concave and is non-smooth.
\end{proposition}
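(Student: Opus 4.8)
The plan is to prove all three assertions by exhibiting explicit, low-dimensional instances, exploiting the fact that convexity, concavity and differentiability are each inherited by restrictions of $f$ to affine lines. The first term of $f$ is a convex quadratic and the nuclear-norm term $\gamma\Vert\bm{X}\Vert_\star$ is convex, so every pathology must originate in the projection-residual term $\lambda\,\text{Tr}(\bm{Y}^T(\bm{I}_n-\bm{X}(\bm{X}^T\bm{X})^\dagger\bm{X}^T)\bm{Y})$, which measures the energy of $\bm{Y}$ orthogonal to the column space of $\bm{X}$ and behaves like a negated Rayleigh quotient. I would therefore choose parameters that isolate this term: take $\Omega=\emptyset$ so the data-fitting term vanishes, and keep $n,m,d$ as small as possible.

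To refute both convexity and concavity at once, I would work with $n=2$, $m=d=1$, so that $\bm{X}=\bm{x}=(x_1,x_2)^T\in\mathbb{R}^2$ and $\bm{Y}=\bm{y}=(1,0)^T$. A direct computation using $P_{\bm{x}}=\bm{x}\bm{x}^T/\Vert\bm{x}\Vert_2^2$ gives, for $\bm{x}\neq\bm{0}$,
\[
f(\bm{x})=\lambda\frac{x_2^2}{x_1^2+x_2^2}+\gamma\Vert\bm{x}\Vert_2 .
\]
Restricting to the line $x_1=1,\ x_2=t$ yields $\phi(t)=\lambda\,t^2/(1+t^2)+\gamma\sqrt{1+t^2}$, which is smooth on this line since its argument never vanishes. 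The key calculation is
\[
\phi''(t)=\lambda\frac{2-6t^2}{(1+t^2)^3}+\gamma\frac{1}{(1+t^2)^{3/2}},
\]
whose first summand is positive at $t=0$ and negative for $t^2>1/3$. Choosing, for instance, $\lambda=1$ and $\gamma=\tfrac{1}{10}$ makes $\phi''(0)>0$ while $\phi''(1)<0$, so $\phi$ is neither convex nor concave on this line. Since a convex (resp.\ concave) $f$ would restrict to a convex (resp.\ concave) $\phi$, this single restriction simultaneously rules out both, proving the first assertion. Equivalently, one may present three collinear matrices witnessing a violation of each midpoint inequality.

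For non-smoothness I would exhibit a point of non-differentiability, and the cleanest candidate is a rank-deficient matrix, where two mechanisms fail simultaneously: the nuclear norm $\gamma\Vert\bm{X}\Vert_\star$ is non-differentiable, and, more strongly, the orthogonal projector $\bm{X}(\bm{X}^T\bm{X})^\dagger\bm{X}^T$ is discontinuous because $(\bm{X}^T\bm{X})^\dagger$ jumps when the rank of $\bm{X}$ drops. In the instance above, $f(\bm{0})=\lambda\Vert\bm{y}\Vert_2^2=\lambda$, whereas approaching the origin along $\bm{x}=(t,0)^T$ gives $f\to 0$ and along $\bm{x}=(0,t)^T$ gives $f\to\lambda$; the directional limits disagree with each other and with $f(\bm{0})$, so $f$ is discontinuous, and hence non-smooth, at $\bm{0}$.

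The main obstacle is bookkeeping rather than conceptual: one must verify that the two convex terms of $f$ do not smooth out or convexify the pathologies of the projection-residual term. This is exactly why I would take $\Omega=\emptyset$ (removing the quadratic term entirely) and choose the ratio $\lambda/\gamma$ large enough that the convex nuclear-norm contribution $\gamma(1+t^2)^{-3/2}$ cannot offset the negative curvature $\lambda(2-6t^2)(1+t^2)^{-3}$ contributed by the Rayleigh-quotient term near $t=1$. With these choices the required inequalities reduce to elementary estimates, so the only care needed is in the explicit evaluation of $\phi''$ and of the one-sided limits at the rank-deficient point.
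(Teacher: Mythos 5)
Your proof is correct, and its overall strategy for the convexity/concavity part mirrors the paper's: take $\Omega=\emptyset$, $n=2$, $m=d=1$, so that $f$ reduces to a negated Rayleigh quotient plus the Euclidean norm, and then restrict to a line. The differences are in the details and they are worth noting. For non-convexity/non-concavity, the paper uses $\bm{y}=(1,1)^T$, restricts to the line $x_2=x_1+1$, and verifies violated chord inequalities at explicitly chosen points, whereas you use $\bm{y}=(1,0)^T$, the line $x_1=1$, and a sign change of $\phi''$; your calculus argument is cleaner to verify (and indeed works even with $\lambda=\gamma=1$, since $\phi''(1)=-\tfrac{\lambda}{2}+\tfrac{\gamma}{2\sqrt{2}}<0$ there, so the small $\gamma$ is not actually needed). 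The more substantive divergence is in the non-smoothness claim: the paper disposes of it in one line by appealing to the non-smoothness of the nuclear norm, which is slightly loose as stated (a sum of a non-smooth function with other non-smooth terms could in principle be smooth), while you prove the stronger statement that $f$ is \emph{discontinuous} at the rank-deficient point $\bm{x}=\bm{0}$, since $f(\bm{0})=\lambda$ but $f\to 0$ along $\bm{x}=(t,0)^T$. This pins the pathology on the jump of the projector $\bm{x}(\bm{x}^T\bm{x})^\dagger\bm{x}^T$ at rank drops and yields non-smoothness rigorously; your only (inconsequential) slip is asserting that both directional limits disagree with $f(\bm{0})$, when the limit along $(0,t)^T$ in fact equals $f(\bm{0})=\lambda$ --- the disagreement between the two directional limits already suffices.
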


\begin{proof}
    To illustrate that $f(\bm{X})$ is in general neither convex nor concave, suppose that $\Omega = \emptyset$, $n = 2$ and $m = d = \lambda = \gamma = 1$. In this setting, we have $\bm{x}, \bm{y} \in \mathbb{R}^{2 \times 1}$. Assuming that $\bm{x} \neq \bm{0}_2$, we can write the objective function as
    \begin{equation*}
    \begin{aligned}
        f(\bm{x}) &= \text{Tr}(\bm{y}^T(\bm{I}_2 - \bm{x}(\bm{x}^T\bm{x})^{-1}\bm{x}^T)\bm{y}) + \Vert \bm{x}\Vert_\star \\
        &= \bm{y}^T\bm{y} - \frac{\bm{y}^T\bm{x}\bm{x}^T\bm{y}}{\bm{x}^T\bm{x}} + \Vert \bm{x}\Vert_2 \\
        &= \bm{y}^T\bm{y} - \frac{(\bm{y}^T\bm{x})^2}{\bm{x}^T\bm{x}} + \sqrt{\bm{x}^T\bm{x}}.
    \end{aligned}
    \end{equation*} For $\bm{x} = \bm{0}_2$, the objective value $f(\bm{0}_2)$ is equal to $\bm{y}^T\bm{y}$. Let $\bm{y} = \bm{1}_2$ and consider the line in $\bm{R}^2$ defined by $\mathcal{X} = \{\bm{x} \in \mathbb{R}^2: x_2 = x_1 + 1\}$. The restriction of $f(\bm{x})$ to the line defined by $\mathcal{X}$ is a univariate function given by \[
    f_\mathcal{X}(t) = 2 - \frac{(2t+1)^2}{2t^2+2t+1} + \sqrt{2t^2+2t+1},
    \] where $t \in \mathbb{R}$ is a dummy variable. Observe that we have $f_\mathcal{X}(-1) = f_\mathcal{X}(0) = 2, f_\mathcal{X}(-0.5) = 2 + \frac{\sqrt{2}}{2}$ and $f_\mathcal{X}(-4)=f_\mathcal{X}(3)=5.04$. Thus, the point $(-0.5, f_\mathcal{X}(0.5))$ lies above the chord connecting $(-1, f_\mathcal{X}(-1))$ and $(0, f_\mathcal{X}(0))$, so $f_\mathcal{X}(t)$ is not a convex function. Moreover, the point $(-1, f_\mathcal{X}(-1))$ lies below the chord connecting $(-4, f_\mathcal{X}(-4))$ and $(-0.5, f_\mathcal{X}(-0.5))$, so $f_\mathcal{X}(t)$ is not a concave function. Since a function is convex (respectively concave) if and only if its restriction to every line is convex (respectively concave), we have established that $f(\bm{X})$ is neither convex nor concave since $f_\mathcal{X}(t)$ is neither convex nor concave. To conclude the proof of Proposition \ref{prop_lrml4:f_properties}, note that the non-smooth property of $f(\bm{X})$ follows immediately from the non-smooth property of the nuclear norm function.
\end{proof}

Although the above closed form partial minimization in $\bm{\alpha}$ eliminates $m \times d$ variables form \eqref{opt_lrml4:MC_primal}, this comes at the expense of introducing a $m \times m$ matrix pseudo-inverse term into the objective function which can be computationally expensive to evaluate. Efficient evaluation of an objective function is crucial in many optimization problems to quickly measure solution quality. A plethora of modern optimization techniques require iterative objective function evaluations. As a result, the computational cost of evaluating an objective function can quickly become the bottleneck of an algorithm's complexity. Directly evaluating $f(\bm{X})$ naively requires $O(\vert \Omega \vert)$ operations for the first term, $O(m^2n+m^3+n^2d)$ for the second term (forming the matrix $\bm{X}^T\bm{X}$ is $O(m^2n)$, taking the pseudo-inverse is $O(m^3)$, computing the products involving $\bm{Y}$ is $O(n^2d)$) and requires $O(mn\min(m, n))$ for the third term (the nuclear norm can be evaluated by computing a singular value decomposition of $\bm{X}$). We observe that computing the second term of $f(\bm{X})$ involving the pseudo-inverse dominates the complexity calculation. Indeed, the overall complexity of evaluating $f(\bm{X})$ naively is $O(m^2n+m^3+n^2d)$.

Fortunately, it is possible to make evaluations of $f(\bm{X})$ without explicitly forming the product $\bm{X}^T\bm{X}$ or taking a pseudo-inverse. Proposition \ref{prop_lrml4:pseudo} illustrates that it suffices (in terms of computational complexity) to take a singular value decomposition of $\bm{X}$. Moreover, a large class of optimization algorithms require only function evaluations for feasible solutions. If we consider only those values of $\bm{X}$ that are feasible to \eqref{opt_lrml4:MC_primal}, it is sufficient (in terms of computational complexity) to take a rank $k$ truncated singular value decomposition of $\bm{X}$ to make functions evaluations of $f(\bm{X})$.
\begin{proposition} \label{prop_lrml4:pseudo}
    The function $f(\bm{X})$ can equivalently be written as \[ f(\bm{X})= \sum_{(i, j) \in \Omega}(X_{ij}-A_{ij})^2 + \lambda \text{Tr}\big{(}\bm{Y}^T(\bm{I}_n - \bm{U}\bm{U}^{T})\bm{Y}\big{)} + \gamma \Vert \bm{X} \Vert_{\star},\] where $\bm{X} = \bm{U}\bm{\Sigma}\bm{V}^T$ is a singular value decomposition of $\bm{X}$ where we let $r = \text{rank}(\bm{X})$ and we have $\bm{U} \in \mathbb{R}^{n \times r}, \bm{\Sigma} \in \mathbb{R}^{r \times r}, \bm{V} \in \mathbb{R}^{m \times r}$.
\end{proposition}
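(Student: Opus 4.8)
The plan is to observe that the expression claimed in Proposition \ref{prop_lrml4:pseudo} differs from the partially-minimized objective $f(\bm{X})$ derived in Section \ref{ssec_lrml4:partial_min} only in its middle trace term. The first and third terms are literally identical, so it suffices to establish the single matrix identity
\[
\bm{X}(\bm{X}^T\bm{X})^\dagger \bm{X}^T = \bm{U}\bm{U}^T.
\]
Once this identity is in hand, substituting it into the expression for $f(\bm{X})$ obtained in Section \ref{ssec_lrml4:partial_min} yields the claimed formula immediately, since the trace term becomes $\lambda \, \text{Tr}\big{(}\bm{Y}^T(\bm{I}_n - \bm{U}\bm{U}^T)\bm{Y}\big{)}$.

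To prove the identity, I would substitute the thin singular value decomposition $\bm{X} = \bm{U}\bm{\Sigma}\bm{V}^T$ and exploit the orthonormality relations $\bm{U}^T\bm{U} = \bm{I}_r$ and $\bm{V}^T\bm{V} = \bm{I}_r$ together with the invertibility of $\bm{\Sigma}$, which holds because $\bm{\Sigma}$ collects exactly the $r = \text{rank}(\bm{X})$ nonzero singular values. First I would compute
\[
\bm{X}^T\bm{X} = \bm{V}\bm{\Sigma}\bm{U}^T\bm{U}\bm{\Sigma}\bm{V}^T = \bm{V}\bm{\Sigma}^2\bm{V}^T,
\]
which is itself a valid thin singular value decomposition of the symmetric positive semidefinite matrix $\bm{X}^T\bm{X}$, with positive singular values $\Sigma_{ii}^2$. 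Applying the definition of the pseudo-inverse recalled in Section \ref{ssec_lrml4:partial_min} then gives $(\bm{X}^T\bm{X})^\dagger = \bm{V}\bm{\Sigma}^{-2}\bm{V}^T$.

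The final step is a direct multiplication. Substituting the pseudo-inverse and again using $\bm{V}^T\bm{V} = \bm{I}_r$, I would compute
\[
\bm{X}(\bm{X}^T\bm{X})^\dagger \bm{X}^T = \bm{U}\bm{\Sigma}\bm{V}^T \bm{V}\bm{\Sigma}^{-2}\bm{V}^T \bm{V}\bm{\Sigma}\bm{U}^T = \bm{U}\bm{\Sigma}\bm{\Sigma}^{-2}\bm{\Sigma}\bm{U}^T = \bm{U}\bm{U}^T,
\]
where the last equality uses that diagonal matrices commute, so $\bm{\Sigma}\bm{\Sigma}^{-2}\bm{\Sigma} = \bm{I}_r$. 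This establishes the identity and hence the proposition.

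I do not anticipate a genuine obstacle, as the argument is elementary linear algebra; the only point requiring care is the rank-deficient case. Specifically, one must verify that $(\bm{X}^T\bm{X})^\dagger = \bm{V}\bm{\Sigma}^{-2}\bm{V}^T$ rather than a full inverse, which is precisely why I would work with the thin SVD so that $\bm{\Sigma}$ is square and invertible and the pseudo-inverse simply inverts the retained singular values. It is worth noting that $\bm{U}\bm{U}^T$ is exactly the orthogonal projector onto the column space of $\bm{X}$, so the identity is geometrically transparent: the middle term of $f(\bm{X})$ measures the residual of projecting $\bm{Y}$ off the column space of $\bm{X}$, and the claimed formula merely re-expresses this projector in terms of the left singular vectors $\bm{U}$.
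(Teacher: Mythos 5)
Your proposal is correct and follows essentially the same route as the paper's proof: both use the thin SVD to compute $(\bm{X}^T\bm{X})^\dagger = \bm{V}\bm{\Sigma}^{-2}\bm{V}^T$ and then cancel via $\bm{U}^T\bm{U} = \bm{V}^T\bm{V} = \bm{I}_r$. The only cosmetic difference is that you establish the matrix identity $\bm{X}(\bm{X}^T\bm{X})^\dagger\bm{X}^T = \bm{U}\bm{U}^T$ directly, while the paper carries out the identical computation inside the trace sandwiched by $\bm{Y}$.
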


\begin{proof}
    To establish the result, it suffices to show that \[\text{Tr}\big{(}\bm{Y}^T\bm{X}(\bm{X}^T\bm{X})^\dagger \bm{X}^T\bm{Y}\big{)} = \text{Tr}\big{(}\bm{Y}^T \bm{U}\bm{U}^{T}\bm{Y}\big{)}.\] Let $\bm{X} = \bm{U}\bm{\Sigma}\bm{V}^T$ be a singular value decomposition of $\bm{X}$ where $r = \text{rank}(\bm{X})$ and $\bm{U} \in \mathbb{R}^{n \times r}, \bm{\Sigma} \in \mathbb{R}^{r \times r}, \bm{V} \in \mathbb{R}^{m \times r}$. Observe that
    \begin{equation*}
        \begin{aligned}
            \text{Tr}\big{(}\bm{Y}^T\bm{X}(\bm{X}^T\bm{X})^\dagger \bm{X}^T\bm{Y}\big{)} &= \text{Tr}\big{(}\bm{Y}^T\bm{U}\bm{\Sigma}\bm{V}^T(\bm{V}\bm{\Sigma}\bm{U}^T \bm{U}\bm{\Sigma}\bm{V}^T)^\dagger \bm{V}\bm{\Sigma}\bm{U}^T\bm{Y}\big{)} \\
            &= \text{Tr}\big{(}\bm{Y}^T\bm{U}\bm{\Sigma}\bm{V}^T(\bm{V}\bm{\Sigma}^2\bm{V}^T)^\dagger \bm{V}\bm{\Sigma}\bm{U}^T\bm{Y}\big{)} \\
            &= \text{Tr}\big{(}\bm{Y}^T\bm{U}\bm{\Sigma}\bm{V}^T\bm{V}\bm{\Sigma}^{-2}\bm{V}^T \bm{V}\bm{\Sigma}\bm{U}^T\bm{Y}\big{)} \\
            &= \text{Tr}\big{(}\bm{Y}^T\bm{U}\bm{\Sigma}\bm{\Sigma}^{-2}\bm{\Sigma}\bm{U}^T\bm{Y}\big{)} \\
            &= \text{Tr}\big{(}\bm{Y}^T\bm{U}\bm{U}^T\bm{Y}\big{)},
        \end{aligned}
    \end{equation*} where we have repeatedly invoked the property that $\bm{U}^T\bm{U} = \bm{V}^T\bm{V} = \bm{I}_r$.
\end{proof}
In light of Proposition \ref{prop_lrml4:pseudo}, evaluating $f(\bm{X})$ for feasible solutions still requires $O(\vert \Omega \vert)$ operations for the first term, but the second term can be evaluated using $O(kn(m+d))$ operations (performing a truncated singular value decomposition is $O(knm)$ and computing the products involving $\bm{Y}$ is $O(knd)$) and the third term can be evaluated using $O(knm)$ operations (by performing a truncated singular value decomposition) for an overall complexity of $O(kn(m+d))$. This is significantly less expensive than the $O(m^2n+m^3+n^2d)$ complexity of naive direct evaluation of $f(\bm{X})$ introduced previously.

\section{An Exact Mixed-Projection Formulation} \label{sec_lrml4:mixed_proj}

In this section, we reformulate \eqref{opt_lrml4:MC_primal} as a mixed-projection optimization problem and further reduce the dimension of the resulting problem in a commonly studied manner by parameterizing $\bm{X}$ as the matrix product of two low dimensional matrices. Thereafter, we illustrate how to employ the matrix generalization of the perspective relaxation \citep{gunluk2012perspective, bertsimas2020mixed, bertsimas2021perspective, bertsimas2023slr} to construct a convex relaxation of \eqref{opt_lrml4:MC_primal}.

We first note that given the result of Section \ref{ssec_lrml4:partial_min}, we can rewrite \eqref{opt_lrml4:MC_primal} as an optimization problem only over $\bm{X}$ as follows:

\begin{equation}
    \begin{aligned}
        &\min_{\bm{X} \in \mathbb{R}^{n \times m}} & & \sum_{(i, j) \in \Omega}(X_{ij}-A_{ij})^2 + \lambda \text{Tr}\big{(}\bm{Y}^T(\bm{I}_n - \bm{X}(\bm{X}^T\bm{X})^\dagger \bm{X}^T)\bm{Y}\big{)} + \gamma \Vert \bm{X} \Vert_{\star} \\
        &\text{s.t.} & & \text{rank}(\bm{X}) \leq k.
    \end{aligned} \label{opt_lrml4:MC_primal_X}
\end{equation} 
Observe that the matrix $\bm{X}(\bm{X}^T\bm{X})^\dagger \bm{X}^T$ is the linear transformation that projects vectors onto the subspace spanned by the columns of the matrix $\bm{X}$. Drawing on ideas presented in \cite{bertsimas2020mixed, bertsimas2021perspective, bertsimas2023slr}, we introduce an orthogonal projection matrix $\bm{P} \in \mathcal{P}_k$ to model the column space of $\bm{X}$ where $\mathcal{P}_\eta = \{\bm{P} \in \mathcal{S}^n: \bm{P}^2=\bm{P}, \text{tr}(\bm{P}) \leq \eta\}$ for $\eta \geq 0$. We can express the desired relationship between $\bm{P}$ and $\bm{X}$ as $\bm{X} = \bm{P}\bm{X}$ since projecting a matrix onto its own column space leaves the matrix unchanged. This gives the following reformulation of \eqref{opt_lrml4:MC_primal}:

\begin{equation}
    \begin{aligned}
        &\min_{\bm{X} \in \mathbb{R}^{n \times m}, \bm{P} \in \mathbb{R}^{n \times n}} & & \sum_{(i, j) \in \Omega}(X_{ij}-A_{ij})^2 + \lambda \text{Tr}\big{(}\bm{Y}^T(\bm{I}_n - \bm{P})\bm{Y}\big{)} + \gamma \Vert \bm{X} \Vert_{\star} \\
        &\text{s.t.} & & (\bm{I}_n - \bm{P})\bm{X} = \bm{0}_{n \times m}, \, \bm{P} \in \mathcal{P}_{\min(k, \text{rank}(\bm{X}))}.
    \end{aligned} \label{opt_lrml4:MC_primal_XP}
\end{equation} 
Observe that the matrix pseudo-inverse term has been eliminated from the objective function, however we have introduced the bilinear constraint $\bm{X} = \bm{P}\bm{X}$ which is non convex in the optimization variables as well as the non convex constraint $\bm{P} \in \mathcal{P}_{\min(k, \text{rank}(\bm{X}))}$. We now have the following result:

\begin{proposition} \label{prop_lrml4:XP_reform}
Problem \eqref{opt_lrml4:MC_primal_XP} is a valid reformulation of \eqref{opt_lrml4:MC_primal_X}.
\end{proposition}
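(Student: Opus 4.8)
The plan is to prove equivalence by establishing two inequalities between the optimal values of \eqref{opt_lrml4:MC_primal_X} and \eqref{opt_lrml4:MC_primal_XP}, exhibiting in each direction a feasible point whose objective matches a feasible point of the other problem. Throughout, write $\bm{Q}(\bm{X}) = \bm{X}(\bm{X}^T\bm{X})^\dagger\bm{X}^T$ for the orthogonal projector onto the column space of $\bm{X}$, and recall from the singular value decomposition $\bm{X} = \bm{U}\bm{\Sigma}\bm{V}^T$ (with $r = \text{rank}(\bm{X})$ and $\bm{U} \in \mathbb{R}^{n \times r}$) that $\bm{Q}(\bm{X}) = \bm{U}\bm{U}^T$.

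For the first direction, I would take any $\bm{X}$ feasible for \eqref{opt_lrml4:MC_primal_X}, so that $r \le k$, and set $\bm{P} = \bm{U}\bm{U}^T$. A short computation using $\bm{U}^T\bm{U} = \bm{I}_r$ shows $\bm{P} \in \mathcal{S}^n$, $\bm{P}^2 = \bm{P}$, $\text{tr}(\bm{P}) = \text{tr}(\bm{U}^T\bm{U}) = r = \min(k, \text{rank}(\bm{X}))$, and $(\bm{I}_n - \bm{P})\bm{X} = \bm{X} - \bm{U}\bm{U}^T\bm{U}\bm{\Sigma}\bm{V}^T = \bm{0}_{n\times m}$, so that $(\bm{X}, \bm{P})$ is feasible for \eqref{opt_lrml4:MC_primal_XP}. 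Since the first and third objective terms are syntactically identical in both problems, and Proposition \ref{prop_lrml4:pseudo} gives $\text{Tr}(\bm{Y}^T(\bm{I}_n - \bm{P})\bm{Y}) = \text{Tr}(\bm{Y}^T(\bm{I}_n - \bm{Q}(\bm{X}))\bm{Y})$, which is exactly the second term of \eqref{opt_lrml4:MC_primal_X}, the two objective values agree; minimizing over $\bm{X}$ yields the inequality that the optimal value of \eqref{opt_lrml4:MC_primal_XP} does not exceed that of \eqref{opt_lrml4:MC_primal_X}.

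For the reverse direction, I would take any $(\bm{X}, \bm{P})$ feasible for \eqref{opt_lrml4:MC_primal_XP}. The constraint $(\bm{I}_n - \bm{P})\bm{X} = \bm{0}_{n\times m}$ means $\bm{X} = \bm{P}\bm{X}$, so the column space of $\bm{X}$ lies in the range of $\bm{P}$, giving $\text{rank}(\bm{X}) \le \text{rank}(\bm{P}) = \text{tr}(\bm{P}) \le \min(k, \text{rank}(\bm{X})) \le k$; hence $\bm{X}$ is feasible for \eqref{opt_lrml4:MC_primal_X}. The crux is to argue that the objectives still match, since \eqref{opt_lrml4:MC_primal_XP} evaluates its second term at the free variable $\bm{P}$ rather than at $\bm{Q}(\bm{X})$. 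Here I would exploit that the trace bound is $\min(k, \text{rank}(\bm{X}))$ and not merely $k$: the displayed chain forces $\text{tr}(\bm{P}) = \text{rank}(\bm{X}) = r$. Because $\text{range}(\bm{X}) \subseteq \text{range}(\bm{P})$ and both $\bm{P}$ and $\bm{Q}(\bm{X})$ are orthogonal projectors, one checks $\bm{P}\bm{Q}(\bm{X}) = \bm{Q}(\bm{X}) = \bm{Q}(\bm{X})\bm{P}$ from the range inclusion and symmetry, whence $\bm{P} - \bm{Q}(\bm{X})$ is itself an orthogonal projector with $\text{tr}(\bm{P} - \bm{Q}(\bm{X})) = r - r = 0$, forcing $\bm{P} = \bm{Q}(\bm{X})$. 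Consequently the second objective terms coincide, $\bm{X}$ attains the same value in \eqref{opt_lrml4:MC_primal_X}, and minimizing over $(\bm{X}, \bm{P})$ gives the matching inequality and hence equality of optimal values.

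I expect the genuinely delicate point to be this last forcing argument. Without the $\min(k, \text{rank}(\bm{X}))$ refinement, a projector whose rank lies strictly between $\text{rank}(\bm{X})$ and $k$ could capture additional energy of $\bm{Y}$ and strictly lower the second term, so the two problems would fail to be equivalent; the trace-counting identity $\text{tr}(\bm{P} - \bm{Q}(\bm{X})) = 0$ together with the positive semidefiniteness of the projector $\bm{P} - \bm{Q}(\bm{X})$ is precisely what rules this out. I would therefore take care to justify $\bm{P}\bm{Q}(\bm{X}) = \bm{Q}(\bm{X})$ rigorously from the range inclusion before invoking the idempotence computation.
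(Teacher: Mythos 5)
Your proposal is correct, and its overall architecture matches the paper's: both directions are handled by exhibiting feasible points with equal objectives, using the SVD-based construction $\bm{P} = \bm{U}\bm{U}^T$ (together with Proposition \ref{prop_lrml4:pseudo}) in one direction, and forcing the free projector $\bm{P}$ to coincide with $\bm{X}(\bm{X}^T\bm{X})^\dagger\bm{X}^T$ in the other. Where you differ is in the mechanism for that forcing step. The paper argues that $\bm{P}$ and $\bm{X}(\bm{X}^T\bm{X})^\dagger\bm{X}^T$ are both orthogonal projections onto the column space of $\bm{X}$ and then invokes (and proves inline, via an orthogonal-basis argument) the uniqueness of the orthogonal projection onto a given subspace. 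You instead use a purely algebraic trace-counting argument: from the range inclusion you get $\bm{P}\bm{Q}(\bm{X}) = \bm{Q}(\bm{X}) = \bm{Q}(\bm{X})\bm{P}$, so $\bm{P} - \bm{Q}(\bm{X})$ is itself an orthogonal projector, and the chain $\mathrm{rank}(\bm{X}) \le \mathrm{tr}(\bm{P}) \le \min(k, \mathrm{rank}(\bm{X}))$ forces its trace to vanish, hence $\bm{P} = \bm{Q}(\bm{X})$. Both arguments hinge on the same observation — that the $\min(k,\mathrm{rank}(\bm{X}))$ trace bound pins $\mathrm{rank}(\bm{P})$ down to $\mathrm{rank}(\bm{X})$ — but your route dispenses with the uniqueness lemma entirely, and you make explicit the rank chain that the paper only asserts ($\text{rank}(\bm{\Bar{P}}) = \text{rank}(\bm{\Bar{X}})$ is stated there without justification). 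Your version is slightly tighter on this point; the paper's version isolates a reusable geometric fact (uniqueness of orthogonal projections) at the cost of a longer detour.
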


\begin{proof}
    We show that given a feasible solution to \eqref{opt_lrml4:MC_primal_XP}, we can construct a feasible solution to \eqref{opt_lrml4:MC_primal_X} that achieves the same objective value and vice versa.

    Consider an arbitrary feasible solution $(\bm{\Bar{X}}, \bm{\Bar{P}})$ to \eqref{opt_lrml4:MC_primal_XP}. Since $\bm{\Bar{P}}\bm{\Bar{X}} = \bm{\Bar{X}}$ and $\bm{\Bar{P}} \in \mathcal{P}_{\min(k, \text{rank}(\bm{\Bar{X}}))}$, we have $\text{rank}(\bm{\Bar{X}}) \leq k$. We claim that $\bm{\Bar{X}}$ achieves the same objective value in \eqref{opt_lrml4:MC_primal_X} as $(\bm{\Bar{X}}, \bm{\Bar{P}})$ achieves in \eqref{opt_lrml4:MC_primal_XP}. To show this, it suffices to illustrate that for all $(\bm{\Bar{X}}, \bm{\Bar{P}})$ feasible to \eqref{opt_lrml4:MC_primal_XP} we have $H(\bm{\Bar{X}}) \coloneqq \bm{\Bar{X}} (\bm{\Bar{X}}^T \bm{\Bar{X}})^\dagger \bm{\Bar{X}}^T = \bm{\Bar{P}}$. The matrix $\bm{\Bar{P}}$ is an orthogonal projection matrix since it is symmetric and satisfies $\bm{\Bar{P}}^2=\bm{\Bar{P}}$. Moreover, since $\text{rank}(\bm{\Bar{P}})=\text{rank}(\bm{\Bar{X}})$ and $\bm{\Bar{P}}\bm{\Bar{X}} = \bm{\Bar{X}}$ we know that $\bm{\Bar{P}}$ is an orthogonal projection onto the subspace spanned by the columns of $\bm{\Bar{X}}$. Similarly, it can easily be verified that $H(\bm{\Bar{X}})$ is symmetric and satisfies $H(\bm{\Bar{X}})^2 = H(\bm{\Bar{X}})$, $\text{rank}(H(\bm{\Bar{X}})) = \text{rank}(\bm{\Bar{X}})$ and $H(\bm{\Bar{X}})\bm{\Bar{X}} = \bm{\Bar{X}}$. Thus, $H(\bm{\Bar{X}})$ is also an orthogonal projection matrix onto the subspace spanned by the columns of $\bm{\Bar{X}}$. To conclude, we invoke the property that given a subspace $\mathcal{V} \subset \mathbb{R}^n$ the orthogonal projection onto $\mathcal{V}$ is uniquely defined. To see this, suppose $\bm{P}_1$ and $\bm{P}_2$ are two orthogonal projections onto $\mathcal{V}$. Let $l = \text{dim}(\mathcal{V})$. Let $\{\bm{e}_i\}_{i=1}^l$ be an orthogonal basis for $\mathcal{V}$ and let $\{\bm{e}_i\}_{l+1}^n$ be an orthogonal basis for $\mathcal{V}^\bot$. Since $\bm{P}_1$ is an orthogonal projection onto $\mathcal{V}$, we have $\bm{P}_1\bm{e}_i=\bm{e}_i$ for all $1 \leq i \leq l$ and $\bm{P}_1\bm{e}_i=\bm{0}_n$ for all $l+1 \leq i \leq n$. However, the same must hold for $\bm{P}_2$ which implies that $\bm{P}_1 = \bm{P}_2$.
    
    Consider an arbitrary feasible solution $\bm{\Bar{X}}$ to \eqref{opt_lrml4:MC_primal_X}. Let $r = \text{rank}(\bm{\Bar{X}})$ and $\bm{\Bar{X}} = \bm{\Bar{U}}\bm{\Bar{\Sigma}}\bm{\Bar{V}}^T$ be a singular value decomposition of $\bm{\Bar{X}}$ where we have $\bm{\Bar{U}} \in \mathbb{R}^{n \times r}, \bm{\Bar{\Sigma}} \in \mathbb{R}^{r \times r}, \bm{\Bar{V}} \in \mathbb{R}^{m \times r}$. Define $\bm{\Bar{P}} = \bm{\Bar{U}} \bm{\Bar{U}}^T$. By construction, we have $\bm{\Bar{P}} \in \mathcal{P}_{\min(k, \text{rank}(\bm{\Bar{X}}))}$ since $r \leq k$. Moreover, it is easy to verify that \[\bm{\Bar{P}}\bm{\Bar{X}} = \bm{\Bar{U}} \bm{\Bar{U}}^T \bm{\Bar{U}}\bm{\Bar{\Sigma}}\bm{\Bar{V}}^T = \bm{\Bar{U}}\bm{\Bar{\Sigma}}\bm{\Bar{V}}^T = \bm{\Bar{X}},\] where we have used the property $\bm{\Bar{U}}^T \bm{\Bar{U}} = \bm{I}_r$.
    Finally, Proposition \ref{prop_lrml4:pseudo} immediately implies that $(\bm{\Bar{X}}, \bm{\Bar{P}})$ achieves the same objective in \eqref{opt_lrml4:MC_primal_XP} as $\bm{\Bar{X}}$ achieves in \eqref{opt_lrml4:MC_primal_X}. This completes the proof.
\end{proof}
Optimizing explicitly over the space of $n \times m$ matrices can rapidly become prohibitively costly in terms of runtime and memory requirements. Accordingly, we adopt the common approach of factorizing $\bm{X} \in \mathbb{R}^{n \times m}$ as $\bm{U}\bm{V}^T$ for $\bm{U} \in \mathbb{R}^{n \times k}, \bm{V} \in \mathbb{R}^{m \times k}$. This leads to the following formulation:

\begin{equation}
    \begin{aligned}
        &\min_{\substack{\bm{U} \in \mathbb{R}^{n \times k} \\ \bm{V} \in \mathbb{R}^{m \times k}, \\ \bm{P} \in \mathbb{R}^{n \times n}}} & & \sum_{(i, j) \in \Omega}((\bm{U}\bm{V}^T)_{ij}-A_{ij})^2 + \lambda \text{Tr}\big{(}\bm{Y}^T(\bm{I}_n - \bm{P})\bm{Y}\big{)} + \frac{\gamma}{2}( \Vert \bm{U} \Vert_F^2 + \Vert \bm{V} \Vert_F^2) \\
        &\text{s.t.} & & (\bm{I}_n-\bm{P})\bm{U} = \bm{0}_{n \times k}, \, \bm{P} \in \mathcal{P}_{\min(k, \text{rank}(\bm{U}\bm{V}^T))}.
    \end{aligned} \label{opt_lrml4:MC_primal_UVP}
\end{equation}
Notice that we have replaced $n \times m$ optimization variables with $k \times (n + m)$ optimization variables, an often significant dimension reduction in practice. Attentive readers may object that though this is true, we have introduced $n^2$ decision variables through the introduction of the projection matrix variable $\bm{P}$ which nullifies any savings introduced through the factorization of $\bm{X}$. Note, however, that it is possible to factor any feasible projection matrix as $\bm{P} = \bm{M}\bm{M}^T$ for some $\bm{M} \in \mathbb{R}^{n \times k}$. In Section \ref{sec_lrml4:admm}, we leverage this fact so that the presence of the projection matrix incurs a cost of $n \times k$ additional variables rather than $n^2$ variables. We have the following result:

\begin{proposition} \label{prop_lrml4:UVP_reform}
Problem \eqref{opt_lrml4:MC_primal_UVP} is a valid reformulation of \eqref{opt_lrml4:MC_primal_XP}.
\end{proposition}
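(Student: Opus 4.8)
The plan is to show that \eqref{opt_lrml4:MC_primal_UVP} and \eqref{opt_lrml4:MC_primal_XP} attain the same optimal value by mapping, in each direction, an arbitrary feasible point to a feasible point of the other problem with objective value no larger. The two formulations differ in exactly two places: $\bm{X}$ is replaced by the product $\bm{U}\bm{V}^T$, and the penalty $\gamma \Vert \bm{X} \Vert_{\star}$ is replaced by $\tfrac{\gamma}{2}(\Vert \bm{U} \Vert_F^2 + \Vert \bm{V} \Vert_F^2)$. The linchpin is the variational characterization of the nuclear norm: for $k \geq \text{rank}(\bm{X})$ one has
\[
\Vert \bm{X} \Vert_{\star} = \min_{\substack{\bm{U} \in \mathbb{R}^{n \times k} \\ \bm{V} \in \mathbb{R}^{m \times k}}} \left\{ \tfrac{1}{2}\left(\Vert \bm{U} \Vert_F^2 + \Vert \bm{V} \Vert_F^2\right) : \bm{U}\bm{V}^T = \bm{X} \right\},
\]
where the bound $\Vert \bm{U}\bm{V}^T \Vert_{\star} \leq \tfrac{1}{2}(\Vert \bm{U} \Vert_F^2 + \Vert \bm{V} \Vert_F^2)$ holds for every factorization and equality is attained by the symmetric square-root factorization described below.

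For the forward direction I would take any feasible $(\bm{\bar{U}}, \bm{\bar{V}}, \bm{\bar{P}})$ for \eqref{opt_lrml4:MC_primal_UVP} and set $\bm{\bar{X}} = \bm{\bar{U}}\bm{\bar{V}}^T$. The first two objective terms agree verbatim (the data-fit term because $X_{ij} = (\bm{\bar{U}}\bm{\bar{V}}^T)_{ij}$, the prediction term because $\bm{\bar{P}}$ is shared), while the penalty satisfies $\gamma\Vert \bm{\bar{X}} \Vert_{\star} \leq \tfrac{\gamma}{2}(\Vert \bm{\bar{U}} \Vert_F^2 + \Vert \bm{\bar{V}} \Vert_F^2)$ by the inequality above, so the objective of \eqref{opt_lrml4:MC_primal_XP} is no larger. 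Feasibility transfers because $(\bm{I}_n - \bm{\bar{P}})\bm{\bar{U}} = \bm{0}_{n \times k}$ immediately gives $(\bm{I}_n - \bm{\bar{P}})\bm{\bar{X}} = (\bm{I}_n - \bm{\bar{P}})\bm{\bar{U}}\bm{\bar{V}}^T = \bm{0}_{n \times m}$, and the constraint $\bm{\bar{P}} \in \mathcal{P}_{\min(k, \text{rank}(\bm{\bar{U}}\bm{\bar{V}}^T))}$ is identical to $\bm{\bar{P}} \in \mathcal{P}_{\min(k, \text{rank}(\bm{\bar{X}}))}$. This shows the optimal value of \eqref{opt_lrml4:MC_primal_XP} is at most that of \eqref{opt_lrml4:MC_primal_UVP}.

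For the reverse direction I would take any feasible $(\bm{\bar{X}}, \bm{\bar{P}})$ for \eqref{opt_lrml4:MC_primal_XP}, write a thin SVD $\bm{\bar{X}} = \bm{\bar{U}}_\Sigma \bm{\bar{\Sigma}} \bm{\bar{V}}_\Sigma^T$ with $r = \text{rank}(\bm{\bar{X}}) \leq k$, and set $\bm{U} = \bm{\bar{U}}_\Sigma \bm{\bar{\Sigma}}^{1/2}$, $\bm{V} = \bm{\bar{V}}_\Sigma \bm{\bar{\Sigma}}^{1/2}$, each padded with $k-r$ zero columns so that $\bm{U} \in \mathbb{R}^{n \times k}$ and $\bm{V} \in \mathbb{R}^{m \times k}$. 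The padding changes neither the product nor the Frobenius norms, so $\bm{U}\bm{V}^T = \bm{\bar{X}}$ and $\tfrac{1}{2}(\Vert \bm{U} \Vert_F^2 + \Vert \bm{V} \Vert_F^2) = \text{Tr}(\bm{\bar{\Sigma}}) = \Vert \bm{\bar{X}} \Vert_{\star}$, making all three objective terms agree exactly. Retaining $\bm{\bar{P}}$, I would verify $(\bm{I}_n - \bm{\bar{P}})\bm{U} = \bm{0}_{n \times k}$ by invoking the argument inside the proof of Proposition \ref{prop_lrml4:XP_reform}, which forces $\bm{\bar{P}}$ to be the unique orthogonal projection onto the column space of $\bm{\bar{X}}$, i.e. $\bm{\bar{P}} = \bm{\bar{U}}_\Sigma \bm{\bar{U}}_\Sigma^T$; since the columns of $\bm{U}$ lie in the range of $\bm{\bar{U}}_\Sigma$, we get $\bm{\bar{P}}\bm{U} = \bm{U}$. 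This gives the matching inequality and hence equality of optimal values.

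The main obstacle is the nuclear-norm/Frobenius equivalence in both of its guises. For the forward direction I need the inequality $\Vert \bm{U}\bm{V}^T \Vert_{\star} \leq \tfrac{1}{2}(\Vert \bm{U} \Vert_F^2 + \Vert \bm{V} \Vert_F^2)$, which I would establish by writing the nuclear norm in its dual form $\max_{\Vert \bm{Z} \Vert_\sigma \leq 1} \langle \bm{U}\bm{V}^T, \bm{Z} \rangle$, bounding $\langle \bm{U}\bm{V}^T, \bm{Z} \rangle = \langle \bm{U}, \bm{Z}\bm{V} \rangle \leq \Vert \bm{U} \Vert_F \Vert \bm{V} \Vert_F$ via Cauchy--Schwarz and $\Vert \bm{Z}\bm{V} \Vert_F \leq \Vert \bm{Z} \Vert_\sigma \Vert \bm{V} \Vert_F$, and finishing with AM--GM; for the reverse direction I need the attainment of equality, which the $\bm{\bar{\Sigma}}^{1/2}$ split supplies. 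A secondary subtlety is that the projection constraint in \eqref{opt_lrml4:MC_primal_UVP} is stated on $\bm{U}$ rather than on $\bm{X} = \bm{U}\bm{V}^T$; the forward implication is trivial, whereas the reverse relies precisely on the uniqueness-of-projection fact carried over from Proposition \ref{prop_lrml4:XP_reform}, which is what guarantees the retained $\bm{\bar{P}}$ annihilates $\bm{U}$ and not merely $\bm{\bar{X}}$.
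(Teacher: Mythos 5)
Your proposal is correct and follows essentially the same route as the paper's proof: both directions map feasible points across via $\bm{\bar{X}} = \bm{\bar{U}}\bm{\bar{V}}^T$ and the square-root SVD split $\bm{U} = \bm{L}\bm{\Sigma}^{1/2}$, $\bm{V} = \bm{R}\bm{\Sigma}^{1/2}$, with the variational characterization of the nuclear norm doing the work on the objective. The only cosmetic differences are that you prove the inequality $\Vert \bm{U}\bm{V}^T \Vert_\star \leq \tfrac{1}{2}(\Vert \bm{U}\Vert_F^2 + \Vert \bm{V}\Vert_F^2)$ from scratch via duality and AM--GM where the paper cites the lemma of Mazumder et al., and that you justify $(\bm{I}_n - \bm{\bar{P}})\bm{U} = \bm{0}$ through the uniqueness-of-projection argument of Proposition \ref{prop_lrml4:XP_reform} rather than the paper's direct observation that $(\bm{I}_n - \bm{\bar{P}})$ annihilates the column space of $\bm{\bar{X}}$, which contains the columns of $\bm{L}$.
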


\begin{proof}
We show that given a feasible solution to \eqref{opt_lrml4:MC_primal_UVP}, we can construct a feasible solution to \eqref{opt_lrml4:MC_primal_XP} that achieves the same or lesser objective value and vice versa.

Consider an arbitrary feasible solution $(\bm{\Bar{U}}, \bm{\Bar{V}}, \bm{\Bar{P}})$ to \eqref{opt_lrml4:MC_primal_UVP}. Let $\bm{\Bar{X}} = \bm{\Bar{U}}\bm{\Bar{V}}^T$. We will show that $(\bm{\Bar{X}}, \bm{\Bar{P}})$ is feasible to \eqref{opt_lrml4:MC_primal_XP} and achieves the same or lesser objective as $(\bm{\Bar{U}}, \bm{\Bar{V}}, \bm{\Bar{P}})$ does in \eqref{opt_lrml4:MC_primal_UVP}. Feasibility of \eqref{opt_lrml4:MC_primal_UVP} implies that $\bm{\Bar{P}} \in \mathcal{P}_{\min(k, \text{rank}(\bm{\Bar{U}}\bm{\Bar{V}}^T))} = \mathcal{P}_{\min(k, \text{rank}(\bm{\Bar{X}}))}$ and also that \[(\bm{I}_n-\bm{\Bar{P}})\bm{\Bar{X}} = (\bm{I}_n-\bm{\Bar{P}})\bm{\Bar{U}}\bm{\Bar{V}}^T = \bm{0}_{n \times k}\bm{\Bar{V}}^T = \bm{0}_{n \times m},\] thus the solution $(\bm{\Bar{X}}, \bm{\Bar{P}})$ is certainly feasible for \eqref{opt_lrml4:MC_primal_XP}. To see that $(\bm{\Bar{X}}, \bm{\Bar{P}})$ achieves the same or lesser objective value, it suffices to argue that $\Vert \bm{\Bar{X}} \Vert_\star \leq \frac{1}{2}(\Vert \bm{\Bar{U}} \Vert_F^2 + \Vert \bm{\Bar{V}} \Vert_F^2)$. This follows immediately from the following {\color{black}well-known} proposition established by \cite{JMLR:v11:mazumder10a} (see Appendix A.5 in their paper for a proof):
\begin{proposition} \label{lemma:mazumder}
    For any matrix $\bm{Z}$, the following holds: \[\Vert \bm{Z}\Vert_\star = \min_{\bm{U}, \bm{V}: \bm{Z}=\bm{U}\bm{V}^T} \frac{1}{2}(\Vert \bm{U} \Vert_F^2 + \Vert \bm{V} \Vert_F^2).\] If $\text{rank}(\bm{Z})=k\leq \min(m, n)$, then the minimum above is attained at a factor decomposition $\bm{U}_{n \times k}\bm{V}_{m \times k}^T$. Letting $\bm{Z}_{n \times m} = \bm{L}_{n \times k}\bm{\Sigma}_{k \times k}\bm{R}_{m \times k}^T$ denote a singular value decomposition of $\bm{Z}$, the minimum above is attained at $\bm{U}_{n \times k} = \bm{L}_{n \times k}\bm{\Sigma}_{k \times k}^\frac{1}{2}, \bm{V}_{m \times k} = \bm{R}_{m \times k}\bm{\Sigma}_{k \times k}^\frac{1}{2}$.
\end{proposition}

Consider now an arbitrary feasible solution $(\bm{\Bar{X}}, \bm{\Bar{P}})$ to \eqref{opt_lrml4:MC_primal_XP}. Let $\bm{\Bar{X}} = \bm{L}\bm{\Sigma}\bm{R}^T$ be a singular value decomposition of $\bm{\Bar{X}}$ where $\bm{L} \in \mathbb{R}^{n \times k}, \bm{\Sigma} \in \mathbb{R}^{k \times k}, \bm{R} \in \mathbb{R}^{m \times k}$ and define $\bm{\Bar{U}}=\bm{L}\bm{\Sigma}^\frac{1}{2}, \bm{\Bar{V}}=\bm{R}\bm{\Sigma}^\frac{1}{2}$. Feasibility of $(\bm{\Bar{X}}, \bm{\Bar{P}})$ in \eqref{opt_lrml4:MC_primal_XP} implies that $\bm{\Bar{P}} \in \mathcal{P}_{\min(k, \text{rank}(\bm{\Bar{X}}))} = \mathcal{P}_{\min(k, \text{rank}(\bm{\Bar{U}}\bm{\Bar{V}}^T))}$. Moreover, since the columns of $\bm{L}$ form an orthogonal basis for the columns space of $\bm{\Bar{X}}$, the condition $(\bm{I}_n-\bm{\Bar{P}})\bm{\Bar{X}} = \bm{0}_{n \times m}$ implies that \[(\bm{I}_n-\bm{\Bar{P}})\bm{\Bar{U}} = (\bm{I}_n-\bm{\Bar{P}})\bm{L}\bm{\Sigma}^\frac{1}{2} = \bm{0}_{n \times k}\bm{\Sigma}^\frac{1}{2} = \bm{0}_{n \times k}.\] Thus, the solution $(\bm{\Bar{U}}, \bm{\Bar{V}}, \bm{\Bar{P}})$ is feasible to \eqref{opt_lrml4:MC_primal_UVP}. Moreover, by Proposition \ref{lemma:mazumder} we have $\frac{1}{2}(\Vert \bm{\Bar{U}} \Vert_F^2 + \Vert \bm{\Bar{V}} \Vert_F^2) = \Vert \bm{\Bar{X}} \Vert_\star$ so $(\bm{\Bar{U}}, \bm{\Bar{V}}, \bm{\Bar{P}})$ achieves the same objective in \eqref{opt_lrml4:MC_primal_UVP} as $(\bm{\Bar{X}}, \bm{\Bar{P}})$ does in \eqref{opt_lrml4:MC_primal_XP}. This completes the proof.

\end{proof}

In the remainder of the paper, we will relax the constraint $\bm{P} \in \mathcal{P}_{\min(k, \text{rank}(\bm{U}\bm{V}^T))}$ to $\bm{P} \in \mathcal{P}_k$ and develop a scalable algorithm to obtain high quality feasible solutions. Explicitly, we consider the problem given by:

\begin{equation}
    \begin{aligned}
        &\min_{\substack{\bm{U} \in \mathbb{R}^{n \times k} \\ \bm{V} \in \mathbb{R}^{m \times k}, \\ \bm{P} \in \mathbb{R}^{n \times n}}} & & \sum_{(i, j) \in \Omega}((\bm{U}\bm{V}^T)_{ij}-A_{ij})^2 + \lambda \text{Tr}\big{(}\bm{Y}^T(\bm{I}_n - \bm{P})\bm{Y}\big{)} + \frac{\gamma}{2}( \Vert \bm{U} \Vert_F^2 + \Vert \bm{V} \Vert_F^2) \\
        &\text{s.t.} & & (\bm{I}_n-\bm{P})\bm{U} = \bm{0}_{n \times k}, \, \bm{P} \in \mathcal{P}_k.
    \end{aligned} \label{opt_lrml4:MC_primal_UVPk}
\end{equation}
It is straightforward to see that the optimal value of \eqref{opt_lrml4:MC_primal_UVPk} is no greater than the optimal value of \eqref{opt_lrml4:MC_primal_UVP}. Unfortunately, the converse does not necessarily hold. To see why the optimal value of \eqref{opt_lrml4:MC_primal_UVPk} can be strictly less than that of \eqref{opt_lrml4:MC_primal_UVP} in certain pathological cases, suppose we had $k = n = m$, $\Omega = \emptyset$. In this setting, letting $\bm{\Bar{P}} = \bm{I}_n$, $\bm{U}=\bm{0}_{n \times k}$ and $\bm{\Bar{V}} = \bm{0}_{m \times k}$, the solution $(\bm{\Bar{U}}, \bm{\Bar{V}}, \bm{\Bar{P}})$ would be feasible to  \eqref{opt_lrml4:MC_primal_UVPk} and achieve an objective value of $0$. However the optimal value of \eqref{opt_lrml4:MC_primal_UVP} would be strictly greater than $0$ in this setting as long as $\bm{Y} \neq 0$. Although \eqref{opt_lrml4:MC_primal_UVPk} is a relaxation of \eqref{opt_lrml4:MC_primal}, we will see in Section \ref{sec_lrml4:experiments} that the solutions we will obtain to \eqref{opt_lrml4:MC_primal_UVPk} will be high quality solutions for \eqref{opt_lrml4:MC_primal}, the main problem of interest.

\subsection{A Positive Semidefinite Cone Relaxation}

Convex relaxations are useful in non convex optimization primarily for two reasons. Firstly, given the objective value achieved by an arbitrary feasible solution, strong convex relaxations can be used to upperbound the worst case suboptimality of said solution. Secondly, convex relaxations can often be used as building blocks for global optimization procedures. In this section, we present a natural convex relaxation of \eqref{opt_lrml4:MC_primal_UVPk} that leverages the matrix generalization of the perspective relaxation \citep{gunluk2012perspective, bertsimas2020mixed, bertsimas2021perspective, bertsimas2023slr}.

Rather than working directly with \eqref{opt_lrml4:MC_primal_UVPk}, consider the equivalent formulation \eqref{opt_lrml4:MC_primal_XP} with $\mathcal{P}_{\min(k, \text{rank}(\bm{X}))}$ replaced by $\mathcal{P}_k$. Before proceeding, we will assume knowledge of an upper bound $M \in \mathbb{R}_+$ on the spectral norm of an optimal $\bm{X}$ to \eqref{opt_lrml4:MC_primal_XP}. Tighter bounds $M$ are desirable as they will lead to stronger convex relaxations of \eqref{opt_lrml4:MC_primal_XP}. We note that it is always possible to specify such an upper bound $M$ without prior knowledge of an optimal solution to \eqref{opt_lrml4:MC_primal_XP}. To see this, note that setting $\bm{X} = \bm{0}_{n \times m}$ in \eqref{opt_lrml4:MC_primal_XP} produces an objective value of $\sum_{(i, j) \in \Omega}A_{ij}^2 + \lambda \Vert \bm{Y}\Vert_F^2$. Thus, any $\bm{X}$ such that $\gamma \Vert \bm{X} \Vert_\star > \sum_{(i, j) \in \Omega}A_{ij}^2 + \lambda \Vert \bm{Y}\Vert_F^2$ cannot possibly be optimal to \eqref{opt_lrml4:MC_primal_XP}. Finally, since the nuclear norm is an upper bound on the spectral norm of a matrix, we must have \[\Vert \bm{X} \Vert_\sigma \leq \frac{\sum_{(i, j) \in \Omega}A_{ij}^2 + \lambda \Vert \bm{Y}\Vert_F^2}{\gamma},\] for any matrix $\bm{X}$ that is optimal to \eqref{opt_lrml4:MC_primal_XP}. We can therefore take $M = \frac{\sum_{(i, j) \in \Omega}A_{ij}^2 + \lambda \Vert \bm{Y}\Vert_F^2}{\gamma}$.

Notice that the non convexity in \eqref{opt_lrml4:MC_primal_XP} is captured entirely by the bilinear constraint $(\bm{I}_n-\bm{P})\bm{X} = \bm{0}_{n \times m}$ and the quadratic constraint $\bm{P}^2=\bm{P}$. In keeping with the approach presented in \cite{bertsimas2021perspective, bertsimas2023slr}, we leverage the matrix perspective to convexify the bilinear term and solve over the convex hull of the set $\mathcal{P}_k$. Recalling that the nuclear norm is semidefinite representable, we have the following formulation:

\begin{equation}
    \begin{aligned}
        &\min_{\substack{\bm{P}, \bm{W}_1 \in \mathbb{R}^{n \times n}, \\ \bm{X} \in \mathbb{R}^{n \times m} \\ \bm{W}_2 \in \mathbb{R}^{m \times m}}} & & \sum_{(i, j) \in \Omega}(X_{ij}-A_{ij})^2 + \lambda \text{Tr}\big{(}\bm{Y}^T(\bm{I}_n - \bm{P})\bm{Y}\big{)} + \frac{\gamma}{2}(\text{Tr}(\bm{W}_1)+\text{Tr}(\bm{W}_2)) \\
        &\text{s.t.} & & \bm{I}_n \succeq \bm{P} \succeq 0, \, \text{Tr}(\bm{P}) \leq k, \\
        & & & \begin{pmatrix}M \bm{P} & \bm{X}\\ \bm{X}^T & M \bm{I}_m\end{pmatrix} \succeq 0, \, \begin{pmatrix}\bm{W}_1 & \bm{X}\\ \bm{X}^T & \bm{W}_2\end{pmatrix} \succeq 0.
    \end{aligned} \label{opt_lrml4:MC_SDP_relax}
\end{equation} We now have the following result:

\begin{proposition}
    Problem \eqref{opt_lrml4:MC_SDP_relax} is a valid convex relaxation of \eqref{opt_lrml4:MC_primal_UVPk}.
\end{proposition}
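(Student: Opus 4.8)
The plan is to establish two facts: that \eqref{opt_lrml4:MC_SDP_relax} is a convex optimization problem, and that its optimal value is a lower bound on that of \eqref{opt_lrml4:MC_primal_UVPk}. Convexity is the routine part: the first objective term is a convex quadratic in $\bm{X}$, the second is linear in $\bm{P}$, and the third is linear in $(\bm{W}_1, \bm{W}_2)$, while the constraints $\bm{0}\preceq\bm{P}\preceq\bm{I}_n$, $\text{Tr}(\bm{P})\leq k$, and the two block linear matrix inequalities each cut out a convex (semidefinite-representable) set. For the lower bound I would not argue with the factored variables directly; instead I would pass through the equivalent formulation obtained from \eqref{opt_lrml4:MC_primal_XP} by replacing $\mathcal{P}_{\min(k,\text{rank}(\bm{X}))}$ with $\mathcal{P}_k$, which has the same optimal value as \eqref{opt_lrml4:MC_primal_UVPk} by the same reasoning as in Proposition \ref{prop_lrml4:UVP_reform} together with Proposition \ref{lemma:mazumder}. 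It then suffices to map each feasible pair $(\bm{X}, \bm{P})$ of that formulation to a feasible point of \eqref{opt_lrml4:MC_SDP_relax} of equal objective value.

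Given such a pair, I would keep $\bm{X}$ and $\bm{P}$ and select $(\bm{W}_1, \bm{W}_2)$ to attain the semidefinite representation of the nuclear norm, namely the minimizers of $\tfrac12(\text{Tr}(\bm{W}_1)+\text{Tr}(\bm{W}_2))$ over the second block LMI of \eqref{opt_lrml4:MC_SDP_relax}; this makes $\tfrac{\gamma}{2}(\text{Tr}(\bm{W}_1)+\text{Tr}(\bm{W}_2)) = \gamma\Vert\bm{X}\Vert_\star$ and satisfies that LMI by construction. The first two objective terms are literally identical in the two problems, so the objective value is preserved. Because $\bm{P}$ is a symmetric idempotent with $\text{Tr}(\bm{P})\leq k$, its eigenvalues lie in $\{0,1\}$, so $\bm{0}\preceq\bm{P}\preceq\bm{I}_n$ and $\text{Tr}(\bm{P})\leq k$ hold immediately.

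The crux is the first block LMI. Since $M\bm{I}_m\succ 0$, a Schur complement argument reduces it to the matrix inequality $M^2\bm{P}\succeq\bm{X}\bm{X}^T$. I would prove this from two ingredients. First, feasibility gives $(\bm{I}_n-\bm{P})\bm{X}=\bm{0}_{n\times m}$, hence $\bm{P}\bm{X}=\bm{X}$ and $\bm{X}^T=\bm{X}^T\bm{P}$. Second, I may restrict to $\Vert\bm{X}\Vert_\sigma\leq M$ without loss of optimality: as in the discussion preceding the proposition, the point $\bm{X}=\bm{0}_{n\times m}$, $\bm{P}=\bm{0}_{n\times n}$ is feasible with objective $\sum_{(i,j)\in\Omega}A_{ij}^2+\lambda\Vert\bm{Y}\Vert_F^2$, so any optimal $\bm{X}$ obeys $\gamma\Vert\bm{X}\Vert_\sigma\leq\gamma\Vert\bm{X}\Vert_\star\leq\sum_{(i,j)\in\Omega}A_{ij}^2+\lambda\Vert\bm{Y}\Vert_F^2=\gamma M$. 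Combining these, for every $\bm{z}$ we have $\bm{z}^T M^2\bm{P}\bm{z}=M^2\Vert\bm{P}\bm{z}\Vert_2^2$ and $\bm{z}^T\bm{X}\bm{X}^T\bm{z}=\Vert\bm{X}^T\bm{P}\bm{z}\Vert_2^2\leq\Vert\bm{X}\Vert_\sigma^2\Vert\bm{P}\bm{z}\Vert_2^2\leq M^2\Vert\bm{P}\bm{z}\Vert_2^2$, which gives $M^2\bm{P}\succeq\bm{X}\bm{X}^T$.

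Assembling these steps, every optimal-relevant feasible solution of the intermediate formulation maps to a feasible point of \eqref{opt_lrml4:MC_SDP_relax} with identical objective, so the optimal value of \eqref{opt_lrml4:MC_SDP_relax} is at most that of \eqref{opt_lrml4:MC_primal_UVPk}; together with convexity this proves the claim. I expect the first block LMI to be the main obstacle: one must apply the Schur complement correctly and then fuse the range inclusion $\bm{P}\bm{X}=\bm{X}$ with the spectral bound $\Vert\bm{X}\Vert_\sigma\leq M$. The reduction to $\Vert\bm{X}\Vert_\sigma\leq M$ is precisely where the prescribed value of $M$ is needed, and it is the step most easily overlooked.
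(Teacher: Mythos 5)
Your proposal is correct and follows essentially the same route as the paper's proof: both pass to the $(\bm{X},\bm{P})$ formulation with $\mathcal{P}_k$, keep $(\bm{X},\bm{P})$, choose $(\bm{W}_1,\bm{W}_2)$ so that $\tfrac{\gamma}{2}(\text{Tr}(\bm{W}_1)+\text{Tr}(\bm{W}_2))=\gamma\Vert\bm{X}\Vert_\star$, and verify the first block LMI by a Schur complement argument combined with $\bm{P}\bm{X}=\bm{X}$ and the spectral bound $\Vert\bm{X}\Vert_\sigma\leq M$ available at optimality. The only cosmetic differences are that you take the Schur complement with respect to the $M\bm{I}_m$ block (reducing to $M^2\bm{P}\succeq\bm{X}\bm{X}^T$, proved by a quadratic-form estimate) whereas the paper uses the generalized Schur complement with respect to $M\bm{P}$ via a pseudo-inverse, and that you invoke the semidefinite representation of the nuclear norm abstractly while the paper constructs $\bm{W}_1=\bm{L}\bm{\Sigma}\bm{L}^T$, $\bm{W}_2=\bm{R}\bm{\Sigma}\bm{R}^T$ explicitly from an SVD of $\bm{X}$.
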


\begin{proof}
    Problem \eqref{opt_lrml4:MC_SDP_relax} is clearly a convex optimization problem. We will show that the optimal value of \eqref{opt_lrml4:MC_SDP_relax} is a lower bound on the optimal value of \eqref{opt_lrml4:MC_primal_UVPk} by showing that given any optimal solution to  \eqref{opt_lrml4:MC_primal_UVPk}, we can construct a feasible solution to \eqref{opt_lrml4:MC_SDP_relax} that achieves the same objective value.

    Consider any optimal solution $(\bm{\Bar{U}}, \bm{\Bar{V}}, \bm{\Bar{P}})$ to \eqref{opt_lrml4:MC_primal_UVPk}. From the proof of Proposition \ref{prop_lrml4:UVP_reform}, we know that the solution $(\bm{\Bar{X}}, \bm{\Bar{P}})$ where $\bm{\Bar{X}} = \bm{\Bar{U}} \bm{\Bar{V}}^T$ is feasible to \eqref{opt_lrml4:MC_primal_XP} (where we replace the constraint $\bm{P} \in \mathcal{P}_{\min(k, \text{rank}(\bm{U}\bm{V}^T))}$ with $\bm{P} \in \mathcal{P}_k$) and must also be optimal. Let $\bm{\Bar{X}} = \bm{L} \bm{\Sigma} \bm{R}^T$ be a singular value decomposition of $\bm{\Bar{X}}$ with $\bm{L} \in \mathbb{R}^{n \times k}, \bm{\Sigma} \in \mathbb{R}^{k \times k}$ and $\bm{R} \in \mathbb{R}^{m \times k}$. Let $\bm{\Bar{W}}_1 = \bm{L}\bm{\Sigma}\bm{L}^T$ and $\bm{\Bar{W}}_2 = \bm{R}\bm{\Sigma}\bm{R}^T$. We claim that $(\bm{\Bar{X}}, \bm{\Bar{P}}, \bm{\Bar{W}}_1, \bm{\Bar{W}}_2)$ is feasible to \eqref{opt_lrml4:MC_SDP_relax} and achieves the same objective value as $(\bm{\Bar{X}}, \bm{\Bar{P}})$ does in \eqref{opt_lrml4:MC_primal_XP}.

    From the feasibility of $\bm{\Bar{P}}$ in \eqref{opt_lrml4:MC_primal_UVPk}, we know that $\bm{\Bar{P}} \in \mathcal{P}_k$ which implies $\bm{I}_n \succeq \bm{\Bar{P}} \succeq 0$ and $\text{Tr}(\bm{\Bar{P}}) \leq k$. By the generalized Schur complement lemma (see \cite{boyd1994linear}, Equation 2.41), we know that \[\begin{pmatrix}M \bm{\Bar{P}} & \bm{\Bar{X}}\\ \bm{\Bar{X}}^T & M \bm{I}_m\end{pmatrix} \succeq 0 \iff M \bm{I}_m \succeq 0, \text{ and } M \bm{I}_m - \bm{\Bar{X}}^T(M\bm{\Bar{P}})^\dagger \bm{\Bar{X}} \succeq 0.\] We trivially have $M \bm{I}_m \succeq 0$. To see that the second condition holds, note that since $\bm{\Bar{P}}$ is a projection matrix and $\bm{\Bar{P}}\bm{\Bar{X}} = \bm{\Bar{X}}$, we have $\bm{\Bar{X}}^T(M\bm{\Bar{P}})^\dagger \bm{\Bar{X}} = \frac{1}{M}\bm{\Bar{X}}^T\bm{\Bar{P}}\bm{\Bar{X}} = \frac{1}{M}\bm{\Bar{X}}^T\bm{\Bar{X}}$. Furthermore, since $\bm{\Bar{X}}$ is optimal to \eqref{opt_lrml4:MC_primal_XP}, we have $\Vert \bm{\Bar{X}} \Vert_\sigma \leq M$ by assumption. Thus, we have \[\Vert \bm{\Bar{X}} \Vert_\sigma \leq M \implies \Vert \bm{\Bar{X}}^T\bm{\Bar{X}} \Vert_\sigma \leq M^2 \implies M^2\bm{I}_m \succeq \bm{\Bar{X}}^T\bm{\Bar{X}} \implies M \bm{I}_m \succeq \frac{1}{M}\bm{\Bar{X}}^T\bm{\Bar{X}}.\] Finally, observe that \[\begin{pmatrix}\bm{\Bar{W}}_1 & \bm{\Bar{X}}\\ \bm{\Bar{X}}^T & \bm{\Bar{W}}_2\end{pmatrix} = \begin{pmatrix}\bm{L}\bm{\Sigma}\bm{L}^T & \bm{L} \bm{\Sigma} \bm{R}^T\\ \bm{R} \bm{\Sigma} \bm{L}^T & \bm{R}\bm{\Sigma}\bm{R}^T\end{pmatrix} = \begin{pmatrix}\bm{L} \\ \bm{R}\end{pmatrix} \bm{\Sigma} \begin{pmatrix}\bm{L} \\ \bm{R}\end{pmatrix}^T.\] Since $\bm{\Sigma}$ is a diagonal matrix with non negative entries, the matrix $\begin{pmatrix}\bm{\Bar{W}}_1 & \bm{\Bar{X}}\\ \bm{\Bar{X}}^T & \bm{\Bar{W}}_2\end{pmatrix}$ is certainly positive semidefinite. Thus we have shown that $(\bm{\Bar{X}}, \bm{\Bar{P}}, \bm{\Bar{W}}_1, \bm{\Bar{W}}_2)$ is indeed feasible to \eqref{opt_lrml4:MC_SDP_relax}. To conclude the proof, we note that
    \begin{equation*}
        \begin{aligned}
            \frac{\gamma}{2}(\text{Tr}(\bm{\Bar{W}}_1)+\text{Tr}(\bm{\Bar{W}}_2)) &= \frac{\gamma}{2}(\text{Tr}(\bm{L}\bm{\Sigma}\bm{L}^T)+\text{Tr}(\bm{R}\bm{\Sigma}\bm{R}^T)) \\
            &= \frac{\gamma}{2}(\text{Tr}(\bm{L}^T\bm{L}\bm{\Sigma})+\text{Tr}(\bm{R}^T\bm{R}\bm{\Sigma})) \\
            &= \frac{\gamma}{2}(\text{Tr}(\bm{\Sigma})+\text{Tr}(\bm{\Sigma})) = \gamma \Vert \bm{\Bar{X}} \Vert_\star,
        \end{aligned}
    \end{equation*} thus $(\bm{\Bar{X}}, \bm{\Bar{P}}, \bm{\Bar{W}}_1, \bm{\Bar{W}}_2)$ achieves the same objective value in \eqref{opt_lrml4:MC_SDP_relax} as $(\bm{\Bar{X}}, \bm{\Bar{P}})$ achieves in \eqref{opt_lrml4:MC_primal_XP}.
\end{proof} In general, an optimal solution to \eqref{opt_lrml4:MC_SDP_relax} will have $\bm{P} \notin \mathcal{P}_k$. We briefly note that to obtain a stronger convex relaxation, one could leverage eigenvector disjunctions \citep{bertsimas2023optimal, saxena2010convex} to iteratively cut off solutions to \eqref{opt_lrml4:MC_SDP_relax} with $\bm{P} \notin \mathcal{P}_k$ and form increasingly tighter disjunctive approximations to the set $\mathcal{P}_k$.

\section{Mixed-Projection ADMM} \label{sec_lrml4:admm}

 In this section, we present an alternating direction method of multipliers (ADMM) algorithm that is scalable and obtains high quality solutions for \eqref{opt_lrml4:MC_primal_UVPk} and we investigate its convergence properties. Rather than forming the augmented Lagrangian directly for \eqref{opt_lrml4:MC_primal_UVPk}, we first modify our problem formulation by introducing a dummy variable $\bm{Z} \in \mathbb{R}^{n \times k}$ that is an identical copy of $\bm{U}$. Additionally, rather than directly enforcing the constraint $\bm{P} \in \mathcal{P}_k$, we introduce an indicator function penalty $\mathbb{I}_{\mathcal{P}_k}(\bm{P})$ into the objective function where $\mathbb{I}_{\mathcal{X}}(\bm{x}) = 0$ if $\bm{x} \in \mathcal{X}$, otherwise $\mathbb{I}_{\mathcal{X}}(\bm{x}) = \infty$. Explicitly, we consider the following problem:

\begin{equation}
    \begin{aligned}
        &\min_{\bm{U}, \bm{Z} \in \mathbb{R}^{n \times k}, \bm{V} \in \mathbb{R}^{m \times k}, \bm{P} \in \mathbb{R}^{n \times n}} & & \sum_{(i, j) \in \Omega}((\bm{U}\bm{V}^T)_{ij}-A_{ij})^2 + \lambda \text{Tr}\big{(}\bm{Y}^T(\bm{I}_n - \bm{P})\bm{Y}\big{)} \\
        & & &+ \frac{\gamma}{2}( \Vert \bm{U} \Vert_F^2 + \Vert \bm{V} \Vert_F^2) + \mathbb{I}_{\mathcal{P}_k}(\bm{P}) \\
        &\text{s.t.} & & (\bm{I}_n-\bm{P})\bm{Z} = \bm{0}_{n \times k}, \, \bm{U}-\bm{Z} = \bm{0}_{n \times k}.
    \end{aligned} \label{opt_lrml4:MC_primal_dummy}
\end{equation} It is trivial to see that \eqref{opt_lrml4:MC_primal_dummy} is equivalent to \eqref{opt_lrml4:MC_primal_UVPk}. We will see in this section that working with formulation \eqref{opt_lrml4:MC_primal_dummy} leads to an ADMM algorithm with favorable decomposition properties. Introducing dual variables $\bm{\Phi}, \bm{\Psi} \in \mathbb{R}^{n \times k}$ for the constraints $(\bm{I}_n-\bm{P})\bm{U} = \bm{0}_{n \times k}$ and $\bm{U}-\bm{Z} = \bm{0}_{n \times k}$ respectively, the augmented Lagrangian $\mathcal{L}^A$ for \eqref{opt_lrml4:MC_primal_dummy} is given by:

\begin{equation}
    \begin{aligned}
        \mathcal{L}^A(\bm{U}, \bm{V}, \bm{P}, \bm{Z}, \bm{\Phi}, \bm{\Psi}) &= \sum_{(i, j) \in \Omega}((\bm{U}\bm{V}^T)_{ij}-A_{ij})^2 + \lambda \text{Tr}\big{(}\bm{Y}^T(\bm{I}_n - \bm{P})\bm{Y}\big{)} \\ &+ \frac{\gamma}{2}( \Vert \bm{U} \Vert_F^2 + \Vert \bm{V} \Vert_F^2)
        + \mathbb{I}_{\mathcal{P}_k}(\bm{P}) + \text{Tr}(\bm{\Phi}^T(\bm{I}_n-\bm{P})\bm{Z}) \\ &+ \text{Tr}(\bm{\Psi}^T(\bm{Z}-\bm{U}))
        +\frac{\rho_1}{2}\Vert (\bm{I}_n-\bm{P})\bm{Z} \Vert_F^2 +\frac{\rho_2}{2}\Vert \bm{Z} - \bm{U} \Vert_F^2,
    \end{aligned} \label{eq:lagrangian}
\end{equation} where $\rho_1, \rho_2 > 0$ are non-negative penalty parameters. In what follows, we show that performing a partial minimization of the augmented Lagrangian \eqref{eq:lagrangian} over each of the primal variables $\bm{U}, \bm{V}, \bm{P}, \bm{Z}$ yields a subproblem that can be solved efficiently. We present each subproblem and investigate the complexity of computing the corresponding subproblem solutions.

\subsection{Subproblem in \texorpdfstring{$\bm{U}$}{U}}

First, suppose we fix variables $\bm{V}, \bm{P}, \bm{Z}, \bm{\Phi}, \bm{\Psi}$ and seek to minimize $\mathcal{L}^A(\bm{U}, \bm{V}, \bm{P}, \bm{Z}, \bm{\Phi}, \bm{\Psi})$ over $\bm{U}$. Eliminating terms that do not depend on $\bm{U}$, the resulting subproblem is given by

\begin{equation}
    \begin{aligned}
        \min_{\bm{U} \in \mathbb{R}^{n \times k}} \sum_{(i, j) \in \Omega}((\bm{U}\bm{V}^T)_{ij}-A_{ij})^2 + \frac{\gamma}{2}\Vert \bm{U} \Vert_F^2 - \text{Tr}(\bm{\Psi}^T\bm{U}) +\frac{\rho_2}{2}\Vert \bm{Z} - \bm{U} \Vert_F^2.
    \end{aligned} \label{opt_lrml4:U_prob}
\end{equation}
We now have the following result:

\begin{proposition} \label{prop_lrml4:U_sol}
    The optimal solution $\bm{\Bar{U}}$ for \eqref{opt_lrml4:U_prob} is given by
    \begin{equation}\label{eq:opt_U}
        \Bar{U}_{i, \star} = [2\bm{V}^T\bm{W}_i\bm{V}+(\gamma + \rho_2)\bm{I}_k]^{-1}[2\bm{V}^T\bm{W}_iA_{i, \star}+\Psi_{i, \star}+\rho_2Z_{i, \star}],
    \end{equation} for each $i \in \{1, \ldots, n\}$ where each $\bm{W}_i \in \mathbb{R}^{m \times m}$ is a diagonal matrix satisfying $(\bm{W}_i)_{jj} = 1$ if $(i, j) \in \Omega$, otherwise $(\bm{W}_i)_{jj} = 0$. Here, the column vectors $\Bar{U}_{i, \star} \in \mathbb{R}^k, A_{i, \star} \in \mathbb{R}^m, \Psi_{i, \star} \in \mathbb{R}^k, Z_{i, \star} \in \mathbb{R}^k$ denote the $i^{th}$ row of the matrices $\bm{\Bar{U}}, \bm{A}, \bm{\Psi}, \bm{Z}$ respectively, where the unknown entries of $\bm{A}$ are taken to be $0$.
\end{proposition}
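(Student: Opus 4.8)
The plan is to exploit the fact that the objective of \eqref{opt_lrml4:U_prob} is a strictly convex quadratic in $\bm{U}$ that \emph{decouples across the rows} of $\bm{U}$, so that minimizing over $\bm{U}$ reduces to $n$ independent unconstrained least-squares problems in $\mathbb{R}^k$, each solvable in closed form via its first-order optimality condition. The essential preliminary step is to rewrite every term of the objective as a sum over the row index $i$.

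First I would note that $(\bm{U}\bm{V}^T)_{ij} = V_{j,\star}^T U_{i,\star}$, so the data-fidelity term splits as $\sum_{i=1}^n \sum_{j:(i,j)\in\Omega}(V_{j,\star}^T U_{i,\star} - A_{ij})^2$. Introducing the diagonal mask $\bm{W}_i$ lets me write the inner sum compactly as $(\bm{V} U_{i,\star} - A_{i,\star})^T \bm{W}_i (\bm{V} U_{i,\star} - A_{i,\star})$, where the unobserved entries of $A_{i,\star}$ (set to $0$ by convention) are annihilated by the zero weights of $\bm{W}_i$ and hence play no role. The remaining three terms split across rows even more directly, since $\Vert \bm{U}\Vert_F^2 = \sum_i \Vert U_{i,\star}\Vert_2^2$, $\text{Tr}(\bm{\Psi}^T\bm{U}) = \sum_i \Psi_{i,\star}^T U_{i,\star}$, and $\Vert \bm{Z}-\bm{U}\Vert_F^2 = \sum_i \Vert Z_{i,\star} - U_{i,\star}\Vert_2^2$. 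I conclude that the objective equals $\sum_{i=1}^n h_i(U_{i,\star})$ for a per-row function $h_i:\mathbb{R}^k \to \mathbb{R}$, so each row may be optimized separately.

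Next I would minimize each $h_i$ over $\bm{u} \in \mathbb{R}^k$ by setting its gradient to zero. Differentiating gives $\nabla h_i(\bm{u}) = 2\bm{V}^T\bm{W}_i(\bm{V}\bm{u} - A_{i,\star}) + \gamma\bm{u} - \Psi_{i,\star} + \rho_2(\bm{u} - Z_{i,\star})$; collecting terms and equating to zero yields the linear system $[2\bm{V}^T\bm{W}_i\bm{V} + (\gamma+\rho_2)\bm{I}_k]\bm{u} = 2\bm{V}^T\bm{W}_i A_{i,\star} + \Psi_{i,\star} + \rho_2 Z_{i,\star}$, whose solution is precisely \eqref{eq:opt_U}.

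Finally I would argue that this stationary point is the unique global minimizer and that \eqref{eq:opt_U} is well defined. Since $\bm{W}_i \succeq 0$ forces $\bm{V}^T\bm{W}_i\bm{V} \succeq 0$ and $\gamma, \rho_2 > 0$, the coefficient matrix $2\bm{V}^T\bm{W}_i\bm{V} + (\gamma+\rho_2)\bm{I}_k$ is positive definite, hence invertible; this also makes each $h_i$ strictly convex, so the first-order condition is both necessary and sufficient. I do not anticipate a genuine obstacle: the only care required is the bookkeeping that translates the partially observed sum over $\Omega$ into the masked matrix form through $\bm{W}_i$, together with the verification that the unobserved entries of $\bm{A}$ are irrelevant because they are multiplied by zero weights.
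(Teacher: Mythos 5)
Your proposal is correct and follows essentially the same route as the paper's proof: decompose the objective row-by-row using the diagonal masks $\bm{W}_i$, observe each per-row function is a strongly convex quadratic, and solve the first-order condition in closed form. Your additional remark on the positive definiteness of $2\bm{V}^T\bm{W}_i\bm{V}+(\gamma+\rho_2)\bm{I}_k$ matches the observation the paper makes immediately after the proposition, so nothing is missing.
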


\begin{proof}
    Let $f(\bm{U})$ denote the objective function of \eqref{opt_lrml4:U_prob}. With $\{\bm{W}_i\}_{i=1}^n$ defined as in Proposition \ref{prop_lrml4:U_sol}, observe that we can write $f(\bm{U})$ as
    \begin{equation*}
        \begin{aligned}
            f(\bm{U}) &= \sum_{i=1}^n \bigg{[}\Vert \bm{W}_i(\bm{V}U_{i, \star} - A_{i, \star}) \Vert_2^2 + \frac{\gamma}{2} \Vert U_{i, \star} \Vert_2^2 -  \Psi_{i,\star}^TU_{i, \star}+ \frac{\rho_2}{2} \Vert Z_{i, \star} - U_{i, \star} \Vert_2^2\bigg{]} \\
            &= \sum_{i=1}^ng_i(\bm{U}),
        \end{aligned}
    \end{equation*} where we define $g_i(\bm{U}) = \Vert \bm{W}_i(\bm{V}U_{i, \star} - A_{i, \star}) \Vert_2^2 + \frac{\gamma}{2} \Vert U_{i, \star} \Vert_2^2 -  \Psi_{i,\star}^TU_{i, \star}+ \frac{\rho_2}{2} \Vert Z_{i, \star} - U_{i, \star} \Vert_2^2$. Thus, we have shown that $f(\bm{U})$ is separable over the rows of the matrix $\bm{U}$. Each function $g_i(\bm{U})$ is a (strongly) convex quadratic. Thus, we can minimize $g_i(\bm{U})$ by setting its gradient to $0$. For any fixed row $i \in \{1, \ldots, n\}$, we can differentiate and obtain \[ \nabla_{\bm{U}_{i, \star}}g_i(\bm{U}) = 2\bm{V}^T\bm{W}_i(\bm{V}U_{i, \star} - A_{i \star}) + \gamma U_{i, \star}-\Psi_{i, \star}-\rho_2(Z_{i, \star}-U_{i, \star}).\] By equating the gradient $\nabla_{\bm{U}_{i, \star}}g_i(\bm{U})$ to $0$ and rearranging, we obtain that the optimal vector $\Bar{U}_{i, \star}$ is given by \eqref{eq:opt_U}. This completes the proof.
\end{proof}

Observe that since the matrix $\bm{V}^T\bm{W}_i\bm{V}$ is positive semidefinite and $\gamma + \rho_2 > 0$, the matrix inverse $[2\bm{V}^T\bm{W}_i\bm{V}+(\gamma + \rho_2)\bm{I}_k]^{-1}$ is well defined for all $i \in \{1, \ldots, n\}$. Computing the optimal solution to \eqref{opt_lrml4:U_prob} requires computing $n$ different $k \times k$ matrix inverses (where in general $k \ll \min \{m,n\}$). Computing a single $k \times k$ matrix inverse requires $O(k^3)$ time and forming the matrix product $\bm{V}^T\bm{W}_i\bm{V}$ requires $O(k^2m)$ time for a given $i$. Thus, the complexity of computing the optimal solution for a single column is $O(k^3+k^2m)$. Notice that each column of $\bm{\Bar{U}}$ can be computed independently of the other columns. We leverage this observation by developing a multi-threaded implementation of the algorithm presented in this section. Letting $w$ denote the number of compute threads available, computing the optimal solution $\bm{\Bar{U}}$ of \eqref{opt_lrml4:U_prob} requires $O\Big{(}\frac{k^3n+k^2mn}{\min\{w,n\}}\Big{)}$ time (the term $\min\{w,n\}$ in the denominator reflects that fact that increasing the number of available compute threads beyond the number of columns of $\bm{\Bar{U}}$ does not yield additional reduction in compute complexity).

\subsection{Subproblem in \texorpdfstring{$\bm{V}$}{V}}

Now, suppose we fix variables $\bm{U}, \bm{P}, \bm{Z}, \bm{\Phi}, \bm{\Psi}$ and seek to minimize $\mathcal{L}^A(\bm{U}, \bm{V}, \bm{P}, \bm{Z}, \bm{\Phi}, \bm{\Psi})$ over $\bm{V}$. Eliminating terms that do not depend on $\bm{V}$, the resulting subproblem is given by

\begin{equation}
    \begin{aligned}
        \min_{\bm{V} \in \mathbb{R}^{m \times k}} \sum_{(i, j) \in \Omega}((\bm{U}\bm{V}^T)_{ij}-A_{ij})^2 + \frac{\gamma}{2}\Vert \bm{V} \Vert_F^2.
    \end{aligned} \label{opt_lrml4:V_prob}
\end{equation}
We now have the following result:

\begin{proposition} \label{prop_lrml4:V_sol}
    The optimal solution $\bm{\Bar{V}}$ for \eqref{opt_lrml4:V_prob} is given by
    \begin{equation} \label{eq:opt_V}
        \Bar{V}_{j, \star} = [2\bm{U}^T\bm{W}_j\bm{U}+\gamma\bm{I}_k]^{-1}[2\bm{U}^T\bm{W}_jA_{\star, j}],
    \end{equation}
    for each $j \in \{1, \ldots, m\}$ where each $\bm{W}_j \in \mathbb{R}^{n \times n}$ is a diagonal matrix satisfying $(\bm{W}_j)_{ii} = 1$ if $(i, j) \in \Omega$, otherwise $(\bm{W}_j)_{ii} = 0$. Here, the column vector $\Bar{V}_{j, \star} \in \mathbb{R}^k$ denotes the $j^{th}$ row of $\bm{\Bar{V}}$ while the column vector $A_{\star, j} \in \mathbb{R}^n$ denotes the $j^{th}$ column of $\bm{A}$ where the unknown entries of $\bm{A}$ are taken to be $0$.
\end{proposition}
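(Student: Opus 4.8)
The plan is to mirror the argument used to prove Proposition \ref{prop_lrml4:U_sol}, exploiting the fact that the objective of \eqref{opt_lrml4:V_prob} separates over the rows of $\bm{V}$. First I would rewrite the data-fidelity term by regrouping the sum over $\Omega$ according to its column index. Observing that $(\bm{U}\bm{V}^T)_{ij} = \bm{U}_{i,\star}^T\bm{V}_{j,\star}$, the $i^{th}$ entry of the vector $\bm{U}\bm{V}_{j,\star} \in \mathbb{R}^n$ reproduces $(\bm{U}\bm{V}^T)_{ij}$, so using the diagonal selector $\bm{W}_j$ I can write
\[
\sum_{(i,j)\in\Omega}((\bm{U}\bm{V}^T)_{ij}-A_{ij})^2 = \sum_{j=1}^m \Vert\bm{W}_j(\bm{U}\bm{V}_{j,\star}-A_{\star,j})\Vert_2^2.
\]
Combining this with $\frac{\gamma}{2}\Vert\bm{V}\Vert_F^2 = \frac{\gamma}{2}\sum_{j=1}^m\Vert\bm{V}_{j,\star}\Vert_2^2$ exhibits the objective as $\sum_{j=1}^m h_j(\bm{V})$, where each $h_j$ depends only on the single row $\bm{V}_{j,\star}$. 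Hence the problem decouples into $m$ independent minimizations, one per row.

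Next I would argue that each $h_j$ is a strongly convex quadratic in $\bm{V}_{j,\star}$, with strong convexity inherited from the $\frac{\gamma}{2}\Vert\cdot\Vert_2^2$ term and $\gamma>0$, so its unique minimizer is obtained by setting the gradient to zero. Differentiating, and using that $\bm{W}_j$ is a $0/1$ diagonal matrix so that $\bm{W}_j^T\bm{W}_j = \bm{W}_j^2 = \bm{W}_j$, yields
\[
\nabla_{\bm{V}_{j,\star}} h_j = 2\bm{U}^T\bm{W}_j(\bm{U}\bm{V}_{j,\star}-A_{\star,j}) + \gamma\bm{V}_{j,\star}.
\]
Setting this to $\bm{0}_k$ and rearranging gives the normal equations $(2\bm{U}^T\bm{W}_j\bm{U}+\gamma\bm{I}_k)\bm{V}_{j,\star} = 2\bm{U}^T\bm{W}_jA_{\star,j}$, which is exactly \eqref{eq:opt_V} once the coefficient matrix is inverted.

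Finally I would record that this inversion is justified: $\bm{U}^T\bm{W}_j\bm{U}\succeq 0$ because $\bm{W}_j\succeq 0$, and adding $\gamma\bm{I}_k$ with $\gamma>0$ makes $2\bm{U}^T\bm{W}_j\bm{U}+\gamma\bm{I}_k$ positive definite, hence invertible for every $j$. There is no genuine obstacle here: the proposition is the row-wise analogue of Proposition \ref{prop_lrml4:U_sol} with the roles of $\bm{U}$ and $\bm{V}$ interchanged and without the proximal and dual contributions that appeared in the $\bm{U}$ subproblem (which is why the coefficient carries $\gamma\bm{I}_k$ rather than $(\gamma+\rho_2)\bm{I}_k$). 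The only point requiring modest care is the bookkeeping in the first step: correctly transposing the grouping of $\Omega$ so that the selector $\bm{W}_j$ acts on the columns $A_{\star,j}$ of $\bm{A}$, with unobserved entries taken to be $0$, rather than on its rows.
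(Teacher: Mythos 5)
Your proposal is correct and follows essentially the same route as the paper's proof: decompose the objective row-wise over $\bm{V}$ using the diagonal selectors $\bm{W}_j$, note each row's subproblem is a strongly convex quadratic, and solve the normal equations obtained by setting the gradient to zero. Your added remark on the invertibility of $2\bm{U}^T\bm{W}_j\bm{U}+\gamma\bm{I}_k$ is a point the paper also makes, just immediately after the proof rather than within it.
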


\begin{proof}
    This proof follows the proof of Proposition \ref{prop_lrml4:U_sol}. Let $f(\bm{V})$ denote the objective function of \eqref{opt_lrml4:V_prob}. With $\{\bm{W}_j\}_{j=1}^m$ defined as in Proposition \ref{prop_lrml4:V_sol}, observe that we can write $f(\bm{V})$ as
    \begin{equation*}
        \begin{aligned}
            f(\bm{V}) &= \sum_{j=1}^m \bigg{[}\Vert \bm{W}_j(\bm{U}V_{j, \star} - A_{\star, j}) \Vert_2^2 + \frac{\gamma}{2}\Vert V_{j, \star} \Vert_2^2\bigg{]} = \sum_{j=1}^mg_j(\bm{V}),
        \end{aligned}
    \end{equation*} where we define $g_j(\bm{V}) = \Vert \bm{W}_j(\bm{U}V_{j, \star} - A_{\star, j}) \Vert_2^2 + \frac{\gamma}{2}\Vert V_{j, \star} \Vert_2^2$. Thus, we have shown that $f(\bm{V})$ is separable over the rows of the matrix $\bm{V}$. Each function $g_j(\bm{V})$ is a (strongly) convex quadratic. Thus, we can minimize $g_j(\bm{V})$ by setting its gradient to $0$. For any fixed row $j \in \{1, \ldots, m\}$, we can differentiate and obtain \[ \nabla_{\bm{V}_{j, \star}}g_j(\bm{V}) = 2\bm{U}^T\bm{W}_j(\bm{U}V_{j, \star} - A_{\star, j}) + \gamma V_{j, \star}.\] By equating the gradient $\nabla_{\bm{V}_{j, \star}}g_j(\bm{V})$ to $0$ and rearranging, we obtain that the optimal vector $\Bar{U}_{i, \star}$ is given by \eqref{eq:opt_V}. This completes the proof.
\end{proof}

Observe that since the matrix $\bm{U}^T\bm{W}_j\bm{U}$ is positive semidefinite and $\gamma > 0$, the matrix inverse $[2\bm{U}^T\bm{W}_j\bm{U}+\gamma\bm{I}_k]^{-1}$ is well defined for all $j \in \{1, \ldots, m\}$. Computing the optimal solution to \eqref{opt_lrml4:V_prob} requires computing $m$ different $k \times k$ matrix inverses. Forming the matrix product $\bm{U}^T\bm{W}_j\bm{U}$ requires $O(k^2n)$ time for a given $j$. Thus, the complexity of computing the optimal solution for a single column is $O(k^3+k^2n)$. Notice that, similarly to the solution of \eqref{opt_lrml4:U_prob}, each column of $\bm{\Bar{V}}$ can be computed independently of the other columns. As before, we leverage this observation in our multi-threaded implementation of the algorithm presented in this section. Letting $w$ denote the number of compute threads available, computing the optimal solution $\bm{\Bar{V}}$ of \eqref{opt_lrml4:V_prob} requires $O\Big{(}\frac{k^3m+k^2mn}{\min\{w,m\}}\Big{)}$ time. 

The optimal solution $\bm{\Bar{V}}$ to \eqref{opt_lrml4:V_prob} reveals that the Frobenius norm regularization term on $\bm{V}$ in \eqref{opt_lrml4:MC_primal_UVPk} (which emerges from the nuclear norm regularization term on $\bm{X}$ in \eqref{opt_lrml4:MC_primal}) has computational benefits. Indeed, if we had $\gamma = 0$, it is possible that the matrix $\bm{U}^T\bm{W}_j\bm{U}$ be singular at certain iterates of our ADMM algorithm, in which case the corresponding matrix inverse would be undefined. This observation is in keeping with several recent works in the statistics, machine learning and operations research literatures where the presence of a regularization penalty in the objective function yields improved out of sample performance as well as benefits in computational tractability (see for example \cite{bertsimas2020rejoinder, bertsimas2021unified, bertsimas2021perspective, bertsimas2023slr,  bertsimas2023compressed}).

\subsection{Subproblem in \texorpdfstring{$\bm{P}$}{P}}

Now, suppose we fix variables $\bm{U}, \bm{V}, \bm{Z}, \bm{\Phi}, \bm{\Psi}$ and seek to minimize $\mathcal{L}^A(\bm{U}, \bm{V}, \bm{P}, \bm{Z}, \bm{\Phi}, \bm{\Psi})$ over $\bm{P}$. Eliminating terms that do not depend on $\bm{P}$, the resulting subproblem is given by

\begin{equation}
    \begin{aligned}
        \min_{\bm{P} \in \mathbb{S}^n_+} -\lambda \text{Tr}(\bm{Y}^T\bm{P}\bm{Y}) - \text{Tr}(\bm{\Phi}^T\bm{P}\bm{Z})
        +\frac{\rho_1}{2}\Vert (\bm{I}_n-\bm{P})\bm{Z} \Vert_F^2 \quad \text{s.t.} \quad \bm{P}^2=\bm{P}, \, \text{Tr}(\bm{P}) \leq k.
    \end{aligned} \label{opt_lrml4:P_prob}
\end{equation}
We now have the following result:

\begin{proposition} \label{prop_lrml4:P_sol}
    Let $\bm{M}\bm{\Sigma}\bm{M}^T$ be a rank $k$ truncated singular value decomposition for the matrix given by:
    \[\Big{(} \lambda \bm{Y}\bm{Y}^T+\frac{\rho_1}{2}\bm{Z}\bm{Z}^T+\frac{1}{2}(\bm{\Phi}\bm{Z}^T+\bm{Z}\bm{\Phi}^T) \Big{)},\] where $\bm{\Sigma} \in \mathbb{R}^{k \times k}, \bm{M} \in \mathbb{R}^{n \times k}, \bm{M}^T\bm{M} = \bm{I}_k$.The optimal solution $\bm{\Bar{P}}$ for \eqref{opt_lrml4:P_prob} is given by $\bm{\Bar{P}} = \bm{M}\bm{M}^T$.
\end{proposition}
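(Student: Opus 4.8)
The plan is to reduce the objective of \eqref{opt_lrml4:P_prob} to a single term that is linear in $\bm{P}$, and then recognize the resulting problem as a trace maximization over orthogonal projections, which is governed by the Ky Fan variational principle.

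First I would exploit the projection structure to simplify the quadratic penalty. Since any feasible $\bm{P}$ is symmetric and satisfies $\bm{P}^2 = \bm{P}$, the matrix $\bm{I}_n - \bm{P}$ is itself an orthogonal projection, so $(\bm{I}_n - \bm{P})^2 = \bm{I}_n - \bm{P}$. Expanding the Frobenius norm then gives
\[\Vert (\bm{I}_n - \bm{P})\bm{Z} \Vert_F^2 = \text{Tr}(\bm{Z}^T(\bm{I}_n - \bm{P})\bm{Z}) = \Vert \bm{Z} \Vert_F^2 - \text{Tr}(\bm{P}\bm{Z}\bm{Z}^T).\]
Next I would apply the cyclic property of the trace to the first two terms, writing $\text{Tr}(\bm{Y}^T\bm{P}\bm{Y}) = \text{Tr}(\bm{P}\bm{Y}\bm{Y}^T)$ and $\text{Tr}(\bm{\Phi}^T\bm{P}\bm{Z}) = \text{Tr}(\bm{P}\bm{Z}\bm{\Phi}^T)$. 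Because $\bm{P}$ is symmetric, $\text{Tr}(\bm{P}\bm{Z}\bm{\Phi}^T) = \tfrac{1}{2}\text{Tr}(\bm{P}(\bm{Z}\bm{\Phi}^T + \bm{\Phi}\bm{Z}^T))$, which symmetrizes the cross term. Collecting the pieces and discarding the constant $\tfrac{\rho_1}{2}\Vert \bm{Z}\Vert_F^2$, the objective becomes $-\text{Tr}(\bm{P}\bm{C})$, where $\bm{C}$ is exactly the symmetric matrix appearing in the statement.

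With this reduction in hand, \eqref{opt_lrml4:P_prob} is equivalent to maximizing $\text{Tr}(\bm{P}\bm{C})$ subject to $\bm{P}$ being an orthogonal projection with $\text{Tr}(\bm{P}) \leq k$. I would then write $\bm{P} = \sum_{j} \bm{p}_j \bm{p}_j^T$ for an orthonormal system $\{\bm{p}_j\}$, so that $\text{Tr}(\bm{P}\bm{C}) = \sum_j \bm{p}_j^T \bm{C}\bm{p}_j$, and invoke the variational characterization of the sum of the largest eigenvalues of a symmetric matrix: over orthonormal systems of size at most $k$, this sum is maximized by choosing the $\bm{p}_j$ to be leading eigenvectors of $\bm{C}$. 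The optimal $\bm{\Bar{P}}$ therefore projects onto the invariant subspace spanned by the top eigenvectors collected in $\bm{M}$, i.e.\ $\bm{\Bar{P}} = \bm{M}\bm{M}^T$, matching the claimed form.

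The main obstacle I anticipate is the careful handling of eigenvalue signs. Because $\bm{C}$ contains the indefinite cross term $\tfrac{1}{2}(\bm{\Phi}\bm{Z}^T + \bm{Z}\bm{\Phi}^T)$, it need not be positive semidefinite, and since the constraint is $\text{Tr}(\bm{P}) \leq k$ rather than an equality, the truly optimal projection should retain only those leading eigenvectors whose eigenvalues are positive. I would make precise that the rank $k$ truncated decomposition $\bm{M}\bm{\Sigma}\bm{M}^T$ is to be read as the decomposition onto the $k$ algebraically largest eigenvalues of $\bm{C}$, and then verify that including an eigenvector associated with a nonpositive eigenvalue can never strictly increase $\text{Tr}(\bm{P}\bm{C})$, so that $\bm{\Bar{P}} = \bm{M}\bm{M}^T$ is indeed optimal for \eqref{opt_lrml4:P_prob}.
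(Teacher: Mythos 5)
Your proposal is correct and follows essentially the same route as the paper's proof: both use the feasibility conditions $\bm{P}^T=\bm{P}$, $\bm{P}^2=\bm{P}$ to collapse the objective of \eqref{opt_lrml4:P_prob} to a constant minus a linear term $\langle \bm{C}, \bm{P}\rangle$ with $\bm{C}$ symmetrized exactly as you describe, and then identify the maximizer over projections with the top-$k$ eigenspace. The only methodological difference is that where you invoke the Ky Fan variational principle as a known result, the paper proves that bound inline, writing any rank-$k$ projection as $\bm{P}=\bm{L}\bm{L}^T$ with $\bm{L}^T\bm{L}=\bm{I}_k$ and bounding $\langle \bm{\Bar{C}},\bm{P}\rangle=\sum_i \sigma_i \Vert(\bm{M}^T\bm{L})_{i,\star}\Vert_2^2$ via the observation that $\bm{M}^T\bm{L}$ has orthonormal columns, so its squared row norms lie in $[0,1]$ and sum to $k$. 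Your closing concern about signs is genuine but cuts slightly differently than you state: since the constraint is $\text{Tr}(\bm{P})\leq k$ rather than equality, a strictly negative eigenvalue among the top $k$ would make a lower-rank projection \emph{strictly better}, so $\bm{M}\bm{M}^T$ is optimal only when the $k$ leading eigenvalues are nonnegative; the paper's own proof silently skips this same edge case (it restricts attention to rank-exactly-$k$ projections and writes $\bm{\Bar{C}}=\bm{M}\bm{\Sigma}\bm{M}^T$ as a "singular value decomposition," which implicitly presumes $\bm{\Bar{C}}\succeq 0$), so your treatment is, if anything, the more careful of the two.
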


\begin{proof}
    Let $f(\bm{P})$ denote the objective function of \eqref{opt_lrml4:P_prob}. Observe that for any $\bm{P}$ that is feasible to \eqref{opt_lrml4:P_prob}, we can write $f(\bm{P})$ as: 
    \begin{equation*}
        \begin{aligned}
            f(\bm{P}) &= -\lambda \text{Tr}(\bm{Y}^T\bm{P}\bm{Y}) - \text{Tr}(\bm{\Phi}^T\bm{P}\bm{Z})
        +\frac{\rho_1}{2}\Vert (\bm{I}_n-\bm{P})\bm{Z} \Vert_F^2 \\
        &= -\lambda \text{Tr}(\bm{Y}\bm{Y}^T\bm{P}) - \text{Tr}(\bm{Z}\bm{\Phi}^T\bm{P})
        +\frac{\rho_1}{2}\text{Tr}(\bm{Z}\bm{Z}^T(\bm{I}_n-\bm{P})) \\
        &= \frac{\rho_1}{2}\text{Tr}(\bm{Z}\bm{Z}^T)- \langle \lambda\bm{Y}\bm{Y}^T + \bm{Z}\bm{\Phi}^T+\frac{\rho_1}{2}\bm{Z}\bm{Z}^T, \bm{P} \rangle.
        \end{aligned}
    \end{equation*} Thus, it is immediately clear that a solution will be optimal to \eqref{opt_lrml4:P_prob} if and only if it is optimal to the problem given by:
    \begin{equation}
        \begin{aligned}
            \max_{\bm{P} \in \mathbb{S}^n_+} \langle \bm{C}, \bm{P} \rangle \quad \text{s.t.} \quad \bm{P}^2=\bm{P}, \, \text{Tr}(\bm{P}) \leq k,
        \end{aligned} \label{opt_lrml4:P_prob_reduced}
    \end{equation} where we define the matrix $\bm{C} \in \mathbb{R}^{n \times n}$ as $\bm{C} = \lambda\bm{Y}\bm{Y}^T + \bm{Z}\bm{\Phi}^T+\frac{\rho_1}{2}\bm{Z}\bm{Z}^T$. Let $\bm{\Bar{C}} = \frac{1}{2}(\bm{C}+\bm{C}^T)$ denote the symmetric part of $\bm{C}$. Observe that for any symmetric matrix $\bm{P}$, we can consider $\langle \bm{\Bar{C}}, \bm{P} \rangle$ in place of $\langle \bm{C}, \bm{P} \rangle$ since we have
    \begin{equation*}
        \begin{aligned}
            \langle \bm{C}, \bm{P} \rangle &= \sum_{i=1}^n\sum_{j=1}^nP_{ij}C_{ij}=\sum_{i=1}^nP_{ii}C_{ii}+\sum_{i=1}^{n-1}\sum_{j=i+1}^nP_{ij}(C_{ij}+C_{ji}) \\
            &= \sum_{i=1}^nP_{ii}\Bar{C}_{ii}+2\sum_{i=1}^{n-1}\sum_{j=i+1}^nP_{ij}\Bar{C}_{ij}=\sum_{i=1}^n\sum_{j=1}^nP_{ij}\Bar{C}_{ij}=\langle \bm{\Bar{C}}, \bm{P} \rangle.
        \end{aligned}
    \end{equation*} Let $\bm{\Bar{C}} = \bm{M}\bm{\Sigma}\bm{M}^T$ be a full singular value decomposition of $\bm{\Bar{C}}$ with $\bm{M}, \bm{\Sigma} \in \mathbb{R}^{n \times n}, \bm{M}^T\bm{M} = \bm{M}\bm{M}^T=\bm{I}_n$. The matrix $\bm{\Sigma}$ is the diagonal matrix of (ordered) singular values of $\bm{\Bar{C}}$ and we let $\sigma_i$ denote the $i^{th}$ singular value. Any feasible matrix $\bm{P}$ to \eqref{opt_lrml4:P_prob_reduced} can be written as $\bm{P} = \bm{L}\bm{L}^T$ where $\bm{L} \in \mathbb{R}^{n \times k}, \bm{L}^T\bm{L} = \bm{I}_k$. Thus, for any $\bm{P}$ feasible to \eqref{opt_lrml4:P_prob_reduced} we express the objective value as:
    \begin{equation*}
            \langle \bm{\Bar{C}}, \bm{P} \rangle = \text{Tr}(\bm{M}\bm{\Sigma}\bm{M}^T\bm{L}\bm{L}^T) = \text{Tr}(\bm{\Sigma}(\bm{M}^T\bm{L}\bm{L}^T\bm{M})) = \sum_{i=1}^n \sigma_i \Vert (\bm{M}^T\bm{L})_{i, \star} \Vert_2^2.
    \end{equation*} Let $\bm{N} = \bm{M}^T\bm{L} \in \mathbb{R}^{n \times k}$. Note that we have $\bm{N}^T\bm{N} = \bm{L}^T\bm{M}\bm{M}^T\bm{L}=\bm{L}^T\bm{L}=\bm{I}_k$, which implies that the columns of $\bm{N}$ are orthonormal. This immediately implies that we have $N_{i,\star}^TN_{i, \star} = \Vert (\bm{M}^T\bm{L})_{i, \star} \Vert_2^2 \leq 1$. Moreover, we have \[\sum_{i=1}^n \Vert (\bm{M}^T\bm{L})_{i, \star} \Vert_2^2 = \sum_{i=1}^n N_{i,\star}^TN_{i, \star} = \sum_{i=1}^n\sum_{j=1}^kN_{ij}^2=\sum_{j=1}^k\sum_{i=1}^nN_{ij}^2 =\sum_{j=1}^k 1=k.\] We can therefore upper bound the optimal objective value of \eqref{opt_lrml4:P_prob_reduced} as \[\langle \bm{\Bar{C}}, \bm{P} \rangle = \sum_{i=1}^n \sigma_i \Vert (\bm{M}^T\bm{L})_{i, \star} \Vert_2^2 \leq \sum_{i=1}^k \sigma_i.\] To conclude the proof, notice that by taking $\bm{\Bar{P}} = \bm{\Bar{M}}\bm{\Bar{M}}^T$ where $\bm{\Bar{M}} \in \mathbb{R}^{n \times k}$ is the matrix that consists of the first $k$ columns of $\bm{M}$ we can achieve the upper bound on \eqref{opt_lrml4:P_prob_reduced}:
    \[\langle \bm{\Bar{C}}, \bm{\Bar{P}} \rangle = \text{Tr}(\bm{M}\bm{\Sigma}\bm{M}^T\bm{\Bar{M}}\bm{\Bar{M}}^T) = \text{Tr}(\bm{\Bar{M}}^T\bm{M}\bm{\Sigma}\bm{M}^T\bm{\Bar{M}}) = \sum_{i=1}^k\sigma_i.\] \end{proof}

To compute the optimal solution of \eqref{opt_lrml4:P_prob}, we need to compute a rank $k$ singular value decomposition of the matrix $\bm{\Bar{C}} = \big{(} \lambda \bm{Y}\bm{Y}^T+\frac{\rho_1}{2}\bm{Z}\bm{Z}^T+\frac{1}{2}(\bm{\Phi}\bm{Z}^T+\bm{Z}\bm{\Phi}^T) \big{)}$ which requires $O(kn^2)$ time since $\bm{\Bar{C}} \in \mathbb{R}^{n \times n}$. Moreover, explicitly forming the matrix $\bm{\Bar{C}}$ in memory from its constituent matrices $\bm{Y}, \bm{Z}, \bm{\Phi}$ requires $O(n^2(d+k))$ operations. Thus, naively computing the optimal solution to \eqref{opt_lrml4:P_prob} has complexity $O(n^2(d+k))$ where the bottleneck operation from a complexity standpoint is explicitly forming the matrix $\bm{\Bar{C}}$.

Fortunately, it is possible to compute the optimal solution to \eqref{opt_lrml4:P_prob} more efficiently. Observe that we can equivalently express the matrix $\bm{\Bar{C}}$ as $\bm{\Bar{C}} = \bm{F}_1\bm{F}_2^T$ where $\bm{F}_1, \bm{F}_2 \in \mathbb{R}^{n \times (d + 3k)}$ are defined as
\begin{equation*}
    \begin{aligned}
        \bm{F}_1 = \begin{pmatrix}\sqrt{\lambda} \bm{Y} & \vline & \sqrt{\frac{\rho_1}{2}}\bm{Z} & \vline & \sqrt{\frac{1}{2}}\bm{\Phi} & \vline & \sqrt{\frac{1}{2}}\bm{Z} \end{pmatrix}, \\ \bm{F}_2 = \begin{pmatrix}\sqrt{\lambda} \bm{Y} & \vline & \sqrt{\frac{\rho_1}{2}}\bm{Z} & \vline & \sqrt{\frac{1}{2}}\bm{Z} & \vline & \sqrt{\frac{1}{2}}\bm{\Phi} \end{pmatrix}.
    \end{aligned}
\end{equation*} Computing a truncated singular value decomposition requires only computing repeated matrix vector products. Therefore, rather than explicitly forming the matrix $\bm{\Bar{C}}$ in memory at a cost of $O(n^2(d+k))$ operations, in our implementation we design a custom matrix class where matrix vector products between $\bm{\Bar{C}}$ and arbitrary vectors $\bm{x} \in \mathbb{R}^n$ are computed by first evaluating the matrix vector product $\bm{\nu} = \bm{F}_2^T\bm{x}$ and subsequently evaluating the matrix vector product $\bm{\Bar{C}}\bm{x}=\bm{F}_1\bm{\nu}$. In so doing, we can evaluate matrix vector products $\bm{\Bar{C}}\bm{x}$ in $O(n(d+k))$ time rather than $O(n^2)$ time (in general, we will have $d+k \ll n$). Computing a truncated singular value decomposition of $\bm{\Bar{C}}$ with this methodology of evaluating matrix vector products requires only $O(k^2n+knd)$ operations. Thus, our custom matrix class implementation avoids needing to explicitly for $\bm{\Bar{C}}$ in memory and allows the optimal solution to \eqref{opt_lrml4:P_prob} to be computed in $O(k^2n+knd)$ time.

\subsection{Subproblem in \texorpdfstring{$\bm{Z}$}{Z}}

Now, suppose we fix variables $\bm{U}, \bm{V}, \bm{P}, \bm{\Phi}, \bm{\Psi}$ and seek to minimize $\mathcal{L}^A(\bm{U}, \bm{V}, \bm{P}, \bm{Z}, \bm{\Phi}, \bm{\Psi})$ over $\bm{Z}$. Eliminating terms that do not depend on $\bm{Z}$, the resulting subproblem is given by

\begin{equation}
    \begin{aligned}
        \min_{\bm{Z} \in \mathbb{R}^{n \times k}} \text{Tr}(\bm{\Phi}^T(\bm{I}_n-\bm{P})\bm{Z}) + \text{Tr}(\bm{\Psi}^T\bm{Z})
        +\frac{\rho_1}{2}\Vert (\bm{I}_n-\bm{P})\bm{Z} \Vert_F^2 +\frac{\rho_2}{2}\Vert \bm{Z} - \bm{U} \Vert_F^2.
    \end{aligned} \label{opt_lrml4:Z_prob}
\end{equation}
We now have the following result:

\begin{proposition} \label{prop_lrml4:Z_sol}
    The optimal solution $\bm{\Bar{Z}}$ for \eqref{opt_lrml4:Z_prob} is given by
    \begin{equation} \label{eq:opt_Z}
        \begin{aligned}
        \bm{\Bar{Z}} &= \frac{1}{\rho_1+\rho_2}\Big{(}\bm{I}_n+\frac{\rho_1}{\rho_2}\bm{P}\Big{)}\Big{(}\rho_2\bm{U}-(\bm{I}_n-\bm{P})\bm{\Phi} - \bm{\Psi}\Big{)} \\
        &= \frac{1}{\rho_1+\rho_2}\Big{(}\rho_2\bm{U}-\bm{\Phi}+\bm{P}\bm{\Phi}-\bm{\Psi}+\rho_1\bm{P}\bm{U}-\frac{\rho_1}{\rho_2}\bm{P}\bm{\Psi}\Big{)}.
        \end{aligned}
    \end{equation}
\end{proposition}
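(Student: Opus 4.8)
The plan is to exploit the fact that the objective of \eqref{opt_lrml4:Z_prob} is a strongly convex quadratic function of $\bm{Z}$: the term $\frac{\rho_2}{2}\Vert \bm{Z} - \bm{U}\Vert_F^2$ contributes $\rho_2 \bm{I}_n \succ 0$ to the Hessian, while every remaining term is convex in $\bm{Z}$. Consequently the problem admits a unique global minimizer characterized by the first-order stationarity condition $\nabla_{\bm{Z}}\mathcal{L}^A = \bm{0}_{n \times k}$, so it suffices to compute this gradient, set it to zero, and solve the resulting linear matrix equation for $\bm{Z}$.

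First I would differentiate each term with respect to $\bm{Z}$, using that $\bm{P}$ (and hence $\bm{I}_n-\bm{P}$) is symmetric so that $\nabla_{\bm{Z}}\text{Tr}(\bm{\Phi}^T(\bm{I}_n-\bm{P})\bm{Z}) = (\bm{I}_n-\bm{P})\bm{\Phi}$, and that $\bm{P}$ is idempotent so $(\bm{I}_n-\bm{P})^2 = \bm{I}_n-\bm{P}$, which yields $\nabla_{\bm{Z}}\tfrac{\rho_1}{2}\Vert(\bm{I}_n-\bm{P})\bm{Z}\Vert_F^2 = \rho_1(\bm{I}_n-\bm{P})\bm{Z}$. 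Together with $\nabla_{\bm{Z}}\text{Tr}(\bm{\Psi}^T\bm{Z}) = \bm{\Psi}$ and $\nabla_{\bm{Z}}\tfrac{\rho_2}{2}\Vert \bm{Z}-\bm{U}\Vert_F^2 = \rho_2(\bm{Z}-\bm{U})$, collecting terms turns the stationarity condition into the linear matrix equation
\[\big[(\rho_1+\rho_2)\bm{I}_n - \rho_1\bm{P}\big]\bm{Z} = \rho_2\bm{U} - (\bm{I}_n-\bm{P})\bm{\Phi} - \bm{\Psi}.\]

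The crux of the argument, and the step I expect to be the main obstacle, is inverting the coefficient matrix $\bm{A} \coloneqq (\rho_1+\rho_2)\bm{I}_n - \rho_1\bm{P}$ in closed form. The key observation is once again that $\bm{P}$ is an orthogonal projection: on the range of $\bm{P}$ the matrix $\bm{A}$ acts as multiplication by $\rho_2$, whereas on its orthogonal complement $\bm{A}$ acts as multiplication by $\rho_1+\rho_2$. This motivates the ansatz $\bm{A}^{-1} = \tfrac{1}{\rho_2}\bm{P} + \tfrac{1}{\rho_1+\rho_2}(\bm{I}_n-\bm{P})$, which I would verify directly by forming $\bm{A}\bm{A}^{-1}$ and simplifying with $\bm{P}^2 = \bm{P}$ and $\bm{P}(\bm{I}_n-\bm{P}) = \bm{0}_{n \times n}$ to recover $\bm{I}_n$. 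Rewriting this inverse as $\tfrac{1}{\rho_1+\rho_2}\big(\bm{I}_n + \tfrac{\rho_1}{\rho_2}\bm{P}\big)$ and left-multiplying the right-hand side gives precisely the first displayed expression for $\bm{\Bar{Z}}$ in \eqref{eq:opt_Z}.

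Finally, to recover the second (expanded) expression in \eqref{eq:opt_Z}, I would distribute the product $\big(\bm{I}_n + \tfrac{\rho_1}{\rho_2}\bm{P}\big)\big(\rho_2\bm{U} - (\bm{I}_n-\bm{P})\bm{\Phi} - \bm{\Psi}\big)$ term by term, invoking idempotence once more so that the cross term $\tfrac{\rho_1}{\rho_2}\bm{P}(\bm{I}_n-\bm{P})\bm{\Phi}$ vanishes, and then collect the surviving terms. The remainder is routine algebra; all of the genuine content of the proof is concentrated in recognizing and exploiting the projection structure $\bm{P}^2 = \bm{P}$ to invert $\bm{A}$.
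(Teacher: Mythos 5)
Your proposal is correct, and its skeleton matches the paper's: both reduce the problem to the stationarity condition of a convex quadratic, simplify the gradient using $\bm{P}^T = \bm{P}$ and $\bm{P}^2 = \bm{P}$, and arrive at the same linear equation $\big(\rho_1(\bm{I}_n-\bm{P})+\rho_2\bm{I}_n\big)\bm{Z} = \rho_2\bm{U}-(\bm{I}_n-\bm{P})\bm{\Phi}-\bm{\Psi}$. Where you genuinely diverge is the crux step of inverting the coefficient matrix. The paper factorizes $\bm{P} = \bm{M}\bm{M}^T$ with $\bm{M}^T\bm{M} = \bm{I}_k$ and invokes the Woodbury matrix inversion lemma to obtain $\frac{1}{\rho_1+\rho_2}\big(\bm{I}_n+\frac{\rho_1}{\rho_2}\bm{P}\big)$; you instead read off the spectral action of the coefficient matrix (multiplication by $\rho_2$ on $\mathrm{range}(\bm{P})$, by $\rho_1+\rho_2$ on its orthogonal complement), posit the ansatz $\frac{1}{\rho_2}\bm{P}+\frac{1}{\rho_1+\rho_2}(\bm{I}_n-\bm{P})$, and verify it by direct multiplication using only $\bm{P}^2=\bm{P}$. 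Your route is more elementary and self-contained — it needs no external lemma and no factorization of $\bm{P}$ — and it makes the eigenstructure of the update transparent; the paper's Woodbury route is the one that generalizes when the perturbation is a low-rank matrix that is not a projection. A further small improvement on your side: you observe that the $\frac{\rho_2}{2}\Vert\bm{Z}-\bm{U}\Vert_F^2$ term makes the objective $\rho_2$-strongly convex, so the minimizer is unique, a point the paper leaves implicit by calling the objective merely a convex quadratic.
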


\begin{proof}
    Let $f(\bm{Z})$ denote the objective function of \eqref{opt_lrml4:Z_prob}. The function $f(\bm{Z})$ is a convex quadratic, thus it can be minimized by setting its gradient to $0$. Differentiating $f(\bm{Z})$, we obtain: \[ \nabla_{\bm{Z}}f(\bm{Z}) = (\bm{I}_n-\bm{P})^T\bm{\Phi}+\bm{\Psi}+\rho_1(\bm{I}_n-\bm{P})^T(\bm{I}_n-\bm{P})\bm{Z}+\rho_2(\bm{Z}-\bm{U}).\] Moreover, for any matrix $\bm{P}$ for which the augmented Lagrangian \eqref{eq:lagrangian} takes finite value, we will have $\bm{P} \in \mathcal{P}_k$ which implies that $\bm{P}^T=\bm{P}$ and $\bm{P}^2=\bm{P}$. We can therefore simplify $\nabla_{\bm{Z}}f(\bm{Z})$ as:
    \[ \nabla_{\bm{Z}}f(\bm{Z}) = (\bm{I}_n-\bm{P})\bm{\Phi}+\bm{\Psi}+\rho_1(\bm{I}_n-\bm{P})\bm{Z}+\rho_2(\bm{Z}-\bm{U}).\] By equating the gradient $\nabla_{\bm{Z}}f(\bm{Z})$ to $0$ and rearranging, we obtain that the optimal matrix $\bm{\Bar{Z}}$ is given by: \[\bm{\Bar{Z}}=\Big{(}\rho_1(\bm{I}_n-\bm{P})+\rho_2\bm{I}_n\Big{)}^{-1}\Big{(}\rho_2\bm{U}-(\bm{I}_n-\bm{P})\bm{\Phi} - \bm{\Psi}\Big{)}\] To conclude the proof, it remains to show that $\big{(}\rho_1(\bm{I}_n-\bm{P})+\rho_2\bm{I}_n\big{)}^{-1} = \frac{1}{\rho_1+\rho_2}\big{(}\bm{I}_n+\frac{\rho_1}{\rho_2}\bm{P}\big{)}$. Let $\bm{P} = \bm{M}\bm{M}^T$ where $\bm{M} \in \mathbb{R}^{n \times k}, \bm{M}^T\bm{M}=\bm{I}_k$. Such a matrix $\bm{M}$ is guaranteed to exist for any $\bm{P} \in \mathcal{P}_k$. We have 
    \begin{equation*}
        \begin{aligned}
            \rho_1(\bm{I}_n-\bm{P})+\rho_2\bm{I}_n &= \rho_1(\bm{I}_n-\bm{M}\bm{M}^T)+\rho_2\bm{I}_n \\
            &= (\rho_1 + \rho_2)\bm{I}_n+ \bm{M}(-\rho_1\bm{I}_n)\bm{M}^T \\
            &= \frac{1}{\rho_1+\rho_2}\bm{I}_n-\frac{1}{(\rho_1+\rho_2)^2}\bm{M}\bigg{(}\frac{1}{\rho_1+\rho_2}\bm{M}^T\bm{M}-\frac{1}{\rho_1}\bm{I}_k\bigg{)}^{-1}\bm{M}^T \\
            &= \frac{1}{\rho_1+\rho_2}\bm{I}_n-\frac{1}{(\rho_1+\rho_2)^2}\bm{M}\bigg{(}\frac{-\rho_1(\rho_1+\rho_2)}{\rho_2}\bm{I}_k\bigg{)}\bm{M}^T \\
            &= \frac{1}{\rho_1+\rho_2}\Big{(}\bm{I}_n+\frac{\rho_1}{\rho_2}\bm{P}\Big{)},
        \end{aligned}
    \end{equation*} where the third equality follows from the Woodbury matrix inversion lemma (see \cite{petersen2008matrix}, Section 3.2.2). As a sanity check, one can verify that the product of $\big{(}\rho_1(\bm{I}_n-\bm{P})+\rho_2\bm{I}_n\big{)}$ and $\frac{1}{\rho_1+\rho_2}\big{(}\bm{I}_n+\frac{\rho_1}{\rho_2}\bm{P}\big{)}$ is indeed the $n$ dimensional identity matrix.
\end{proof}

Evaluating the optimal solution to \eqref{opt_lrml4:Z_prob} requires only matrix-matrix multiplications. Computing the products of $\bm{P}\bm{\Phi}, \bm{P}\bm{U}, \bm{P}\bm{\Psi}$ in the definition of $\bm{\Bar{Z}}$ from \eqref{eq:opt_Z} requires $O(kn^2)$ operations. Thus, the naive cost of forming $\bm{\Bar{Z}}$ is $O(kn^2)$. However, notice that if we had a factored representation of the matrix $\bm{P}$ as $\bm{P} = \bm{M}\bm{M}^T$ with $\bm{M} \in \mathbb{R}^{n \times k}$, for any matrix $\bm{R} \in \mathbb{R}^{n \times k}$ we could compute matrix-matrix products $\bm{P}\bm{R}$ by first computing $\bm{S} = \bm{M}^T\bm{R}$ and thereafter computing $\bm{P}\bm{R} = \bm{M}\bm{S}$ for a total complexity of $O(k^2n)$. One might object that this ignores the time required to compute such a matrix $\bm{M}$. However, observe that in computing a matrix $\bm{P}$ that is optimal to \eqref{opt_lrml4:P_prob}, we in fact must already generate such a matrix $\bm{M}$ (see proposition \ref{prop_lrml4:P_sol}). In fact, in our implementation we never explicitly form a $n \times n$ matrix $\bm{P}$ as it suffices to only store a copy of its low rank factorization matrix $\bm{M}$. Thus, the optimal solution to \eqref{opt_lrml4:Z_prob} can be evaluated in $O(k^2n)$ time.

\subsection{An ADMM Algorithm}

Having illustrated that the partial minimization of the Lagrangian $\eqref{eq:lagrangian}$ across each of the primal variables (Problems \eqref{opt_lrml4:U_prob}, \eqref{opt_lrml4:V_prob}, \eqref{opt_lrml4:P_prob}, \eqref{opt_lrml4:Z_prob}) can be solved efficiently, we can now present the overall approach Algorithm \ref{alg:ADMM}.

\begin{algorithm}
\caption{Mixed-Projection ADMM}\label{alg:ADMM}
\begin{algorithmic}[1]
\Require $n, m, k \in \mathbb{Z}_+, \Omega \subset [n] \times [m], \{A_{ij}\}_{(i, j) \in \Omega}, \lambda, \gamma \in \mathbb{R}_+$. Tolerance parameter $\epsilon > 0$. Maximum iteration parameter $T \in \mathbb{Z}_+$.
\Ensure $(\bm{\Bar{U}}, \bm{\Bar{V}}, \bm{\Bar{P}})$ that is feasible to \eqref{opt_lrml4:MC_primal_UVPk}.
\State $(\bm{U}_0, \bm{P}_0, \bm{V}_0, \bm{Z}_0) \xleftarrow[]{} (\bm{L}\bm{\Sigma}^{\frac{1}{2}}, \bm{L}\bm{L}^T, \bm{R}\bm{\Sigma}^{\frac{1}{2}}, \bm{L}\bm{\Sigma}^{\frac{1}{2}})$ where $\bm{L}\bm{\Sigma}\bm{R}$ is a rank $k$ truncated SVD of $\bm{A}$ and missing entries are filled in with $0$;
\State $(\bm{\Phi}_0, \bm{\Psi}_0) \xleftarrow[]{} (\bm{1}_{n \times k}, \bm{1}_{n \times k})$;
\State $t \xleftarrow[]{} 0$;
\While{$t < T$ and $\max \{\Vert (\bm{I}_n-\bm{P}_t)\bm{Z}_t\Vert_F^2, \Vert\bm{Z}_t-\bm{U}_t \Vert_F^2\} > \epsilon$}
    \State $(\bm{U}_{t+1}, \bm{P}_{t+1}) \xleftarrow[]{} \argmin_{\bm{U}, \bm{P}}\mathcal{L}^A(\bm{U}, \bm{V}_t,\bm{P}, \bm{Z}_t, \bm{\Phi}_t, \bm{\Psi}_t) $;
    \State $(\bm{V}_{t+1}, \bm{Z}_{t+1}) \xleftarrow[]{} \argmin_{\bm{V}, \bm{Z}}\mathcal{L}^A(\bm{U}_{t+1}, \bm{V},\bm{P}_{t+1}, \bm{Z}, \bm{\Phi}_t, \bm{\Psi}_t) $;
    \State $\bm{\Phi}_{t+1} \xleftarrow[]{} \bm{\Phi}_{t} + \rho_1 (\bm{I} - \bm{P}_{t+1})\bm{Z}_{t+1}$;
    \State $\bm{\Psi}_{t+1} \xleftarrow[]{} \bm{\Psi}_{t} + \rho_2 (\bm{Z}_{t+1} - \bm{U}_{t+1})$;
    \State $t\xleftarrow[]{} t+1 $;
\EndWhile
\State Return $(\bm{U}_t, \bm{V}_t, \bm{P}_t)$.
\end{algorithmic}
\end{algorithm} We initialize primal iterates $\bm{U}_0 = \bm{Z}_0 = \bm{L}\bm{\Sigma}^{\frac{1}{2}}, \bm{P}_0 = \bm{L}\bm{L}^T, \bm{V}_0 = \bm{R}\bm{\Sigma}^{\frac{1}{2}}$ where $\bm{L}\bm{\Sigma}\bm{R}$ denotes a rank $k$ truncated singular value decomposition of $\bm{A}$ (the missing entries of $\bm{A}$ are filled in with $0$s) and we initialize dual iterates $\bm{\Phi}_0 = \bm{\Psi}_0 = \bm{1}_{n \times k}$. Observe that the subproblems \eqref{opt_lrml4:U_prob} and \eqref{opt_lrml4:P_prob} can be solved simultaneously. Similarly, the subprobems \eqref{opt_lrml4:V_prob} and \eqref{opt_lrml4:Z_prob} can be solved simultaneously. At each iteration of Algorithm \ref{alg:ADMM}, we first update the iterates $\bm{U}_{t+1}, \bm{P}_{t+1}$ by solving problems \eqref{opt_lrml4:U_prob} and \eqref{opt_lrml4:P_prob} with $(\bm{V}_t, \bm{Z}_t, \bm{\Phi}_t, \bm{\Psi}_t)$ fixed. Next, we update the iterates $\bm{V}_{t+1}, \bm{Z}_{t+1}$ by solving problems \eqref{opt_lrml4:V_prob} and \eqref{opt_lrml4:Z_prob} with $(\bm{U}_{t+1}, \bm{P}_{t+1}, \bm{\Phi}_t, \bm{\Psi}_t)$ fixed. Finally, we update the dual iterates $\bm{\Phi}, \bm{\Psi}$ by taking a gradient ascent step. The gradients of the augmented Lagrangian \eqref{eq:lagrangian} with respect to $\bm{\Phi}$ and $\bm{\Psi}$ are given by the primal residuals $(\bm{I}_n-\bm{P}_{t+1})\bm{Z}_{t+1}$ and $\bm{Z}_{t+1}-\bm{U}_{t+1}$ respectively. We use $\rho_1$ and $\rho_2$ respectively as the step size. We proceed until the squared norm of each primal residual is below a numerical tolerance parameter $\epsilon$ or until we reach an input maximum number of iterations $T$. We know have the following result:
\begin{proposition}
Assume that the number of compute threads $w$ is less than $\min \{n, m\}$. The per iteration complexity of Algorithm \ref{alg:ADMM} is $O\big{(}k^2n+knd+\frac{k^3(n+m)+k^2nm}{w}\big{)}$.
\end{proposition}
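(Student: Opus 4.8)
The plan is to decompose a single iteration of Algorithm \ref{alg:ADMM} into its constituent operations---the four primal subproblem solves in lines 5--6 and the two dual ascent updates in lines 7--8---and to sum the individual complexities, most of which have already been established. From the analysis following Propositions \ref{prop_lrml4:U_sol}, \ref{prop_lrml4:V_sol}, \ref{prop_lrml4:P_sol} and \ref{prop_lrml4:Z_sol}, solving the $\bm{U}$ subproblem \eqref{opt_lrml4:U_prob} costs $O\big(\tfrac{k^3n+k^2mn}{\min\{w,n\}}\big)$, solving the $\bm{V}$ subproblem \eqref{opt_lrml4:V_prob} costs $O\big(\tfrac{k^3m+k^2mn}{\min\{w,m\}}\big)$, solving the $\bm{P}$ subproblem \eqref{opt_lrml4:P_prob} costs $O(k^2n+knd)$ (using the factored matrix-vector product implementation of $\bm{\Bar{C}} = \bm{F}_1\bm{F}_2^T$), and solving the $\bm{Z}$ subproblem \eqref{opt_lrml4:Z_prob} costs $O(k^2n)$ (using the factorization $\bm{P}=\bm{M}\bm{M}^T$). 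The first step is simply to collect these four bounds.

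Next I would account for the two dual updates, which were not assigned individual complexities. The update $\bm{\Phi}_{t+1} \leftarrow \bm{\Phi}_t + \rho_1(\bm{I}_n-\bm{P}_{t+1})\bm{Z}_{t+1}$ requires forming $(\bm{I}_n-\bm{P}_{t+1})\bm{Z}_{t+1}$; since Algorithm \ref{alg:ADMM} never materializes $\bm{P}$ explicitly but retains its factor $\bm{M}\in\mathbb{R}^{n\times k}$ with $\bm{M}^T\bm{M}=\bm{I}_k$, the product $\bm{P}_{t+1}\bm{Z}_{t+1}$ can be computed as $\bm{M}(\bm{M}^T\bm{Z}_{t+1})$ in $O(k^2n)$ time, after which the subtraction and scaling are $O(nk)$. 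The update $\bm{\Psi}_{t+1} \leftarrow \bm{\Psi}_t + \rho_2(\bm{Z}_{t+1}-\bm{U}_{t+1})$ is a scaled matrix subtraction costing $O(nk)$. Hence both dual updates together are dominated by $O(k^2n)$.

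Finally I would invoke the standing assumption $w < \min\{n,m\}$, which forces $\min\{w,n\}=w$ and $\min\{w,m\}=w$, so the $\bm{U}$ and $\bm{V}$ subproblem costs become $O\big(\tfrac{k^3n+k^2mn}{w}\big)$ and $O\big(\tfrac{k^3m+k^2mn}{w}\big)$ respectively. Summing all six contributions gives
\[
O\!\left(\frac{k^3n+k^2mn}{w} + \frac{k^3m+k^2mn}{w} + k^2n + knd + k^2n + k^2n\right),
\]
and combining the two $w$-scaled terms yields $\tfrac{k^3(n+m)+2k^2mn}{w}$, whose constant is absorbed into the big-$O$, while the three $O(k^2n)$ contributions and the $O(knd)$ term collapse to $k^2n+knd$. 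This is exactly the claimed bound $O\big(k^2n+knd+\tfrac{k^3(n+m)+k^2nm}{w}\big)$. The argument is essentially a bookkeeping aggregation; the only mildly delicate points---and thus the main obstacles---are verifying that the dual updates genuinely cost no more than $O(k^2n)$ by exploiting the stored factor $\bm{M}$ rather than a dense $\bm{P}$, and confirming that the threading assumption cleanly collapses both $\min$ expressions to $w$ so that no residual $\min$ term survives in the final expression.
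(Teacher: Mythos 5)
Your proposal is correct and follows essentially the same route as the paper, whose proof is simply the one-line observation that the bound follows by summing the per-subproblem complexities $O\big(\tfrac{k^3n+k^2mn}{\min\{w,n\}}\big)$, $O\big(\tfrac{k^3m+k^2mn}{\min\{w,m\}}\big)$, $O(k^2n+knd)$ and $O(k^2n)$ already derived after Propositions \ref{prop_lrml4:U_sol}--\ref{prop_lrml4:Z_sol}, with the assumption $w<\min\{n,m\}$ collapsing both $\min$ expressions to $w$. Your additional check that the dual updates cost only $O(k^2n)$ via the stored factor $\bm{M}$ is a detail the paper leaves implicit, and it is handled correctly.
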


\begin{proof}
    The result follows from the complexity analysis of problems \eqref{opt_lrml4:U_prob}, \eqref{opt_lrml4:V_prob}, \eqref{opt_lrml4:P_prob} and \eqref{opt_lrml4:Z_prob}. 
\end{proof}

Having presented Algorithm \ref{alg:ADMM} in extensive detail, it is natural to consider what types of guarantees can be made on the final output solution $(\bm{U}_T, \bm{V}_T, \bm{P}_T)$. We explore this in the following theorem:

\begin{theorem} \label{thm_lrml:convergence}
    Let $\{(\bm{U}_t, \bm{V}_t, \bm{P}_t, \bm{Z}_t, \bm{\Phi}_t, \bm{\Psi}_t)\}$ denote a sequence generated by Algorithm \ref{alg:ADMM} (assuming we allow Algorithm \ref{alg:ADMM} to iterate indefinitely). Suppose the dual variable sequence $\{(\bm{\Phi}_t, \bm{\Psi}_t)\}$ is bounded and satisfies
    \begin{equation}
        \sum_{t=0}^\infty (\Vert \bm{\Phi}_{t+1} - \bm{\Phi}_t \Vert_F^2 + \Vert \bm{\Psi}_{t+1} - \bm{\Psi}_t \Vert_F^2) < \infty. \label{eq_lrml:thm_cond}
    \end{equation} Let $(\bm{\Bar{U}}, \bm{\Bar{V}}, \bm{\Bar{P}}, \bm{\Bar{Z}}, \bm{\Bar{\Phi}}, \bm{\Bar{\Psi}})$ denote any accumulation point of $\{(\bm{U}_t, \bm{V}_t, \bm{P}_t, \bm{Z}_t, \bm{\Phi}_t, \bm{\Psi}_t)\}$. If the set of $k$ leading eigenvectors of the matrix $[\lambda \bm{Y}\bm{Y}^T+\frac{1}{2}(\bm{\Bar{\Phi}}\bm{\Bar{Z}}^T+\bm{\Bar{Z}}\bm{\Bar{\Phi}}^T)]$ is the same as the set of $k$ leading eigenvectors of the matrix $\bm{\Bar{Z}}\bm{\Bar{Z}}^T$, then $(\bm{\Bar{U}}, \bm{\Bar{V}}, \bm{\Bar{P}}, \bm{\Bar{Z}}, \bm{\Bar{\Phi}}, \bm{\Bar{\Psi}})$ satisfies the first order optimality conditions for \eqref{opt_lrml4:MC_primal_dummy}.

\end{theorem}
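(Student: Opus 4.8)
The plan is to show that every accumulation point satisfies the stationarity and feasibility conditions of \eqref{opt_lrml4:MC_primal_dummy} by passing the exact first-order conditions of the four subproblems to the limit along a convergent subsequence $\{t_j\}$. Writing $\mathcal{L}$ for the (non-augmented) Lagrangian of \eqref{opt_lrml4:MC_primal_dummy}, the conditions I must verify at $(\bm{\Bar{U}}, \bm{\Bar{V}}, \bm{\Bar{P}}, \bm{\Bar{Z}}, \bm{\Bar{\Phi}}, \bm{\Bar{\Psi}})$ are primal feasibility $(\bm{I}_n-\bm{\Bar{P}})\bm{\Bar{Z}}=\bm{0}$, $\bm{\Bar{U}}=\bm{\Bar{Z}}$, $\bm{\Bar{P}}\in\mathcal{P}_k$, together with $\nabla_{\bm{U}}\mathcal{L}=\nabla_{\bm{V}}\mathcal{L}=\nabla_{\bm{Z}}\mathcal{L}=\bm{0}$ and optimality of $\bm{\Bar{P}}$ for the linear maximization $\max_{\bm{P}\in\mathcal{P}_k}\langle\bm{D}, \bm{P}\rangle$ with $\bm{D}=\lambda\bm{Y}\bm{Y}^T+\tfrac12(\bm{\Bar{\Phi}}\bm{\Bar{Z}}^T+\bm{\Bar{Z}}\bm{\Bar{\Phi}}^T)$.

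First I would extract consequences of the two hypotheses. Since the dual updates give $\bm{\Phi}_{t+1}-\bm{\Phi}_t=\rho_1(\bm{I}_n-\bm{P}_{t+1})\bm{Z}_{t+1}$ and $\bm{\Psi}_{t+1}-\bm{\Psi}_t=\rho_2(\bm{Z}_{t+1}-\bm{U}_{t+1})$, the summability condition \eqref{eq_lrml:thm_cond} forces both primal residuals to zero; taking limits along $\{t_j\}$ then yields primal feasibility, with $\bm{\Bar{P}}\in\mathcal{P}_k$ following from closedness of $\mathcal{P}_k$. The harder preliminary step is to show the successive primal differences vanish, which I need because the $\bm{U}$- and $\bm{P}$-updates are computed from stale blocks. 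To do this I would establish a sufficient-decrease inequality for the augmented Lagrangian \eqref{eq:lagrangian}: once $\bm{V}_t,\bm{Z}_t$ are fixed, the first block decomposes separately over $\bm{U}$ (strongly convex with modulus $\gamma+\rho_2$) and $\bm{P}$; once $\bm{U}_{t+1},\bm{P}_{t+1}$ are fixed, the second block decomposes over $\bm{V}$ (modulus $\gamma$) and $\bm{Z}$ (modulus $\rho_2$, since its Hessian is $\rho_1(\bm{I}_n-\bm{P})+\rho_2\bm{I}_n\succeq\rho_2\bm{I}_n$). The dual step raises $\mathcal{L}^A$ by exactly $\tfrac1{\rho_1}\Vert\Delta\bm{\Phi}_t\Vert_F^2+\tfrac1{\rho_2}\Vert\Delta\bm{\Psi}_t\Vert_F^2$, giving
\begin{equation*}
\mathcal{L}^A_{t+1}\le \mathcal{L}^A_t-\tfrac{\gamma+\rho_2}{2}\Vert\Delta\bm{U}_t\Vert_F^2-\tfrac{\gamma}{2}\Vert\Delta\bm{V}_t\Vert_F^2-\tfrac{\rho_2}{2}\Vert\Delta\bm{Z}_t\Vert_F^2+\tfrac1{\rho_1}\Vert\Delta\bm{\Phi}_t\Vert_F^2+\tfrac1{\rho_2}\Vert\Delta\bm{\Psi}_t\Vert_F^2.
\end{equation*}
Completing the square in the coupling terms shows $\mathcal{L}^A_t\ge-\tfrac1{2\rho_1}\Vert\bm{\Phi}_t\Vert_F^2-\tfrac1{2\rho_2}\Vert\bm{\Psi}_t\Vert_F^2$, which is bounded below by the boundedness hypothesis on the dual sequence. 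Summing the inequality and invoking \eqref{eq_lrml:thm_cond} makes $\sum_t(\Vert\Delta\bm{U}_t\Vert_F^2+\Vert\Delta\bm{V}_t\Vert_F^2+\Vert\Delta\bm{Z}_t\Vert_F^2)$ finite, so these differences vanish and, together with the vanishing dual differences, the iterates at the shifted indices $t_j-1$ converge to the same accumulation point.

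With the index-shift resolved I would pass each subproblem condition to the limit. The $\bm{V}$-update uses $\bm{U}_{t_j}$ at the same index, so the stationarity behind \eqref{eq:opt_V} limits directly to $\nabla_{\bm{V}}\mathcal{L}=\bm{0}$. The $\bm{U}$-update uses $\bm{V}_{t_j-1},\bm{Z}_{t_j-1},\bm{\Psi}_{t_j-1}$; these converge to $\bm{\Bar{V}},\bm{\Bar{Z}},\bm{\Bar{\Psi}}$ and the penalty term $\rho_2(\bm{U}_{t_j}-\bm{Z}_{t_j-1})\to\rho_2(\bm{\Bar{U}}-\bm{\Bar{Z}})=\bm{0}$, so the stationarity behind \eqref{eq:opt_U} limits to $\nabla_{\bm{U}}\mathcal{L}=\bm{0}$. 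For $\bm{Z}$, I would combine the stationarity of \eqref{opt_lrml4:Z_prob} with the dual update and the idempotency $(\bm{I}_n-\bm{P}_{t+1})^2=\bm{I}_n-\bm{P}_{t+1}$ to obtain the clean per-iteration identity $(\bm{I}_n-\bm{P}_{t+1})\bm{\Phi}_{t+1}+\bm{\Psi}_{t+1}=\bm{0}$, which holds at index $t_j$ with no shift and limits to $\nabla_{\bm{Z}}\mathcal{L}=(\bm{I}_n-\bm{\Bar{P}})\bm{\Bar{\Phi}}+\bm{\Bar{\Psi}}=\bm{0}$.

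The main obstacle is the $\bm{P}$-condition, and this is where the eigenvector hypothesis enters. By Proposition \ref{prop_lrml4:P_sol}, $\bm{P}_{t_j}$ maximizes $\langle\bm{C}_{t_j-1},\bm{P}\rangle$ over $\mathcal{P}_k$ with $\bm{C}_{t_j-1}=\lambda\bm{Y}\bm{Y}^T+\tfrac{\rho_1}{2}\bm{Z}_{t_j-1}\bm{Z}_{t_j-1}^T+\tfrac12(\bm{\Phi}_{t_j-1}\bm{Z}_{t_j-1}^T+\bm{Z}_{t_j-1}\bm{\Phi}_{t_j-1}^T)\to\bm{C}:=\bm{D}+\tfrac{\rho_1}{2}\bm{\Bar{Z}}\bm{\Bar{Z}}^T$. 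Because the top-$k$ eigenprojection map is discontinuous, I would not take limits of eigenvectors; instead I would pass the optimal \emph{value} $\langle\bm{C}_{t_j-1},\bm{P}_{t_j}\rangle=\sum_{i=1}^k\sigma_i(\bm{C}_{t_j-1})$ to the limit (eigenvalues are continuous) and match it against $\langle\bm{C},\bm{\Bar{P}}\rangle$ to conclude $\bm{\Bar{P}}$ maximizes $\langle\bm{C},\bm{P}\rangle$ over $\mathcal{P}_k$. It then remains to replace $\bm{C}$ by $\bm{D}$: the hypothesis that $\bm{D}$ and $\bm{\Bar{Z}}\bm{\Bar{Z}}^T$ share the same $k$ leading eigenvectors means that in a common eigenbasis each of the $k$ largest eigenvalues of $\bm{C}=\bm{D}+\tfrac{\rho_1}{2}\bm{\Bar{Z}}\bm{\Bar{Z}}^T$ is a nonnegative combination of aligned leading eigenvalues of $\bm{D}$ and $\bm{\Bar{Z}}\bm{\Bar{Z}}^T$, so the top-$k$ eigenspaces of $\bm{C}$ and $\bm{D}$ coincide and the maximizers over $\mathcal{P}_k$ agree, yielding the $\bm{P}$-stationarity. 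I expect the sufficient-decrease/bounded-below argument and the eigenprojection-discontinuity handling in the $\bm{P}$-step to be the most delicate parts.
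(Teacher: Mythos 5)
Your proposal is correct and follows essentially the same route as the paper's proof: the same block-wise strong-convexity sufficient-decrease argument (moduli $\gamma+\rho_2$, $\gamma$, $\rho_2$), the same completed-square lower bound on the augmented Lagrangian combined with the summability hypothesis to kill successive primal differences, the same passage of the subproblem optimality conditions to the limit, and the same use of the eigenvector hypothesis to move from the matrix $\lambda\bm{Y}\bm{Y}^T + \tfrac{\rho_1}{2}\bm{\Bar{Z}}\bm{\Bar{Z}}^T + \tfrac12(\bm{\Bar{\Phi}}\bm{\Bar{Z}}^T+\bm{\Bar{Z}}\bm{\Bar{\Phi}}^T)$ to the matrix appearing in \eqref{eq:first_P}. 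If anything, your write-up is tighter than the paper's in two spots it glosses over — the index shift for blocks updated with stale iterates, and the discontinuity of the top-$k$ eigenprojection, which you sidestep by passing optimal values (continuous eigenvalue sums) rather than eigenvectors to the limit.
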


\begin{proof}
    We leverage a proof technique similar to the technique used to establish Theorem 2.1 from \cite{xu2012alternating}. Note that from \eqref{eq_lrml:thm_cond}, we immediately have $\bm{\Phi}_{t+1}-\bm{\Phi}_t \rightarrow 0$ and $\bm{\Psi}_{t+1}-\bm{\Psi}_t \rightarrow 0$. We will first show that we also have $\bm{U}_{t+1}-\bm{U}_t \rightarrow 0, \bm{V}_{t+1}-\bm{V}_t \rightarrow 0$ and $\bm{Z}_{t+1}-\bm{Z}_t \rightarrow 0$.

    Let $\mathcal{L}^A(\bm{U}; \bm{V}, \bm{P}, \bm{Z}, \bm{\Phi}, \bm{\Psi})$ denote the augmented Lagrangian \eqref{eq:lagrangian} viewed as a function of $\bm{U}$. Notice that $\mathcal{L}^A(\bm{U}; \bm{V}, \bm{P}, \bm{Z}, \bm{\Phi}, \bm{\Psi}) - \frac{\gamma + \rho_2}{2} \Vert \bm{U} \Vert_F^2$ is a convex function, which implies that $\mathcal{L}^A(\bm{U}; \bm{V}, \bm{P}, \bm{Z}, \bm{\Phi}, \bm{\Psi})$ is a strongly convex function of $\bm{U}$ with parameter $\gamma + \rho_2$. Similarly, $\mathcal{L}^A(\bm{V}; \bm{U}, \bm{P}, \bm{Z}, \bm{\Phi}, \bm{\Psi})$ is a strongly convex function of $\bm{V}$ with parameter $\gamma$ and $\mathcal{L}^A(\bm{Z}; \bm{U}, \bm{V}, \bm{P}, \bm{\Phi}, \bm{\Psi})$ is a strongly convex function of $\bm{Z}$ with parameter $\rho_2$. By strong convexity, we know that for any matrices $\bm{U}, \bm{\Delta U}, \bm{V}, \bm{P}, \bm{Z}, \bm{\Phi}, \bm{\Psi}$ we have \begin{equation}
        \mathcal{L}^A(\bm{U} + \bm{\Delta U}) - \mathcal{L}^A(\bm{U}) \geq \langle \nabla_{\bm{U}} \mathcal{L}^A(\bm{U}), \bm{\Delta U} \rangle + (\gamma + \rho_2)\Vert \bm{\Delta U} \Vert_F^2, \label{eq:strong_convex_U}
    \end{equation} where the shorthand notation $\mathcal{L}^A(\bm{U})$ is understood to denote the augmented Lagrangian \eqref{eq:lagrangian} viewed as a function only of $\bm{U}$ with the other variables held fixed. Letting $\bm{U} = \bm{U}_{t+1}, \bm{\Delta U} = \bm{U}_t - \bm{U}_{t+1}$, and noting that $\langle \nabla_{\bm{U}} \mathcal{L}^A(\bm{U}_{t+1}), \bm{\Delta U} \rangle \geq 0$ since $\bm{U}_{t+1}$ minimizes $\mathcal{L}^A(\bm{U})$ at iteration $t$, from \eqref{eq:strong_convex_U} we have $\mathcal{L}^A(\bm{U}_t) - \mathcal{L}^A(\bm{U}_{t+1}) \geq (\gamma + \rho_2)\Vert \bm{U}_t - \bm{U}_{t+1}\Vert_F^2$. Similarly, we have $\mathcal{L}^A(\bm{V}_t) - \mathcal{L}^A(\bm{V}_{t+1}) \geq \gamma\Vert \bm{V}_t - \bm{V}_{t+1}\Vert_F^2$ and $\mathcal{L}^A(\bm{Z}_t) - \mathcal{L}^A(\bm{Z}_{t+1}) \geq \rho_2\Vert \bm{Z}_t - \bm{Z}_{t+1}\Vert_F^2$. Moreover, since $\bm{P}_{t+1}$ minimizes $\mathcal{L}^A(\bm{P})$ at iteration $t$, we have $\mathcal{L}^A(\bm{P}_t) - \mathcal{L}^A(\bm{P}_{t+1}) \geq 0$. Observe that we can express the difference in the value of the augmented Lagrangian between iteration $t$ and iteration $t+1$ as
    \begin{equation}
        \begin{aligned}
            \mathcal{L}^A(\bm{U}_t, \bm{V}_t, \bm{P}_t, \bm{Z}_t) - \mathcal{L}^A(\bm{U}_{t+1}, &\bm{V}_{t+1}, \bm{P}_{t+1}, \bm{Z}_{t+1}) = \\ &\mathcal{L}^A(\bm{U}_t) - \mathcal{L}^A(\bm{U}_{t+1}) + \mathcal{L}^A(\bm{V}_t) - \mathcal{L}^A(\bm{V}_{t+1}) \\
            &+ \mathcal{L}^A(\bm{P}_t) - \mathcal{L}^A(\bm{P}_{t+1}) +\mathcal{L}^A(\bm{Z}_t) - \mathcal{L}^A(\bm{Z}_{t+1}) \\
            &+ \mathcal{L}^A(\bm{\Phi}_t) - \mathcal{L}^A(\bm{\Phi}_{t+1}) +
            \mathcal{L}^A(\bm{\Psi}_t) - \mathcal{L}^A(\bm{\Psi}_{t+1}).
        \end{aligned} \label{eq:lagrange_decomp}
    \end{equation} Recognizing that we have $\mathcal{L}^A(\bm{\Phi}_t) - \mathcal{L}^A(\bm{\Phi}_{t+1}) = -\rho_1 \Vert \bm{\Phi}_t - \bm{\Phi}_{t+1} \Vert_F^2, \mathcal{L}^A(\bm{\Psi}_t) - \mathcal{L}^A(\bm{\Psi}_{t+1}) = -\rho_2 \Vert \bm{\Psi}_t - \bm{\Psi}_{t+1} \Vert_F^2$, \eqref{eq:lagrange_decomp} implies that
    \begin{equation}
    \begin{aligned}
        \mathcal{L}^A(\bm{U}_t, &\bm{V}_t, \bm{P}_t, \bm{Z}_t) - \mathcal{L}^A(\bm{U}_{t+1}, \bm{V}_{t+1}, \bm{P}_{t+1}, \bm{Z}_{t+1}) \geq \\ &(\gamma + \rho_2)\Vert \bm{U}_t - \bm{U}_{t+1}\Vert_F^2 + \gamma\Vert \bm{V}_t - \bm{V}_{t+1}\Vert_F^2 + \rho_2\Vert \bm{Z}_t - \bm{Z}_{t+1}\Vert_F^2 \\ &-\rho_1 \Vert \bm{\Phi}_t - \bm{\Phi}_{t+1} \Vert_F^2 -\rho_2 \Vert \bm{\Psi}_t - \bm{\Psi}_{t+1} \Vert_F^2.
    \end{aligned} \label{eq:lagrange_ineq}
    \end{equation} We claim that the augmented Lagrangian is bounded from below. To see this, note that $\mathcal{L}^A$ can equivalently be written as
    \begin{equation}
    \begin{aligned}
        \mathcal{L}^A(\bm{U}, \bm{V}, \bm{P}, \bm{Z}, \bm{\Phi}, \bm{\Psi}) &= \sum_{(i, j) \in \Omega}((\bm{U}\bm{V}^T)_{ij}-A_{ij})^2 + \lambda \Vert (\bm{I}_n - \bm{P})\bm{Y}\Vert_F^2 \\ &+ \frac{\gamma}{2}( \Vert \bm{U} \Vert_F^2 + \Vert \bm{V} \Vert_F^2)
        + \mathbb{I}_{\mathcal{P}_k}(\bm{P}) + \frac{\rho_1}{2} \Vert (\bm{I}_n-\bm{P})\bm{Z} + \frac{\bm{\Phi}}{\rho_1}\Vert_F^2 \\ & +\frac{\rho_2}{2} \Vert \bm{Z} - \bm{U} + \frac{\bm{\Psi}}{\rho_2}\Vert_F^2
        -\frac{1}{2\rho_1}\Vert \bm{\Phi} \Vert_F^2 +\frac{1}{2\rho_2}\Vert \bm{\Psi} \Vert_F^2,
    \end{aligned} \label{eq:lagrangian_scaled}
    \end{equation} and recall that by assumption the dual variables $\bm{\Phi}$ and $\bm{\Psi}$ are bounded. Thus, the bounded-ness of $\mathcal{L}^A$ coupled with summing \eqref{eq:lagrange_ineq} over $t$ implies that
    \begin{equation}
        \begin{aligned}
            &\sum_{t=0}^\infty c_1(\Vert \bm{U}_t - \bm{U}_{t+1}\Vert_F^2 + \Vert \bm{V}_t - \bm{V}_{t+1}\Vert_F^2 + \Vert \bm{Z}_t - \bm{Z}_{t+1}\Vert_F^2) \\
            &-\sum_{t=0}^\infty c_2(\Vert \bm{\Phi}_t - \bm{\Phi}_{t+1}\Vert_F^2 + \Vert \bm{\Psi}_t - \bm{\Psi}_{t+1}\Vert_F^2) < \infty,
        \end{aligned} \label{eq:lagrangian_telescope}
    \end{equation} where $c_1 = \min \{\gamma, \rho_2\}$ and $c_2 = \max \{\rho_1, \rho_2\}$. By assumption, the second term of \eqref{eq:lagrangian_telescope} is finite which implies that the first term must also be finite. This immediately implies that $\bm{U}_{t+1}-\bm{U}_t \rightarrow 0, \bm{V}_{t+1}-\bm{V}_t \rightarrow 0$ and $\bm{Z}_{t+1}-\bm{Z}_t \rightarrow 0$ as desired.

    We are now ready to prove the main result of the theorem. The (unaugmented) Lagrangian $\mathcal{L}$ for \eqref{opt_lrml4:MC_primal_dummy} is given by
    \begin{equation}
    \begin{aligned}
        \mathcal{L}(\bm{U}, \bm{V}, \bm{P}, &\bm{Z}, \bm{\Phi}, \bm{\Psi}) = \sum_{(i, j) \in \Omega}((\bm{U}\bm{V}^T)_{ij}-A_{ij})^2 + \lambda \text{Tr}\big{(}\bm{Y}^T(\bm{I}_n - \bm{P})\bm{Y}\big{)} \\ &+ \frac{\gamma}{2}( \Vert \bm{U} \Vert_F^2 + \Vert \bm{V} \Vert_F^2)
        + \mathbb{I}_{\mathcal{P}_k}(\bm{P}) + \text{Tr}(\bm{\Phi}^T(\bm{I}_n-\bm{P})\bm{Z}) + \text{Tr}(\bm{\Psi}^T(\bm{Z}-\bm{U})).
    \end{aligned} \label{eq:lagrangian_normal}
    \end{equation} The corresponding first order optimality conditions can be expressed as
    \begin{subequations}
    \begin{align}
        &[2\bm{V}^T\bm{W}_i\bm{V} + \gamma \bm{I}_k]U_{i, \star} = 2\bm{V}^T\bm{W}_iA_{i, \star} + \Psi_{i, \star} \quad  i \in [n], \label{eq:first_U}\\
        &[2\bm{U}^T\bm{W}_j\bm{U} + \gamma \bm{I}_k]V_{j, \star} = 2\bm{U}^T\bm{W}_jA_{\star, j} \quad  j \in [m], \label{eq:first_V}\\
        &\bm{P} = \bm{M}\bm{M}^T \text{where  } \bm{M}\bm{\Sigma}\bm{M}^T \text{ is a rank $k$ SVD of  } \lambda \bm{Y}\bm{Y}^T+\frac{1}{2}(\bm{\Phi}\bm{Z}^T+\bm{Z}\bm{\Phi}^T),  \label{eq:first_P}\\
        &\bm{\Phi} + \bm{\Psi} = \bm{P}\bm{\Phi}, \label{eq:first_Z}\\
        &\bm{Z} = \bm{P}\bm{Z}, \label{eq:first_Phi}\\
        &\bm{Z} = \bm{U}, \label{eq:first_Psi}
    \end{align}
    \end{subequations} where the diagonal matrices $\bm{W}_i, \bm{W}_j$ are defined as in Propositions \ref{prop_lrml4:U_sol} and \ref{prop_lrml4:V_sol} respectively.
    Let $(\bm{\Bar{U}}, \bm{\Bar{V}}, \bm{\Bar{P}}, \bm{\Bar{Z}}, \bm{\Bar{\Phi}}, \bm{\Bar{\Psi}})$ denote any limit point of $\{(\bm{U}_t, \bm{V}_t, \bm{P}_t, \bm{Z}_t, \bm{\Phi}_t, \bm{\Psi}_t)\}$. Recalling the Algorithm \ref{alg:ADMM} updates for $\bm{\Phi}$ and $\bm{\Psi}$, the conditions $\bm{\Phi}_{t+1}-\bm{\Phi}_t \rightarrow 0$ and $\bm{\Psi}_{t+1}-\bm{\Psi}_t \rightarrow 0$ imply that \eqref{eq:first_Phi} and \eqref{eq:first_Psi} hold at $(\bm{\Bar{U}}, \bm{\Bar{V}}, \bm{\Bar{P}}, \bm{\Bar{Z}}, \bm{\Bar{\Phi}}, \bm{\Bar{\Psi}})$. Moreover, when \eqref{eq:first_Psi} holds the Algorithm \ref{alg:ADMM} update for $\bm{U}$ given by Proposition \ref{prop_lrml4:U_sol} reduces to \eqref{eq:first_U} while the update for $\bm{V}$ given by Proposition \ref{prop_lrml4:V_sol} enforces \eqref{eq:first_V}. From the proof of Propsition \eqref{prop_lrml4:Z_sol}, we know that Algorithm \ref{alg:ADMM} updates $\bm{Z}$ to satisfy the following
    \begin{equation}
        \Big{(}\rho_1(\bm{I}_n-\bm{P})+\rho_2\bm{I}_n\Big{)}\bm{Z}=\Big{(}\rho_2\bm{U}-(\bm{I}_n-\bm{P})\bm{\Phi} - \bm{\Psi}\Big{)}. \label{eq:Z_update_proof}
    \end{equation} Since \eqref{eq:first_Phi} and \eqref{eq:first_Psi} hold, \eqref{eq:Z_update_proof} immediately implies \eqref{eq:first_Z} is satisfied by $(\bm{\Bar{U}}, \bm{\Bar{V}}, \bm{\Bar{P}}, \bm{\Bar{Z}}, \bm{\Bar{\Phi}}, \bm{\Bar{\Psi}})$. It remains to verify that $(\bm{\Bar{U}}, \bm{\Bar{V}}, \bm{\Bar{P}}, \bm{\Bar{Z}}, \bm{\Bar{\Phi}}, \bm{\Bar{\Psi}})$ satisfies \eqref{eq:first_P}. By Proposition \ref{prop_lrml4:P_sol}, we know that we have $\bm{\Bar{P}} = \bm{L} \bm{L}^T$ where $\bm{L}\bm{\Sigma}\bm{L}^T$ is a rank $k$ truncated SVD of the matrix $[\lambda \bm{Y}\bm{Y}^T+\frac{\rho_1}{2}\bm{\Bar{Z}}\bm{\Bar{Z}}^T +\frac{1}{2}(\bm{\Bar{\Phi}}\bm{\Bar{Z}}^T+\bm{\Bar{Z}}\bm{\Bar{\Phi}}^T)]$. If the set of $k$ leading eigenvectors of the matrix $\bm{\Bar{Z}}\bm{\Bar{Z}}^T$ is the same as the set of $k$ leading eigenvectors of $[\lambda \bm{Y}\bm{Y}^T +\frac{1}{2}(\bm{\Bar{\Phi}}\bm{\Bar{Z}}^T+\bm{\Bar{Z}}\bm{\Bar{\Phi}}^T)]$, it follows immediately that $\bm{L} \Sigma' \bm{L}^T$ will be a rank $k$ SVD of $[\lambda \bm{Y}\bm{Y}^T +\frac{1}{2}(\bm{\Bar{\Phi}}\bm{\Bar{Z}}^T+\bm{\Bar{Z}}\bm{\Bar{\Phi}}^T)]$ for some diagonal matrix $\Sigma'$. Thus, in this setting, $(\bm{\Bar{U}}, \bm{\Bar{V}}, \bm{\Bar{P}}, \bm{\Bar{Z}}, \bm{\Bar{\Phi}}, \bm{\Bar{\Psi}})$ satisfies \eqref{eq:first_P}. this completes the proof.
    
\end{proof}
In words, Theorem \ref{thm_lrml:convergence} states that if the sequence of dual variable iterates produced by Algorithm \ref{alg:ADMM} is bounded and the primal residuals converge to zero quickly enough (specifically, it is required that the norm of successive dual variable differences is summable), then any accumulation point $(\bm{\Bar{U}}, \bm{\Bar{V}}, \bm{\Bar{P}}, \bm{\Bar{Z}}, \bm{\Bar{\Phi}}, \bm{\Bar{\Psi}})$ of the sequence of iterates produced by Algorithm \ref{alg:ADMM} satisfies the first order optimality conditions of \eqref{opt_lrml4:MC_primal_dummy} if the rank $k$ approximation of the matrix $[\lambda \bm{Y}\bm{Y}^T+\frac{1}{2}(\bm{\Bar{\Phi}}\bm{\Bar{Z}}^T+\bm{\Bar{Z}}\bm{\Bar{\Phi}}^T)]$ shares the same column space as the matrix $\bm{\Bar{Z}}$. {\color{black}We note that the assumption made in Theorem \ref{thm_lrml:convergence} is consistent with common assumptions made in the analysis of nonconvex ADMM \citep{jiang2014alternating, shen2014augmented, xu2012alternating}.} We {\color{black}also }note that this condition can only be verified upon termination of Algorithm \ref{alg:ADMM} since it depends on the algorithm output in addition to the problem data. Accordingly, Theorem \ref{thm_lrml:convergence} provides an a posteriori convergence result.

We note that this condition was always satisfied by the output of Algorithm \ref{alg:ADMM} in our synthetic numerical experiments. Specifically, letting $\bm{P}_1 \in \mathbb{R}^{n \times n}$ denote the orthogonal projection onto the $k$-dimensional column space of $\bm{\Bar{Z}}$ and $\bm{P}_2 \in \mathbb{R}^{n \times n}$ denote the orthogonal projection onto the $k$ leading eigenvectors of $[\lambda \bm{Y}\bm{Y}^T+\frac{1}{2}(\bm{\Bar{\Phi}}\bm{\Bar{Z}}^T+\bm{\Bar{Z}}\bm{\Bar{\Phi}}^T)]$, the eigenvector condition stated in Theorem \ref{thm_lrml:convergence} is satisfied if and only if we have $(\bm{I}_n-\bm{P_1})\bm{P}_2 = \bm{0}_{n \times n}$. This suggests that the quantity $\bm{P}_2-\bm{P_1}\bm{P}_2$ can be viewed as a dual residual for Algorithm \ref{alg:ADMM} (recall that the primal residuals are given by $\bm{Z}-\bm{P}\bm{Z}$ and $\bm{Z}-\bm{U}$, which we term the $\bm{\Phi}$-residual and the $\bm{\Psi}$-residual respectively). In Figure \ref{fig_lrml4:synthetic_residual}, we plot the evolution of the primal and dual residuals over $500$ iterations of Algorithm \ref{alg:ADMM} for a single synthetic data run (see Section \ref{ssec_lrml4:synthetic_data} for a specification of the data generation procedure) where we have fixed $\rho_1 = \rho_2 = 10$. Observe that after only a small number of iterations, the norm of the $\bm{\Phi}$-residual and the norm of the $\bm{\Psi}$-residual quickly approach $0$, indicating that Algorithm \ref{alg:ADMM} has arrived at a feasible solution. Moreover, the norm of the dual residual similarly quickly approaches $0$ after a small number of iterations, indicating that the eigenvector condition from Theorem \ref{thm_lrml:convergence} is satisfied and that Algorithm \ref{alg:ADMM} has therefore arrived at a solution satisfying first order optimality conditions.

\begin{figure*}[h]\centering
  \includegraphics[width=0.9\textwidth]{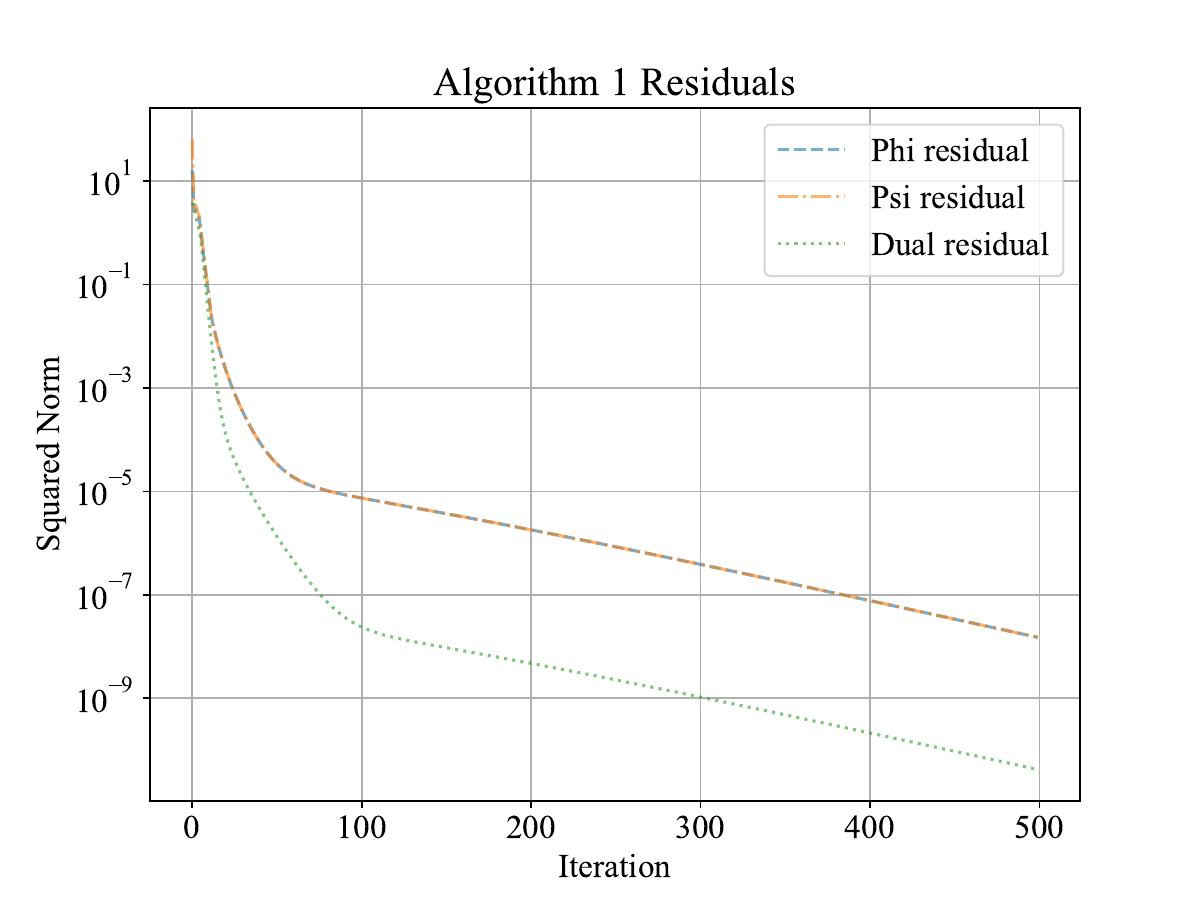}
  \caption{\color{black}Algorithm \ref{alg:ADMM} primal and dual residual evolution versus iteration number for a single synthetic data run with $n=1000, m=100, k=5$ and $d=150$. Note that due to the logarithmic scale, the phi residual and psi residual lines are overlapping.}
  \label{fig_lrml4:synthetic_residual}
\end{figure*}

\section{Computational Results} \label{sec_lrml4:experiments}

We evaluate the performance of Algorithm \ref{alg:ADMM} implemented in Julia 1.7.3. Throughout, we fix $\rho_1 = \rho_2 = 10$, set the maximum number of iterations $T=20$ and set the number of compute threads $w=24$. Note that given the novelty of Problem \eqref{opt_lrml4:MC_primal}, there are no pre-existing specialized methods to benchmark against. Accordingly, we compare the performance of Algorithm \ref{alg:ADMM} against well studied methods for the very closely related MC problem{\color{black}, a method designed for MC in the presence of side information and} a highly performant generic method for low rank matrix optimization problems. The MC methods we consider are Fast-Impute \citep{bertsimas2020fast}, Soft-Impute \citep{JMLR:v11:mazumder10a} and Iterative-SVD \citep{troyanskaya2001missing} which we introduced formally in Section \ref{ssec_lrml4:mc_methods}. We utilize the implementation of Fast-Impute made publicly available by \citep{bertsimas2020fast} while we use the implementation of Soft-Impute and Iterative-SVD from the python package fancyimpute $0.7.0$ \citep{fancyimpute}. {\color{black}We additionally consider the variant of Fast-Impute that incorporates side information (denoted as Fast-Impute-Side) which is also made publicly available by \cite{bertsimas2020fast}.} The matrix optimization method we consider is ScaledGD (scaled gradient descent) \citep{tong2021accelerating} which we introduced formally in Section \ref{sssec_lrml4:scaledGD} and implement ourselves. We perform experiments using both synthetic data and real world data on MIT’s Supercloud Cluster \citep{reuther2018interactive}, which hosts Intel Xeon Platinum 8260 processors. To bridge the gap between theory and practice, we have made our code freely available on \url{GitHub} at \url{https://github.com/NicholasJohnson2020/LearningLowRankMatrices}.

To evaluate the performance of Algorithm \ref{alg:ADMM}, Fast-Impute, {\color{black} Fast-Impute-Side,} Soft-Impute, Iterative-SVD and ScaledGD on synthetic data, we consider the objective value achieved by a returned solution in \eqref{opt_lrml4:MC_primal}, the $\ell_2$ reconstruction error between a returned solution and the ground truth, the coefficient of determination ($R^2$) when the returned solution is used as a predictor for the side information, the numerical rank of a returned solution and the execution time of each algorithm. Explicitly, let $\bm{\Hat{X}} \in \mathbb{R}^{n \times m}$ denote the solution returned by a given method (where we define $\bm{\Hat{X}} = \bm{\Hat{U}}\bm{\Hat{V}}^T$ if the method outputs low rank factors $\bm{\Hat{U}}, \bm{\Hat{V}}$) and let $\bm{A}^{true} \in \mathbb{R}^{n \times m}$ denote the ground truth matrix. We define the the $\ell_2$ reconstruction error of $\bm{\Hat{X}}$ as
\[ERR_{\ell_2}(\bm{\Hat{X}}) = \frac{\Vert \bm{\Hat{X}} - \bm{A}^{true} \Vert_F^2}{\Vert \bm{A}^{true} \Vert_F^2}.\] We compute the numerical rank of $\bm{\Hat{X}}$ by calling the default rank function from the Julia LinearAlgebra package. We aim to answer the following questions:

\begin{enumerate}
    \item How does the performance of Algorithm \ref{alg:ADMM} compare to existing methods such as Fast-Impute, {\color{black}Fast-Impute-Side,} Soft-Impute, Iterative-SVD and ScaledGD on synthetic and real world data?
    \item How is the performance of Algorithm \ref{alg:ADMM} affected by the number of rows $n$, the number of columns $m$, the dimension of the side information $d$ and the underlying rank $k$ of the ground truth?
    \item Empirically, which subproblem solution update is the computational bottleneck of Algorithm \ref{alg:ADMM}?
\end{enumerate}

\subsection{Synthetic Data Generation} \label{ssec_lrml4:synthetic_data}

To generate synthetic data, we specify a number of rows $n \in \mathbb{Z}_+$, a number of columns $m \in \mathbb{Z}_+$, a desired rank $k \in \mathbb{Z}_+$ with $k < \min \{n, m\}$, the dimension of the side information $d \in \mathbb{Z}_+$, a fraction of missing values $\alpha \in (0, 1)$ and a noise parameter $\sigma \in \mathbb{R}_+$ that controls the signal to noise ratio. We sample matrices $\bm{U} \in \mathbb{R}^{n \times k}, \bm{V} \in \mathbb{R}^{m \times k}, \bm{\beta} \in \mathbb{R}^{m \times d}$ by drawing each entry $U_{ij}, V_{ij}, \beta_{ij}$ independently from the uniform distribution on the interval $[0, 1]$. Furthermore, we sample a noise matrix $\bm{N} \in \mathbb{R}^{n \times d}$ by drawing each entry $N_{ij}$ independently from the univariate normal distribution with mean $0$ and variance $\sigma^2$. We let $\bm{A} = \bm{U}\bm{V}^T$ and we let $\bm{Y} = \bm{A}\bm{\beta}+\bm{N}$. Lastly, we sample $\lfloor \alpha \cdot n \cdot m \rfloor$ indices uniformly at random from the collection $\mathcal{I} = \{(i, j): 1 \leq i \leq n, 1 \leq j \leq m\}$ to be the set of missing indices, which we denote by $\Gamma$. The set of revealed entries can then be defined as $\Omega = \mathcal{I} \setminus \Gamma$. We fix $\alpha = 0.9, \sigma =2$ throughout our experiments and report numerical results for various different combinations of $(n, m, d, k)$.

\subsection{Sensitivity to Row Dimension} \label{sssec_lrml4:n_exp}

We present a comparison of Algorithm \ref{alg:ADMM} with ScaledGD, Fast-Impute, {\color{black}Fast-Impute-Side,} Soft-Impute and Iterative-SVD as we vary the number of rows $n$. In these experiments, we fixed $m=100, k=5$, and $d=150$ across all trials. We varied $n \in \{100, 200, 400, 800, 1000, 2000, 5000, 10000\}$ and we performed $20$ trials for each value of $n$. For ScaledGD, we set the step size to be $\eta = \frac{1}{10 \sigma_1(\bm{A})}$ where $\sigma_1(\bm{A})$ denotes the largest singular value of the input matrix $\bm{A}$ where we fill the unobserved entries with the value $0$. Letting $f(\bm{U}_t, \bm{V}_t)$ denote the objective value achieved after iteration $t$ of ScaledGD, we terminate ScaledGD when either $t > 1000$ or $\frac{f(\bm{U}_{t-1}, \bm{V}_{t-1}) - f(\bm{U}_t, \bm{V}_t)}{f(\bm{U}_{t-1}, \bm{V}_{t-1})} < 10^{-3}$. In words, we terminate ScaledGD after $1000$ iterations or after the relative objective value improvement between two iterations is less than $0.1\%$.

\begin{figure*}[h]\centering
  \includegraphics[width=0.9\textwidth]{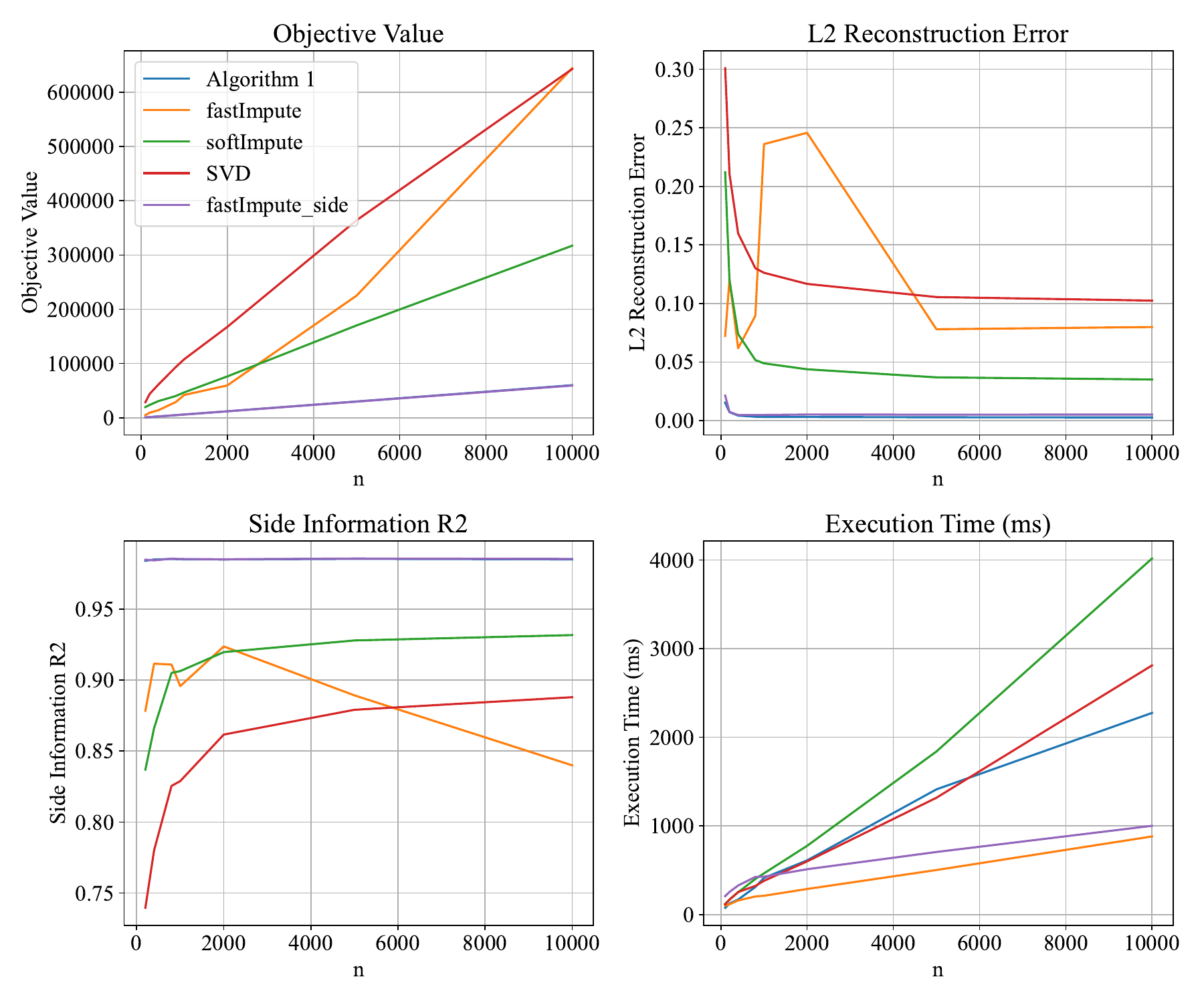}
  \caption{\color{black}Objective value (top left), $\ell_2$ reconstruction error (top right), side information $R^2$ (bottom left) and execution time (bottom right) versus $n$ with $m=100, k=5$ and $d=150$. Averaged over $20$ trials for each parameter configuration.}
  \label{fig_lrml4:synthetic_N}
\end{figure*}

\begin{figure*}[h]\centering
  \includegraphics[width=0.9\textwidth]{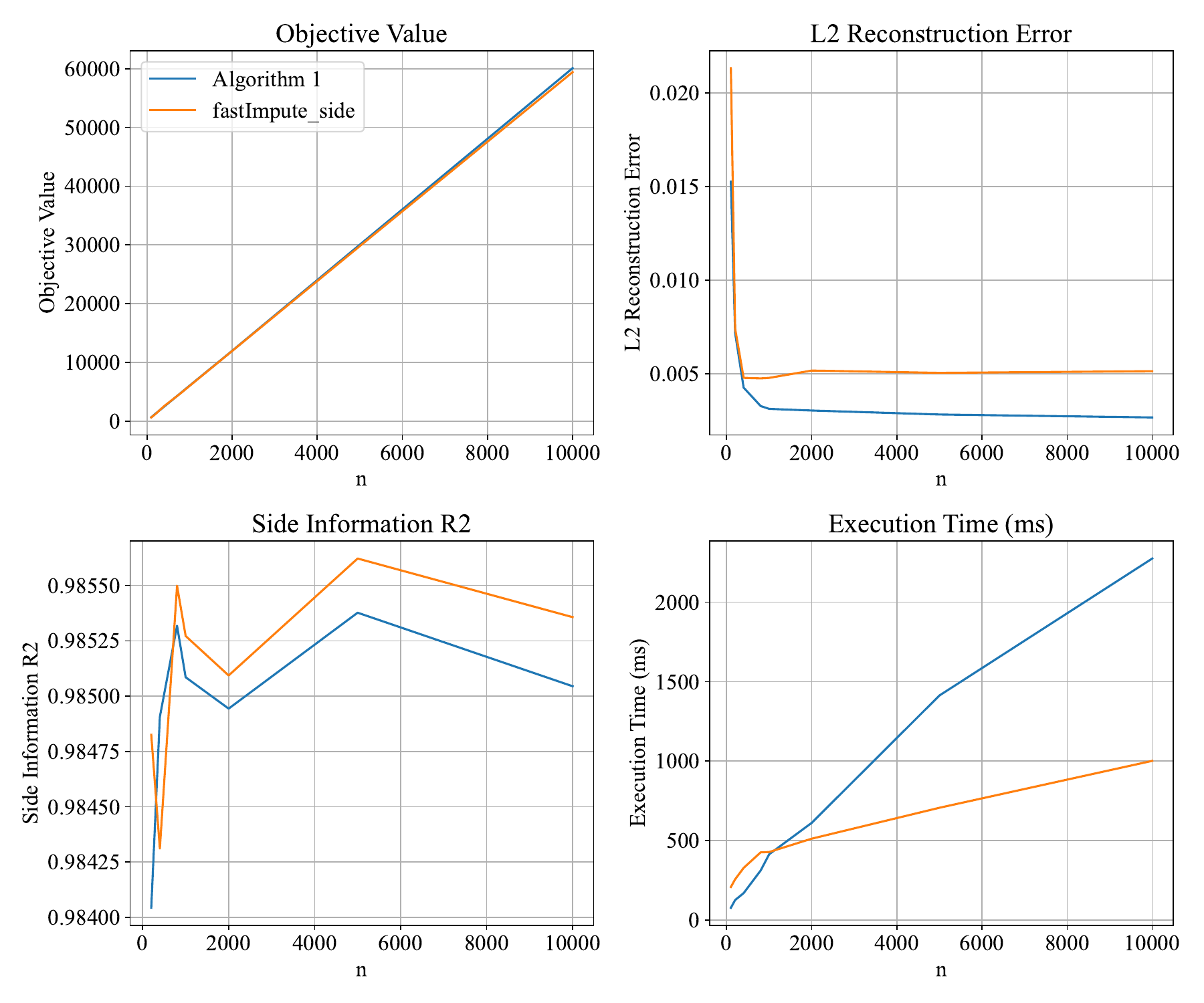}
  \caption{\color{black}Objective value (top left), $\ell_2$ reconstruction error (top right), side information $R^2$ (bottom left) and execution time (bottom right) versus $n$ with $m=100, k=5$ and $d=150$. Averaged over $20$ trials for each parameter configuration.}
  \label{fig_lrml4:synthetic_N_zoom}
\end{figure*}

We report the objective value, $\ell_2$ reconstruction error, side information $R^2$ and execution time for Algorithm \ref{alg:ADMM}, Fast-Impute, {\color{black}Fast-Impute-Side,} Soft-Impute and Iterative-SVD in Figure \ref{fig_lrml4:synthetic_N}. {\color{black}For ease of comparison between Algorithm \ref{alg:ADMM} and Fast-Impute-Side, we plot only the performance of these two methods in Figure \ref{fig_lrml4:synthetic_N_zoom}.} We additionally report the objective value, reconstruction error, side information $R^2$ and execution time for ScaledGD, Algorithm \ref{alg:ADMM}, Fast-Impute, {\color{black}Fast-Impute-Side,} Soft-Impute and Iterative-SVD in Tables \ref{tbl_lrml4:N_objective}, \ref{tbl_lrml4:N_error}, \ref{tbl_lrml4:N_r2} and \ref{tbl_lrml4:N_time} of Appendix \ref{sec_lrml4:app_syn_exp}. In Figure \ref{fig_lrml4:synthetic_N_timing}, we plot the average cumulative time spent solving subproblems \eqref{opt_lrml4:U_prob}, \eqref{opt_lrml4:V_prob}, \eqref{opt_lrml4:P_prob}, \eqref{opt_lrml4:Z_prob} during the execution of Algorithm \ref{alg:ADMM} versus $n$. Our main findings from this set of experiments are:

\begin{enumerate}
    {\color{black}\item Algorithm \ref{alg:ADMM} and Fast-Impute-Side systematically produce higher quality solutions than ScaledGD, Fast-Impute, Soft-Impute and Iterative-SVD (see Table \ref{tbl_lrml4:N_objective}), sometimes achieving an objective value that is an order of magnitude superior than the next best method. We remind the reader that Fast-Impute, Soft-Impute and Iterative-SVD are methods designed for the generic MC problem and are not custom built to solve \eqref{opt_lrml4:MC_primal} so it should not come as a surprise that Algorithm \ref{alg:ADMM} and Fast-Impute-Side (which leverages the side information $\bm{Y}$) significantly outperform these $3$ methods in terms of objective value. ScaledGD however has explicit knowledge of the objective function of \eqref{opt_lrml4:MC_primal} along with its gradient, yet surprisingly produces the weakest average objective value across these experiments. We note that we use the default hyperparameters for ScaledGD recommended by the authors of this method \citep{tong2021accelerating}. We observe that the objective value achieved by all methods increases linearly as the number of rows $n$ increases. There is no significant difference between the objective value achieved by the output of Algorithm $\ref{alg:ADMM}$ compared to that of Fast-Impute-Side (Algorithm $\ref{alg:ADMM}$ is on average $1\%$ higher).}

    \item {\color{black}In terms of $\ell_2$ reconstruction error, Algorithm \ref{alg:ADMM} systematically produces solutions that are of higher quality than ScaledGD, Fast-Impute, Fast-Impute-Side, Soft-Impute and Iterative-SVD (see Table \ref{tbl_lrml4:N_error}). On average, Algorithm \ref{alg:ADMM} outputs a solution whose $\ell_2$ reconstruction error is $30\%$ lesser than the reconstruction error achieved by the best performing alternative method (Fast-Impute-Side). This is especially noteworthy since Algorithm \ref{alg:ADMM} is not designed explicitly with reconstruction error minimization as the objective, unlike Fast-Impute, Fast-Impute-Side and Soft-Impute, and suggests that modeling the side information $\bm{Y}$ as a linear function of $\bm{X}$ is instrumental in recovering high quality low rank estimates of the partially observed data matrix.}
    
    \item {\color{black}With the exception of the experiments for which $n=100$, Algorithm \ref{alg:ADMM} and Fast-Impute-Side always produced solutions that achieved a superior $R^2$ value when used as a predictor for the side information compared to Fast-Impute, Soft-Impute, Iterative-SVD and ScaledGD. There is no significant difference between $R^2$ value achieved by the output of Algorithm $\ref{alg:ADMM}$ compared to that of Fast-Impute-Side.}
    
    \item {\color{black}The runtime of Algorithm \ref{alg:ADMM} is competitive with that of the other methods. The runtime of Algorithm \ref{alg:ADMM} is less than of Soft-Impute and Iterative-SVD but greater than that of Fast-Impute. The runtime for Algorithm $\ref{alg:ADMM}$ was less than that of Fast-Impute-Side for $n < 2000$ but greater than for $n \geq 2000$. For experiments with $n \leq 2000$, Table \ref{tbl_lrml4:N_time} illustrates that ScaledGD was the method with the fastest execution time (however as previously mentioned the returned solutions were of low quality). The runtime of Algorithm \ref{alg:ADMM}, Fast-Impute, Fast-Impute-Side, Soft-Impute and Iterative-SVD appear to grow linearly with $n$.}
    
    \item Figure \ref{fig_lrml4:synthetic_N_timing} illustrates that the computation of the solution for \eqref{opt_lrml4:U_prob} is the computational bottleneck in the execution of Algorithm \ref{alg:ADMM} in this set of experiments, followed next by the computation of the solution for \eqref{opt_lrml4:P_prob}. Empirically, we observe that the solution time of \eqref{opt_lrml4:U_prob}, \eqref{opt_lrml4:V_prob}, \eqref{opt_lrml4:P_prob} and \eqref{opt_lrml4:Z_prob} appear to scale linearly with the number of rows $n$. This observation is consistent with the computational complexities derived for each subproblem of Algorithm \ref{alg:ADMM} in Section \ref{sec_lrml4:admm}.
    
\end{enumerate}

\begin{figure*}[h]\centering
  \includegraphics[width=0.9\textwidth]{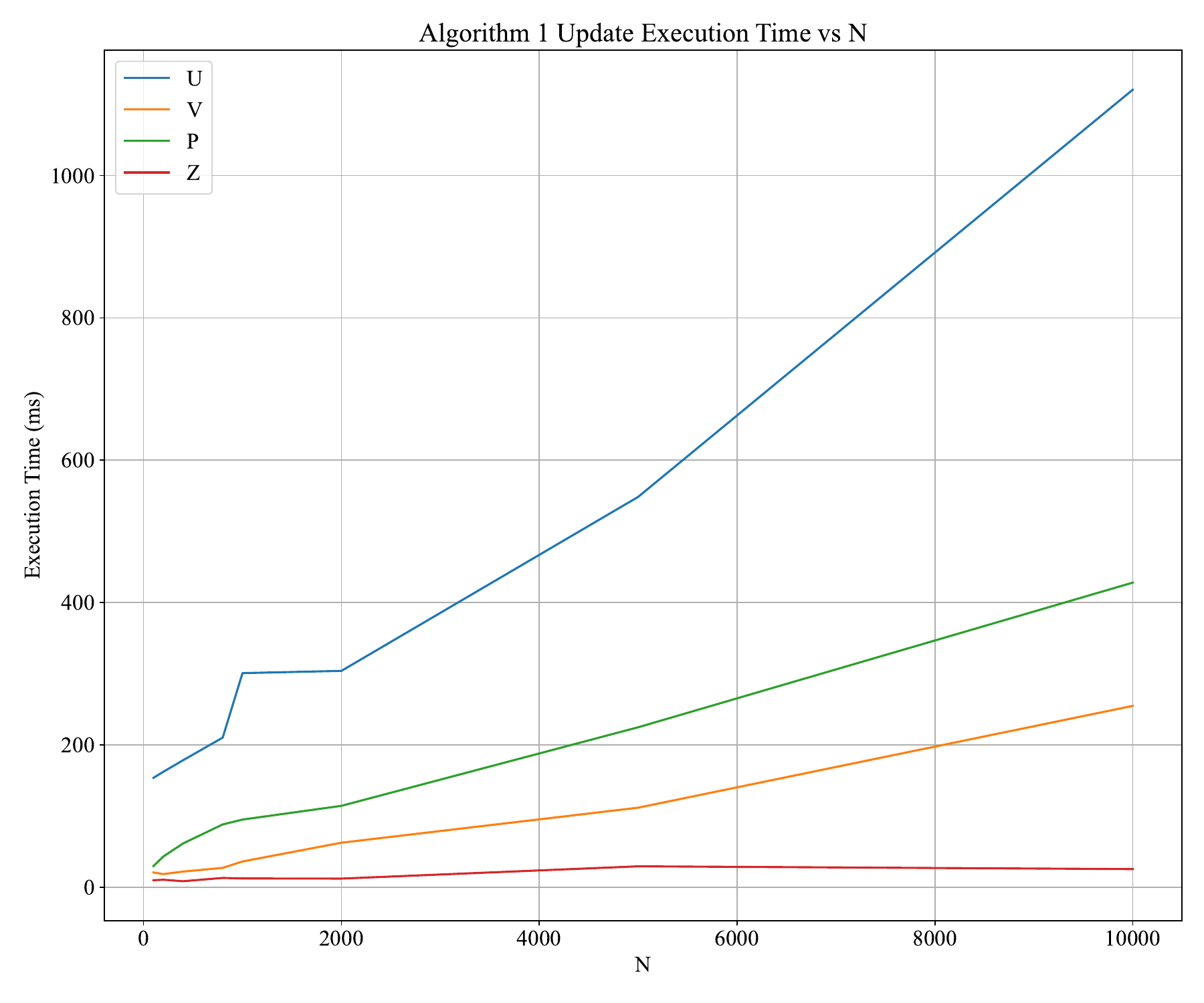}
  \caption{Cumulative time spent solving each subproblem of Algorithm \ref{alg:ADMM} versus $n$ with $m=100, k=5$ and $d=150$. Averaged over $20$ trials for each parameter configuration.}
  \label{fig_lrml4:synthetic_N_timing}
\end{figure*}

\subsection{Sensitivity to Column Dimension} \label{sssec_lrml4:m_exp}

Here, we present a comparison of Algorithm \ref{alg:ADMM} with ScaledGD, Fast-Impute, {\color{black}Fast-Impute-Side,} Soft-Impute and Iterative-SVD as we vary the number of columns $m$. We fixed $n=1000, k=5$, and $d=150$ across all trials. We varied $m \in \{100, 200, 400, 800, 1000, 2000, 5000, 10000\}$ and we performed $20$ trials for each value of $m$.

\begin{figure*}[h]\centering
  \includegraphics[width=0.9\textwidth]{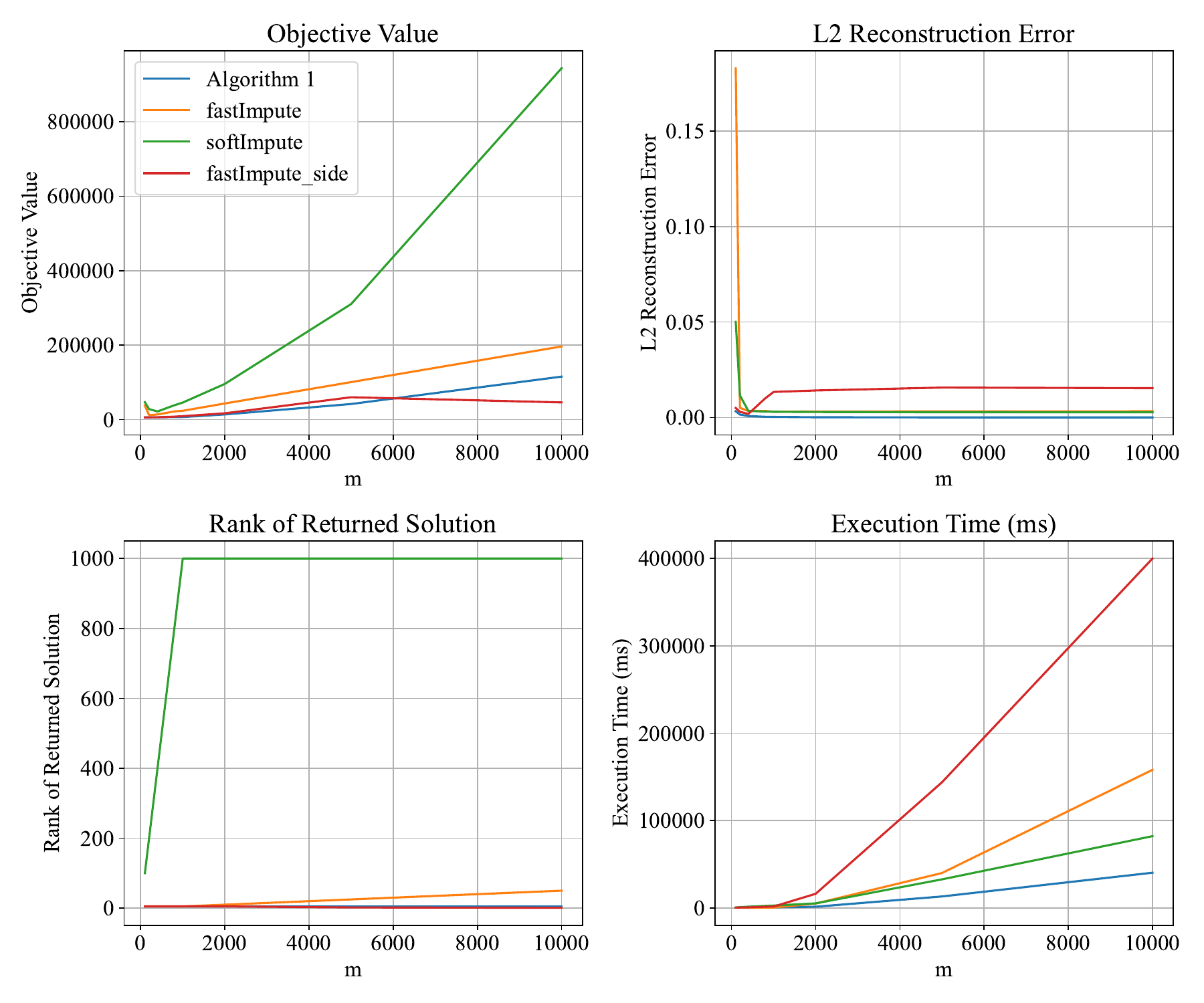}
  \caption{\color{black}Objective value (top left), $\ell_2$ reconstruction error (top right), fitted rank (bottom left) and execution time (bottom right) versus $m$ with $n=1000, k=5$ and $d=150$. Averaged over $20$ trials for each parameter configuration.}
  \label{fig_lrml4:synthetic_M}
\end{figure*}

\begin{figure*}[h]\centering
  \includegraphics[width=0.9\textwidth]{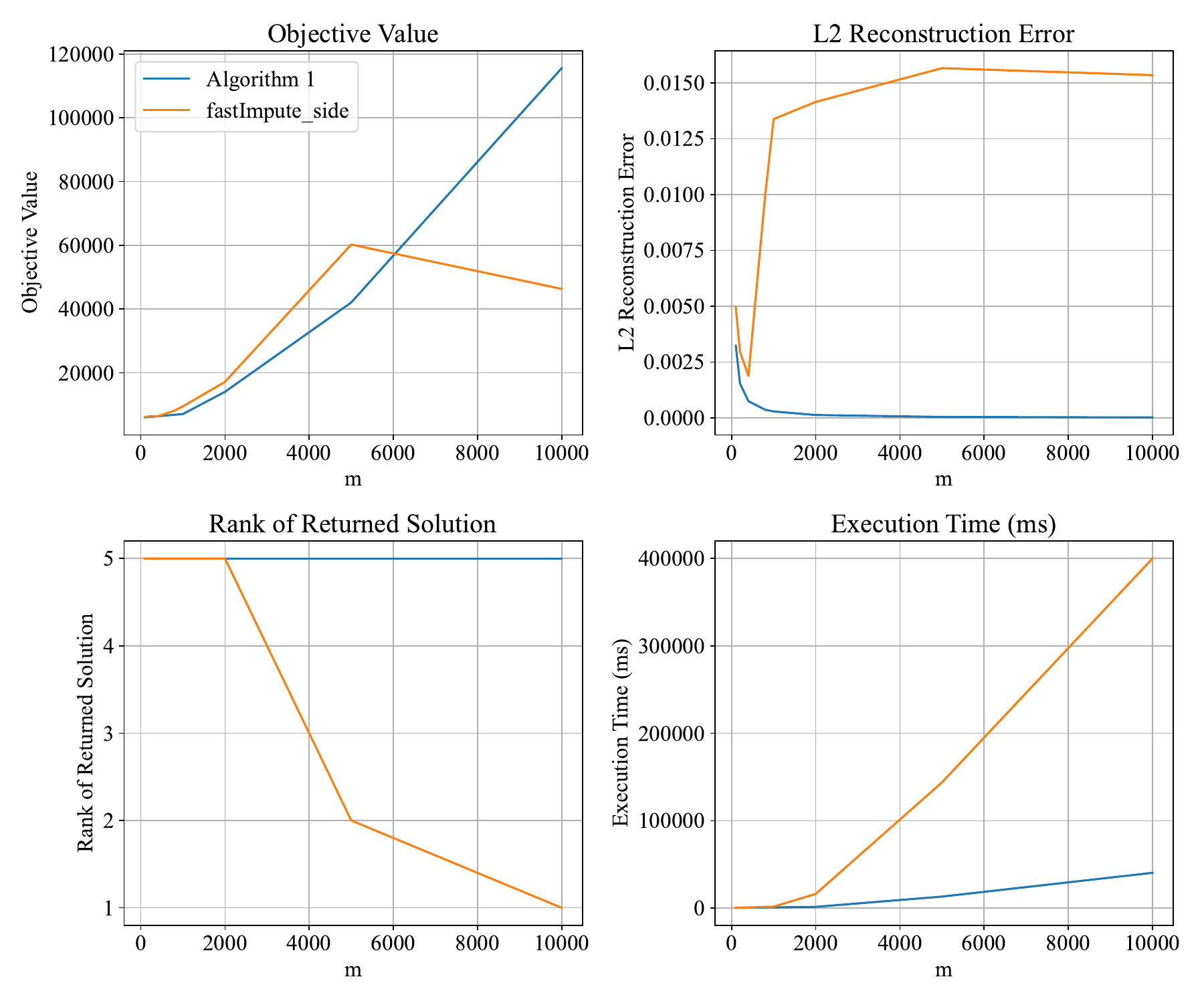}
  \caption{\color{black}Objective value (top left), $\ell_2$ reconstruction error (top right), fitted rank (bottom left) and execution time (bottom right) versus $m$ with $n=1000, k=5$ and $d=150$. Averaged over $20$ trials for each parameter configuration.}
  \label{fig_lrml4:synthetic_M_zoom}
\end{figure*}

We report the objective value, $\ell_2$ reconstruction error, fitted rank and execution time for Algorithm \ref{alg:ADMM}, Fast-Impute, {\color{black}Fast-Impute-Side} and Soft-Impute in Figure \ref{fig_lrml4:synthetic_M}. {\color{black}For ease of comparison between Algorithm \ref{alg:ADMM} and Fast-Impute-Side, we plot only the performance of these two methods in Figure \ref{fig_lrml4:synthetic_M_zoom}.} We additionally report the objective value, reconstruction error and execution time for ScaledGD, Algorithm \ref{alg:ADMM}, Fast-Impute, {\color{black}Fast-Impute-Side,} Soft-Impute and Iterative-SVD in Tables \ref{tbl_lrml4:M_objective}, \ref{tbl_lrml4:M_error} and \ref{tbl_lrml4:M_time} of Appendix \ref{sec_lrml4:app_syn_exp}. In Figure \ref{fig_lrml4:synthetic_M_timing}, we plot the average cumulative time spent solving subproblems \eqref{opt_lrml4:U_prob}, \eqref{opt_lrml4:V_prob}, \eqref{opt_lrml4:P_prob}, \eqref{opt_lrml4:Z_prob} during the execution of Algorithm \ref{alg:ADMM} versus $m$. Our main findings from this set of experiments are a follows:

\begin{enumerate}
     {\color{black}\item Here again, Algorithm \ref{alg:ADMM} and Fast-Impute-Side systematically produce higher quality solutions than ScaledGD, Fast-Impute, Soft-Impute and Iterative-SVD (see Table \ref{tbl_lrml4:M_objective}). For trials with $m \leq 5000$, Algorithm \ref{alg:ADMM} outputs a solution whose objective value is on average $12\%$ lesser than the objective value achieved by the best performing alternative method (Fast-Impute-Side). Fast-Impute-Side produced solutions with lower objective values for $m=10000$. Here again, ScaledGD produces the weakest average objective value across these experiments.}

    \item {\color{black}In terms of $\ell_2$ reconstruction error, Algorithm \ref{alg:ADMM} again systematically produces solutions that are of higher quality than ScaledGD, Fast-Impute, Fast-Impute-Side, Soft-Impute and Iterative-SVD (see Table \ref{tbl_lrml4:M_error}), often achieving an error that is an order of magnitude superior than the next best method. On average, Algorithm \ref{alg:ADMM} outputs a solution whose $\ell_2$ reconstruction error is $77\%$ lesser than the reconstruction error achieved by the best performing alternative method (either Fast-Impute-Side or Soft-Impute).}
    
    \item We observe that the fitted rank of the solutions returned by Algorithm \ref{alg:ADMM}, ScaledGD and Fast-Impute always matched the specified target rank as would be expected, but surprisingly the solutions returned by Soft-Impute and Iterative-SVD were always of full rank despite the fact that these methods were provided with the target rank explicitly. This is potentially due to a numerical issues in the computation of the rank due to presence of extremely small singular values.
    
    \item The runtime of Algorithm \ref{alg:ADMM} exhibits the most favorable scaling behavior among the methods tested in these experiments. For instances with $m \geq 2000$, Table \ref{tbl_lrml4:M_time} shows that Algorithm \ref{alg:ADMM} had the fastest runtime. For instances with $m < 2000$, ScaledGD had the fastest execution time but produced low quality solutions. The runtime of all methods tested grow super-linearly with $m$.
    
    \item Figure \ref{fig_lrml4:synthetic_M_timing} illustrates that the computation of the solution for \eqref{opt_lrml4:U_prob} and \eqref{opt_lrml4:V_prob} are the computational bottlenecks in the execution of Algorithm \ref{alg:ADMM} in this set of experiments while the computation of the solution for \eqref{opt_lrml4:Z_prob} and \eqref{opt_lrml4:P_prob} appear to be a constant function of $m$. This observation is consistent with the complexity analysis performed for each subproblem of Algorithm \ref{alg:ADMM} in Section \ref{sec_lrml4:admm}. Indeed, this analysis indicated that solve times for \eqref{opt_lrml4:Z_prob} and \eqref{opt_lrml4:P_prob} are independent of $m$ while the solve times for \eqref{opt_lrml4:U_prob} and \eqref{opt_lrml4:V_prob} scale linearly with $m$ when the number of threads $w$ satisfies $w < m$. 
\end{enumerate}

\begin{figure*}[h]\centering
  \includegraphics[width=0.9\textwidth]{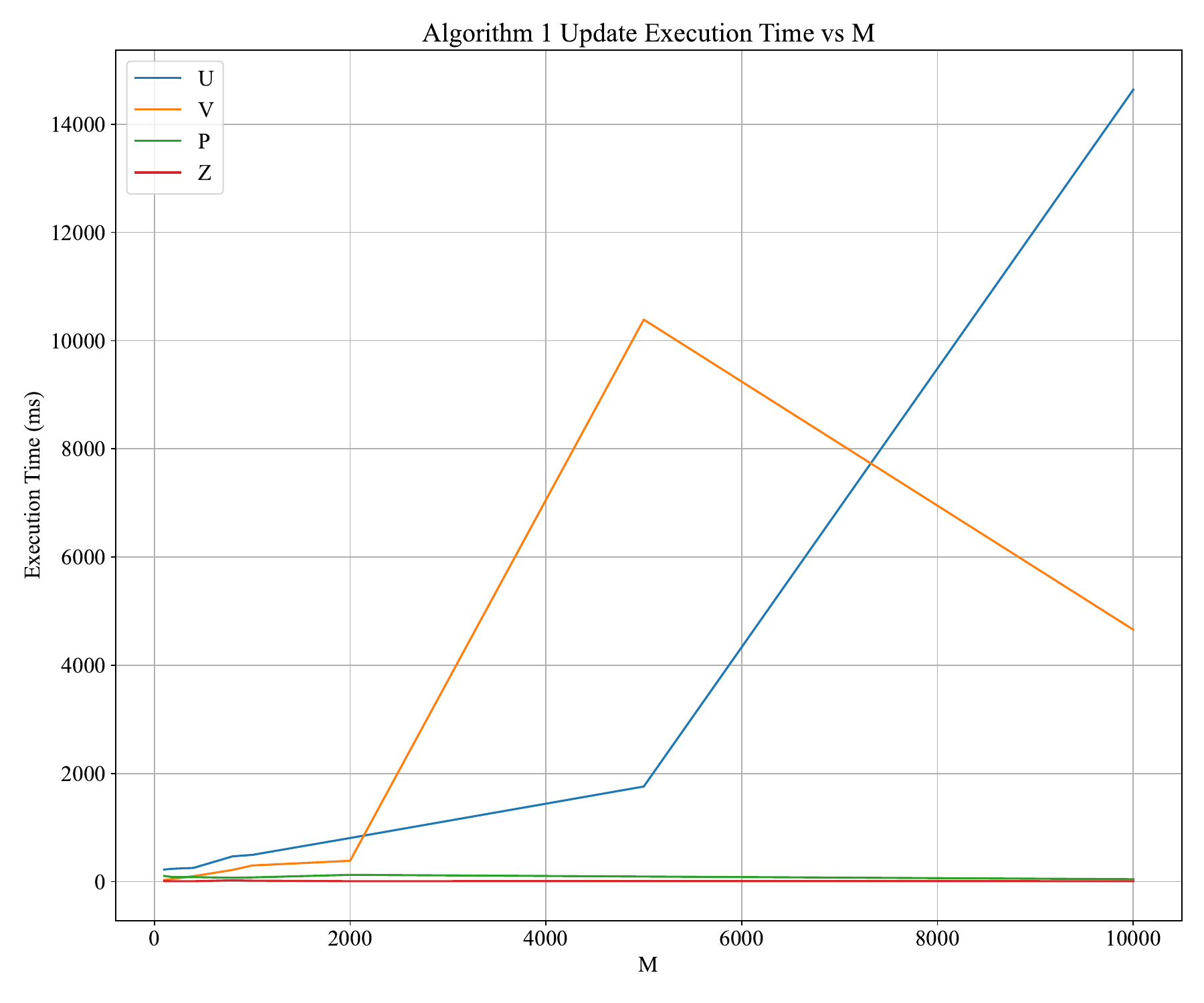}
  \caption{Cumulative time spent solving each subproblem of Algorithm \ref{alg:ADMM} versus $m$ with $n=1000, k=5$ and $d=150$. Averaged over $20$ trials for each parameter configuration.}
  \label{fig_lrml4:synthetic_M_timing}
\end{figure*}

\subsection{Sensitivity to Side Information Dimension} \label{sssec_lrml4:d_exp}

We present a comparison of Algorithm \ref{alg:ADMM} with ScaledGD, Fast-Impute, {\color{black}Fast-Impute-Side,} Soft-Impute and Iterative-SVD as we vary the dimension of the side information $d$. In these experiments, we fixed $n=1000, m = 100$ and $k=5$ across all trials. We considered values of $d$ in the collection $\{10, 50, 100, 150, 200, 250, 500, 1000\}$ and we performed $20$ trials for each value of $d$.

\begin{figure*}[h]\centering
  \includegraphics[width=0.9\textwidth]{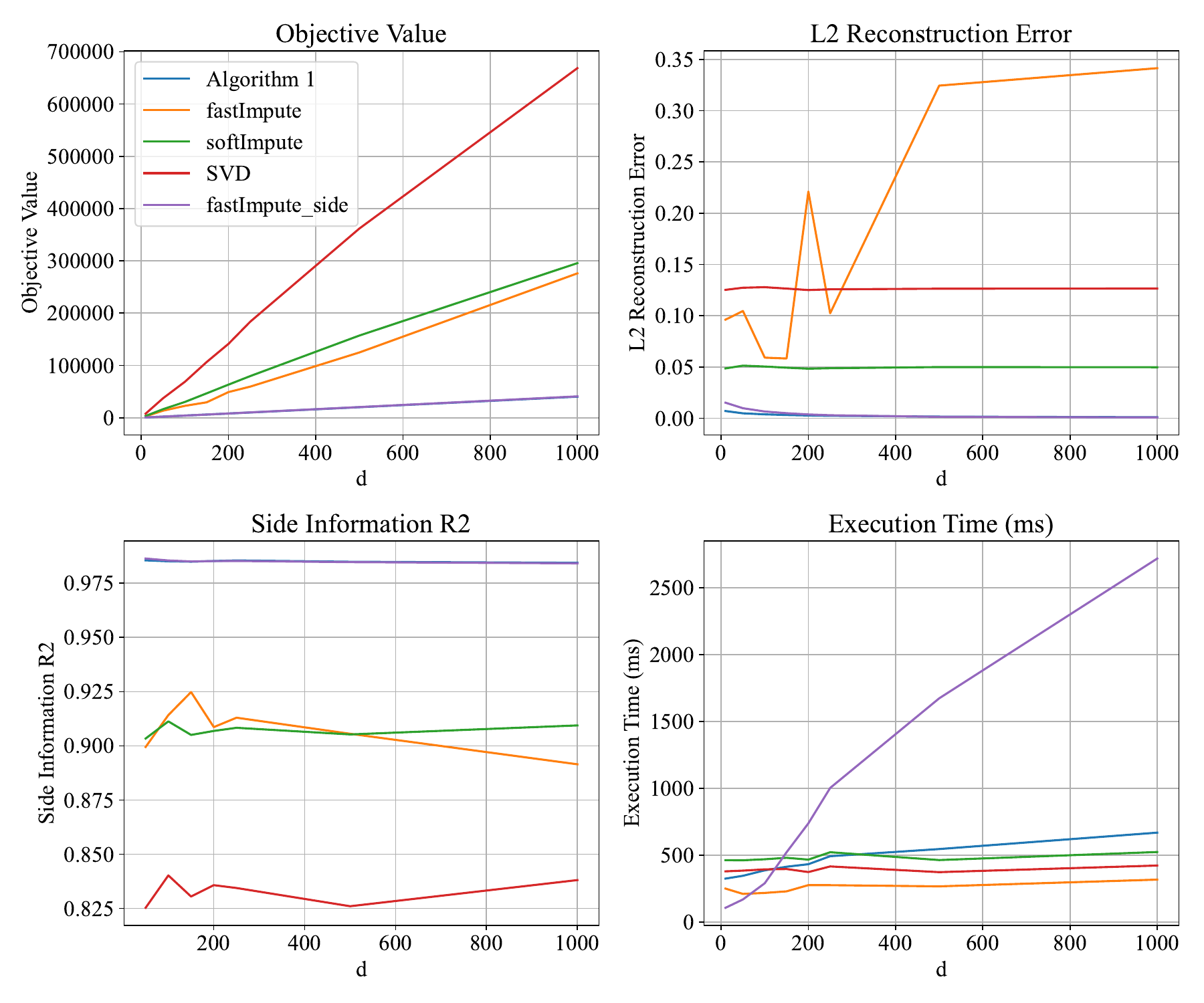}
  \caption{\color{black}Objective value (top left), $\ell_2$ reconstruction error (top right), side information $R^2$ (bottom left) and execution time (bottom right) versus $d$ with $n=1000, m=100$ and $k=5$. Averaged over $20$ trials for each parameter configuration.}
  \label{fig_lrml4:synthetic_D}
\end{figure*}

\begin{figure*}[h]\centering
  \includegraphics[width=0.9\textwidth]{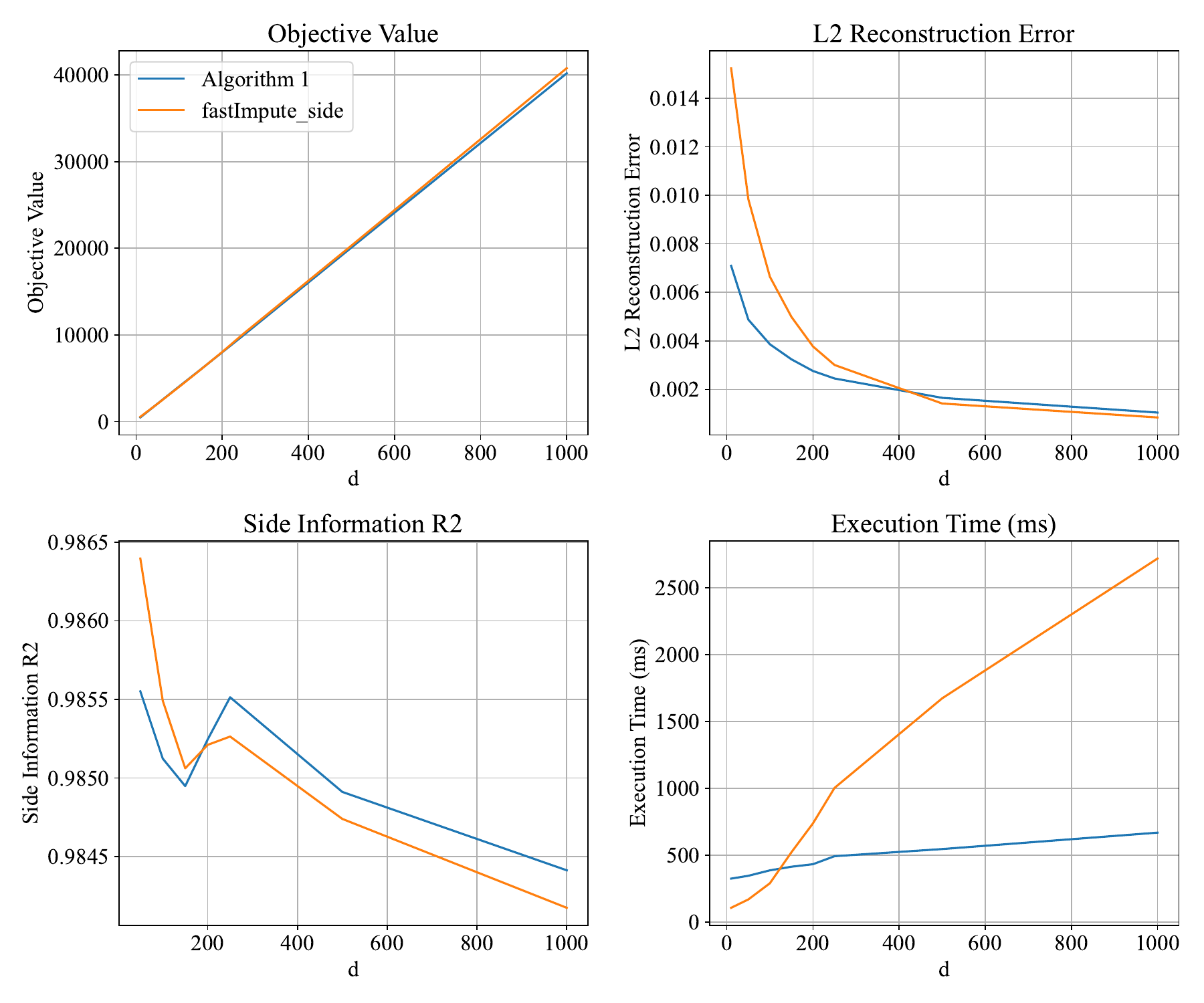}
  \caption{\color{black}Objective value (top left), $\ell_2$ reconstruction error (top right), side information $R^2$ (bottom left) and execution time (bottom right) versus $d$ with $n=1000, m=100$ and $k=5$. Averaged over $20$ trials for each parameter configuration.}
  \label{fig_lrml4:synthetic_D_zoom}
\end{figure*}

We report the objective value, $\ell_2$ reconstruction error, side information $R^2$ and execution time for Algorithm \ref{alg:ADMM}, Fast-Impute, {\color{black}Fast-Impute-Side,} Soft-Impute and Iterative-SVD in Figure \ref{fig_lrml4:synthetic_D}. {\color{black}For ease of comparison between Algorithm \ref{alg:ADMM} and Fast-Impute-Side, we plot only the performance of these two methods in Figure \ref{fig_lrml4:synthetic_D_zoom}.} We additionally report the objective value, reconstruction error, side information $R^2$ and execution time for ScaledGD, Algorithm \ref{alg:ADMM}, Fast-Impute, {\color{black}Fast-Impute-Side,} Soft-Impute and Iterative-SVD in Tables \ref{tbl_lrml4:D_objective}, \ref{tbl_lrml4:D_error}, \ref{tbl_lrml4:D_r2} and \ref{tbl_lrml4:D_time} of Appendix \ref{sec_lrml4:app_syn_exp}. In Figure \ref{fig_lrml4:synthetic_D_timing}, we plot the average cumulative time spent solving subproblems \eqref{opt_lrml4:U_prob}, \eqref{opt_lrml4:V_prob}, \eqref{opt_lrml4:P_prob}, \eqref{opt_lrml4:Z_prob} during the execution of Algorithm \ref{alg:ADMM} versus $d$. Our main findings from this set of experiments are:

\begin{enumerate}
    \item { \color{black}Just as in Sections \ref{sssec_lrml4:n_exp} and \ref{sssec_lrml4:m_exp}, Algorithm \ref{alg:ADMM} and Fast-Impute-Side systematically produce higher quality solutions than ScaledGD, Fast-Impute, Soft-Impute and Iterative-SVD (see Table \ref{tbl_lrml4:D_objective}). ScaledGD produces the weakest average objective value across these experiments. The objective value achieved by each method appears to increase linearly as the dimension $d$ of the side information increases. There is no significant difference between the objective value achieved by the output of Algorithm $\ref{alg:ADMM}$ compared to that of Fast-Impute-Side (Algorithm $\ref{alg:ADMM}$ is on average $2\%$ lower).}

    \item {\color{black}In terms of $\ell_2$ reconstruction error, Algorithm \ref{alg:ADMM} and Fast-Impute-Side produce solutions that are of higher quality than ScaledGD, Fast-Impute, Soft-Impute and Iterative-SVD (see Table \ref{tbl_lrml4:D_error}), often achieving an error that is an order of magnitude superior than the other methods. Relative to Fast-Impute-Side, Algorithm \ref{alg:ADMM} outputs a solution whose $\ell_2$ reconstruction error is on average $23\%$ lesser across these trials. The performance of Algorithm \ref{alg:ADMM} and Fast-Impute-Side improves as $d$ increases, consistent with the intuition that recovering the partially observed matrix $\bm{A}$ becomes easier as more side information becomes available.}

    \item {\color{black}Algorithm \ref{alg:ADMM} and Fast-Impute-Side always produced solutions that achieved a strictly greater $R^2$ value when used as a predictor for the side information compared to the other methods. The $R^2$ achieved by each method is roughly constant as the value of $d$ increases. There is no significant difference between the $R^2$ achieved by the output of Algorithm $\ref{alg:ADMM}$ compared to that of Fast-Impute-Side.}
    
    \item {\color{black}The runtime of Algorithm \ref{alg:ADMM} is competitive with that of the other methods. The runtime of Algorithm \ref{alg:ADMM} is less than of Soft-Impute and Iterative-SVD but greater than that of Fast-Impute. Table \ref{tbl_lrml4:D_time} illustrates that ScaledGD was the fastest performing method, however its solutions were of the lowest quality. The runtimes of Algorithm \ref{alg:ADMM}, Fast-Impute-Side and ScaledGD grow with $d$ while Fast-Impute, Soft-Impute and iterate SVD are constant with $d$ which should be expected as these methods do not act on the side information matrix $\bm{Y}$. The runtime of Fast-Impute-Side exhibits particularly poor scaling behavior relative to the other methods as $d$ increases.}
    
    \item Figure \ref{fig_lrml4:synthetic_D_timing} illustrates that the computation of the solution for \eqref{opt_lrml4:P_prob} is the computational bottleneck in the execution of Algorithm \ref{alg:ADMM} in this set of experiments, followed next by the computation of the solution to \eqref{opt_lrml4:U_prob}. The solution times for \eqref{opt_lrml4:U_prob}, \eqref{opt_lrml4:V_prob} and \eqref{opt_lrml4:Z_prob} appear constant as a function of $d$. This is consistent with the complexity analysis from Section \ref{sec_lrml4:admm} which found that the solve time for \eqref{opt_lrml4:P_prob} is linear in $d$ while the solve time for the $3$ other subproblems are independent of $d$.
\end{enumerate}

\begin{figure*}[h]\centering
  \includegraphics[width=0.9\textwidth]{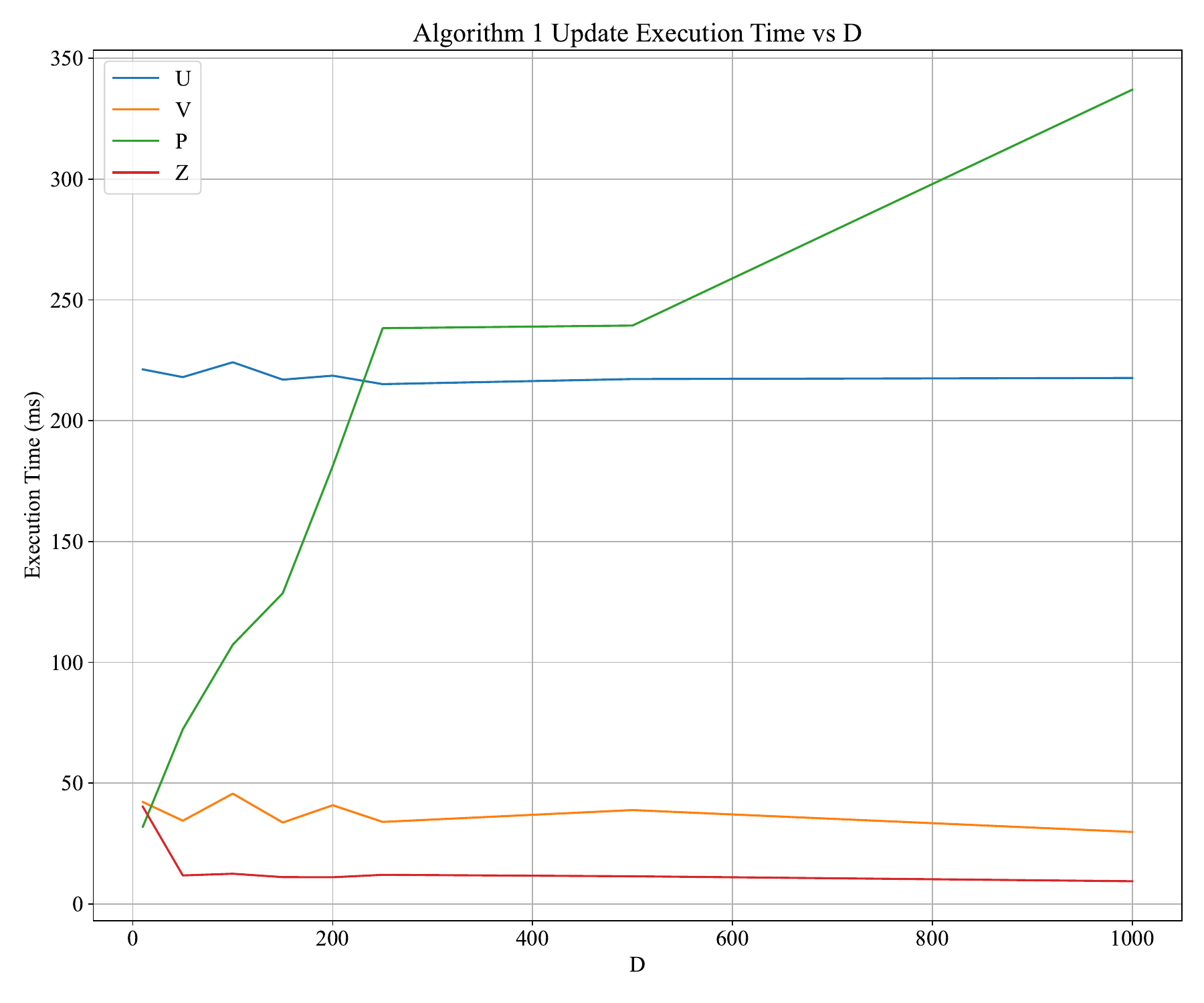}
  \caption{Cumulative time spent solving each subproblem of Algorithm \ref{alg:ADMM} versus $d$ with $n=1000, m=100$ and $k=5$. Averaged over $20$ trials for each parameter configuration.}
  \label{fig_lrml4:synthetic_D_timing}
\end{figure*}

\subsection{Sensitivity to Target Rank} \label{sssec_lrml4:k_exp}

We present a comparison of Algorithm \ref{alg:ADMM} with ScaledGD, Fast-Impute, {\color{black}Fast-Impute-Side,} Soft-Impute and Iterative-SVD as we vary the rank of the underlying matrix $k$. In these experiments, we fixed $n=1000, m = 100$ and $d=150$ across all trials. We varied $k \in \{5, 10, 15, 20, 25, 30, 35, 40\}$ and we performed $20$ trials for each value of $d$.

\begin{figure*}[h]\centering
  \includegraphics[width=0.9\textwidth]{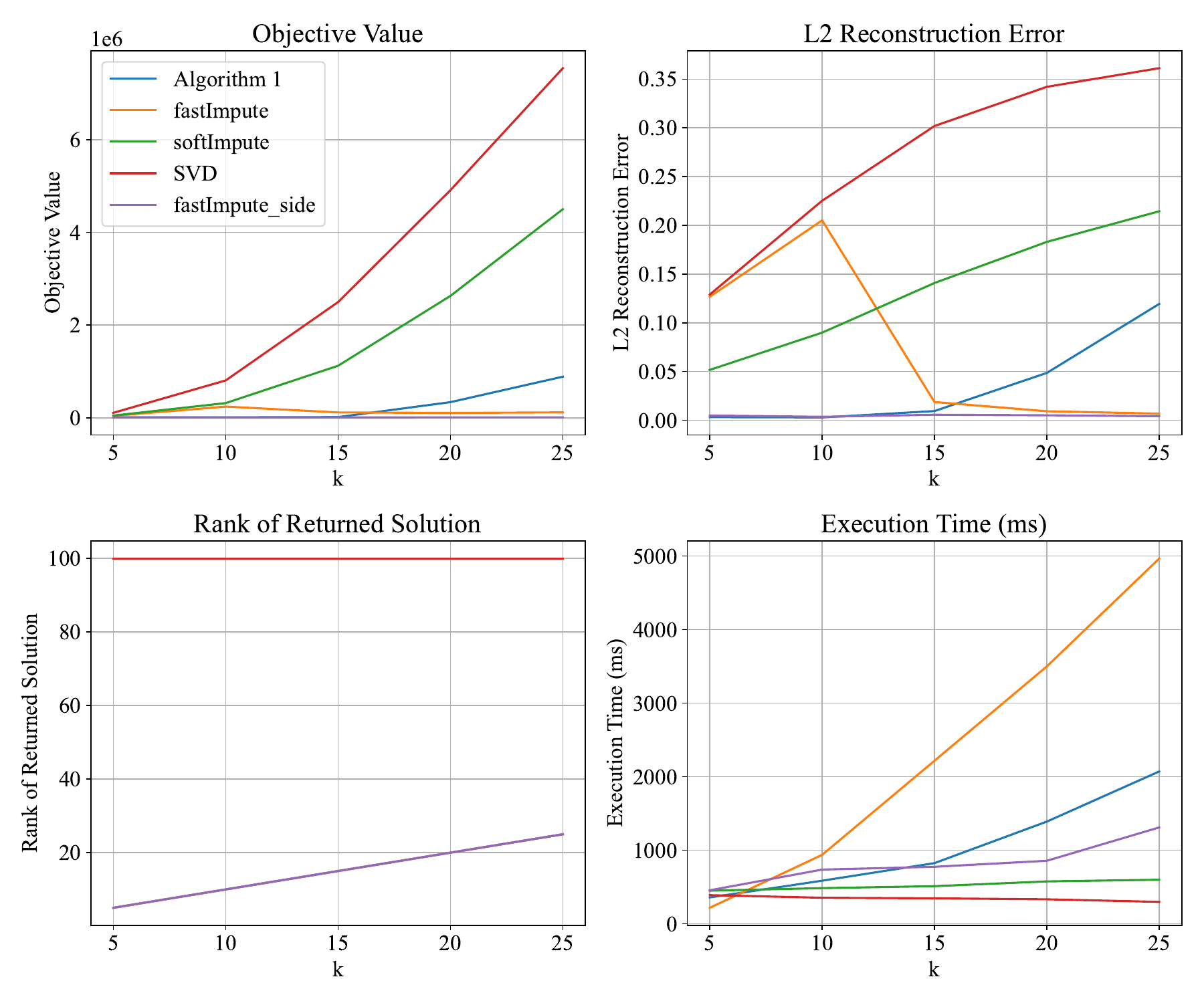}
  \caption{\color{black}Objective value (top left), $\ell_2$ reconstruction error (top right), fitted rank (bottom left) and execution time (bottom right) versus $k$ with $n=1000, m=100$ and $d=150$. Averaged over $20$ trials for each parameter configuration.}
  \label{fig_lrml4:synthetic_K}
\end{figure*}

\begin{figure*}[h]\centering
  \includegraphics[width=0.9\textwidth]{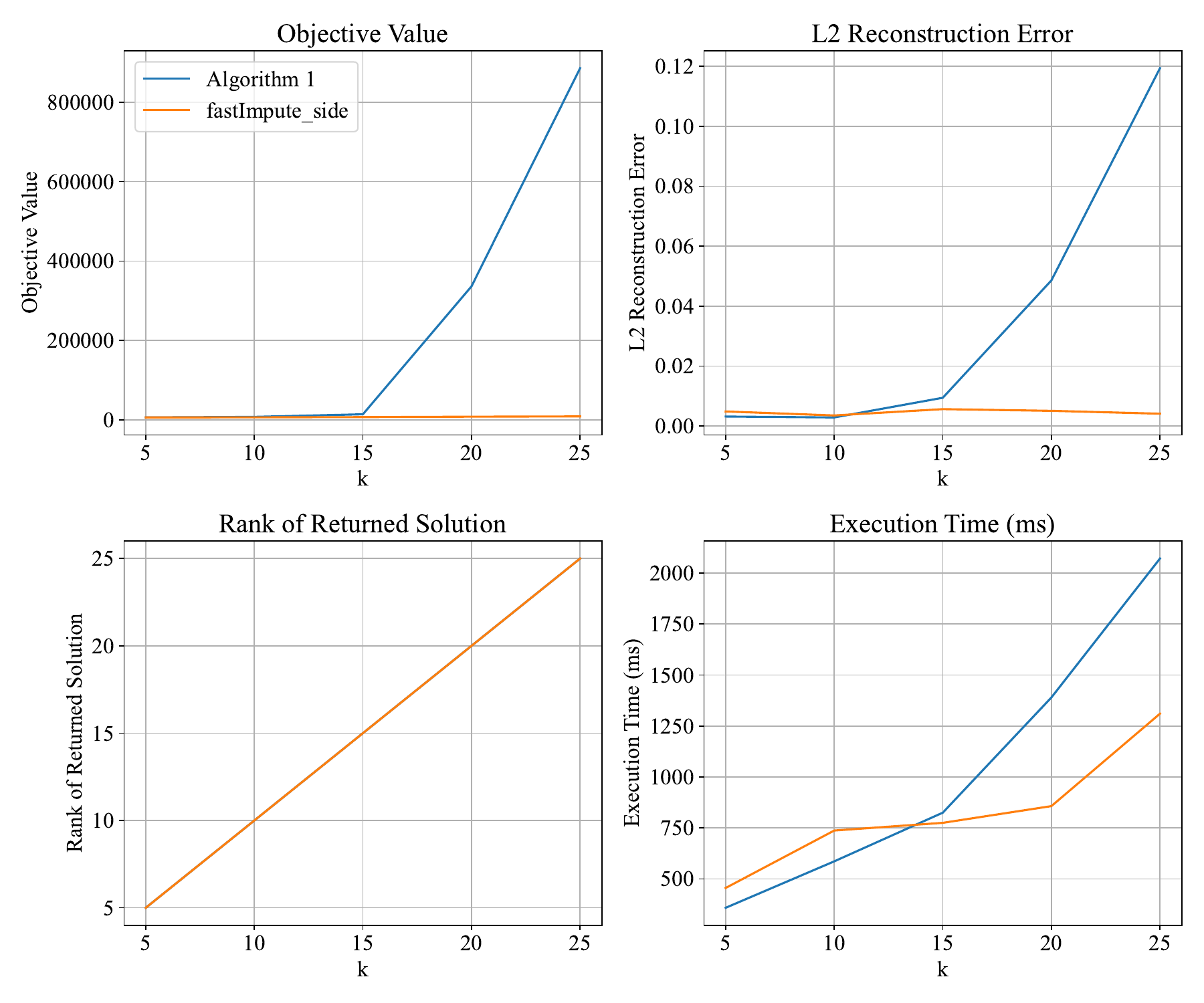}
  \caption{\color{black}Objective value (top left), $\ell_2$ reconstruction error (top right), fitted rank (bottom left) and execution time (bottom right) versus $k$ with $n=1000, m=100$ and $d=150$. Averaged over $20$ trials for each parameter configuration.}
  \label{fig_lrml4:synthetic_K_zoom}
\end{figure*}

We report the objective value, $\ell_2$ reconstruction error, fitted rank and execution time for Algorithm \ref{alg:ADMM}, Fast-Impute, {\color{black}Fast-Impute-Side,} Soft-Impute and Iterative-SVD in Figure \ref{fig_lrml4:synthetic_K}. {\color{black}For ease of comparison between Algorithm \ref{alg:ADMM} and Fast-Impute-Side, we plot only the performance of these two methods in Figure \ref{fig_lrml4:synthetic_K_zoom}.} We additionally report the objective value, reconstruction error and execution time for ScaledGD, Algorithm \ref{alg:ADMM}, Fast-Impute, {\color{black}Fast-Impute-Side} Soft-Impute and Iterative-SVD in Tables \ref{tbl_lrml4:K_objective}, \ref{tbl_lrml4:K_error} and \ref{tbl_lrml4:K_time} of Appendix \ref{sec_lrml4:app_syn_exp}. In Figure \ref{fig_lrml4:synthetic_K_timing}, we plot the average cumulative time spent solving subproblems \eqref{opt_lrml4:U_prob}, \eqref{opt_lrml4:V_prob}, \eqref{opt_lrml4:P_prob}, \eqref{opt_lrml4:Z_prob} during the execution of Algorithm \ref{alg:ADMM} versus $k$. Our main findings from this set of experiments are as follows:

\begin{enumerate}
    \item {\color{black}Unlike in Sections \ref{sssec_lrml4:n_exp}, \ref{sssec_lrml4:m_exp} and \ref{sssec_lrml4:d_exp}, Algorithm \ref{alg:ADMM} underperformed Fast-Impute-Side in terms of objective value (see Table \ref{tbl_lrml4:K_objective}). Fast-Impute-Side was the best performing method in $7$ configurations and Soft-Impute was best in the remaining configuration. ScaledGD produces the weakest average objective value across these experiments.}

    \item {\color{black}In terms of $\ell_2$ reconstruction error, Algorithm \ref{alg:ADMM} produced higher quality solutions than all benchmark methods in $2$ out of $8$ of the tested parameter configurations where $k \leq 10$ (see Table \ref{tbl_lrml4:K_error}). Fast-Impute-Side produced solutions achieving the lowest error in the other $6$ parameter configurations.}

    \item {\color{black}The fitted rank of the solutions returned by Algorithm \ref{alg:ADMM}, ScaledGD, Fast-Impute and Fast-Impute-Side always matched the specified target rank, but the solutions returned by Soft-Impute and Iterative-SVD were always of full rank despite the fact that these methods were provided with the target rank explicitly.}
    
    \item {\color{black}The runtime of Algorithm \ref{alg:ADMM} is competitive with that of the other methods. Table \ref{tbl_lrml4:K_time} illustrates that ScaledGD was the fastest performing method, however its solutions were of the lowest quality. The runtime of Algorithm \ref{alg:ADMM} is most competitive with Fast-Impute-Side, Soft-Impute and Iterative-SVD for small values of $k$.}
    
    \item Figure \ref{fig_lrml4:synthetic_K_timing} illustrates that the computation of the solution for \eqref{opt_lrml4:U_prob} is the computational bottleneck in the execution of Algorithm \ref{alg:ADMM} in this set of experiments, followed next by the computation of the solution to \eqref{opt_lrml4:V_prob} and \eqref{opt_lrml4:Z_prob}.
\end{enumerate}

\begin{figure*}[h]\centering
  \includegraphics[width=0.9\textwidth]{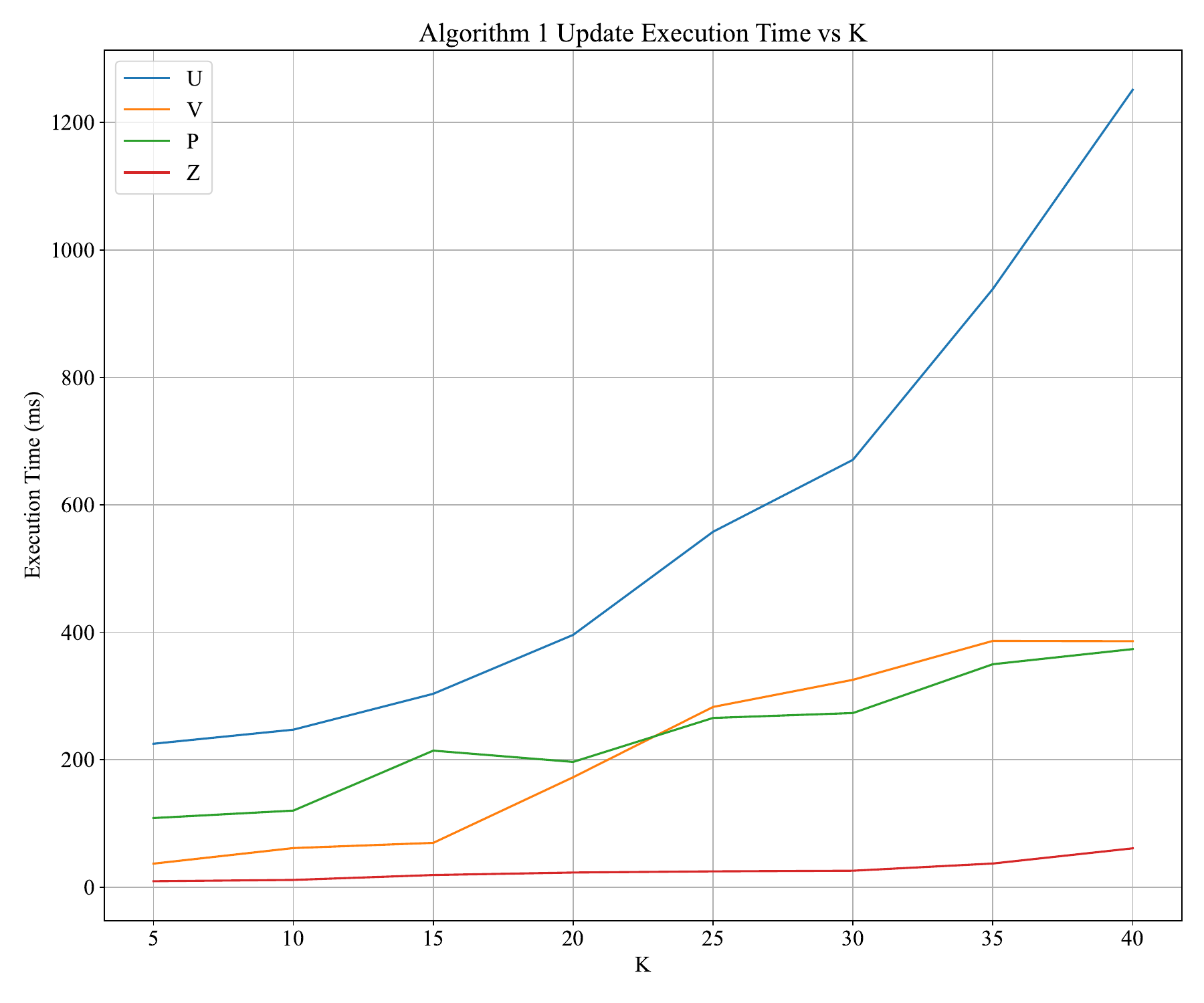}
  \caption{Cumulative time spent solving each subproblem of Algorithm \ref{alg:ADMM} versus $k$ with $n=1000, m=100$ and $d=150$. Averaged over $20$ trials for each parameter configuration.}
  \label{fig_lrml4:synthetic_K_timing}
\end{figure*}

\subsection{Real World Data Experiments} \label{ssec_lrml4:real_experiments}

We seek to answer the following question: how does the performance of Algorithm \ref{alg:ADMM} compare to Fast-Impute {\color{black}and Fast-Impute-Side} on real world data? We consider the Netflix Prize Dataset augmented with features from the TMDB Database {\color{black}as side information}.

The Netflix Prize Dataset consists of greater than $10$ million user ratings of movies spread across more than $450000$ users and $17000$ movies. To prepare data for our experiment, we first pull the following numerical features from the TMDB database:
\begin{enumerate}
    \item Total Budget;
    \item Revenue;
    \item Popularity;
    \item Average Vote;
    \item Vote Count;
    \item Total Runtime.
\end{enumerate} {\color{black}Here, the matrix $\bm{X} \in \mathbb{R}^{n \times m}$ from \eqref{opt_lrml4:MC_primal} corresponds to the matrix of movie-user ratings, $\Omega \subseteq [n] \times [m]$ corresponds to indices of $\bm{X}$ for which the user ratings of movies are observed and $\bm{Y} \in \mathbb{R}^{n \times d}$ corresponds to the TMDB Database features.} Note that many movies did not have all $6$ features available from TMDB. We constructed two datasets for our experimentation. In Dataset $1$, we restricted the dataset to movies that had all $6$ features present {\color{black}($d=6$)} and to users who had given at least $5$ ratings across those movies. After performing this filtering, we were left with $n = 3430$ movies and $m = 467364$ users. In Dataset $2$, we considered the $4$ most frequent features (popularity, average vote, vote count, total runtime) and restricted the dataset to movies that had all $4$ of these features present {\color{black}($d=4$)} and to users who had given at least $5$ ratings to any of these movies. After performing this filtering, we were left with $n = 10574$ movies and $m = 470706$ users. For each dataset, we conducted experiments for values of the target rank $k$ in the set $k \in \{3, 4, 5, 6, 7, 8, 9, 10\}$. For each value of $k$, we conducted $5$ trials where a given trial consisted of randomly withholding $20\%$ of the data as test data, estimating a low rank matrix on the $80\%$ training data and evaluating the out of sample $\ell_2$ reconstruction error on the withheld data.

We report the in sample $\ell_2$ reconstruction error, out of sample $\ell_2$ reconstruction error and execution time for Algorithm \ref{alg:ADMM}{\color{black}, Fast-Impute and Fast-Impute-Side} in addition to the average cumulative time spent solving subproblems \eqref{opt_lrml4:U_prob}, \eqref{opt_lrml4:V_prob}, \eqref{opt_lrml4:P_prob}, \eqref{opt_lrml4:Z_prob} during the execution of Algorithm \ref{alg:ADMM} versus $k$ on Dataset $1$ and Dataset $2$ in Figures \ref{fig_lrml4:netflix_6Y} and \ref{fig_lrml4:netflix_4Y} respectively. We additionally report the in sample $\ell_2$ reconstruction error, out of sample $\ell_2$ reconstruction error and execution time for Algorithm \ref{alg:ADMM}{\color{black}, Fast-Impute and Fast-Impute-Side} on Dataset $1$ and Dataset $2$ in Tables \ref{tbl_lrml4:netflix_6Y} and \ref{tbl_lrml4:netflix_4Y} of Appendix \ref{sec_lrml4:app_syn_exp} respectively. We report only results for Fast-Impute {\color{black}and Fast-Impute-Side} as benchmark{\color{black}s} because Soft-Impute, Iterative-SVD and ScaledGD failed to terminate after a $20$ hour time limit across all experiments involving Dataset $1$ and Dataset $2$. Fast-Impute failed to terminate after a $20$ hour time limit across all experiments involving Dataset $2$ and across experiments involving Dataset $1$ for which the target rank was greater than $6$. Note that Fast-Impute {\color{black}and Fast-Impute-Side were} the best performing benchmark method{\color{black}s} across the synthetic data experiments so {\color{black}they} consist of a reasonable method to compare against. In Figure \ref{fig_lrml4:netflix_r2}, we report the coefficient of determination ($R^2$) achieved by Algorithm \ref{alg:ADMM} on the side information both overall and on individual features in Dataset $1$ and Dataset $2$. Our main findings from this set of experiments are as follows:

\begin{figure*}[h]\centering
  \includegraphics[width=0.9\textwidth]{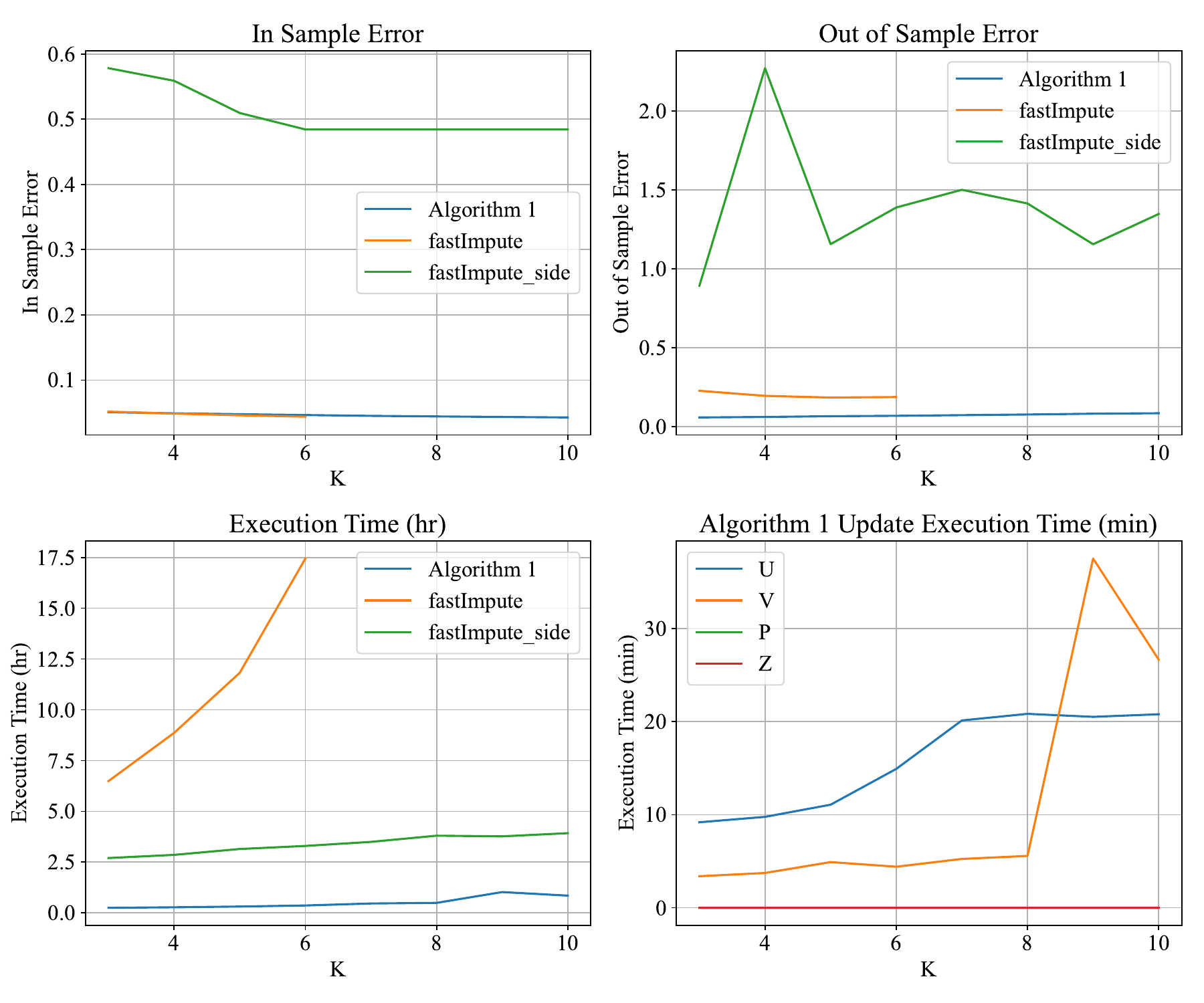}
  \caption{\color{black}In sample $\ell_2$ reconstruction error (top left), out of sample $\ell_2$ reconstruction error (top right), execution time (bottom left) and subproblem execution time (bottom right) versus $k$ on Netflix Prize Dataset $1$. Averaged over $5$ trials.}
  \label{fig_lrml4:netflix_6Y}
\end{figure*}

\begin{figure*}[h]\centering
  \includegraphics[width=0.9\textwidth]{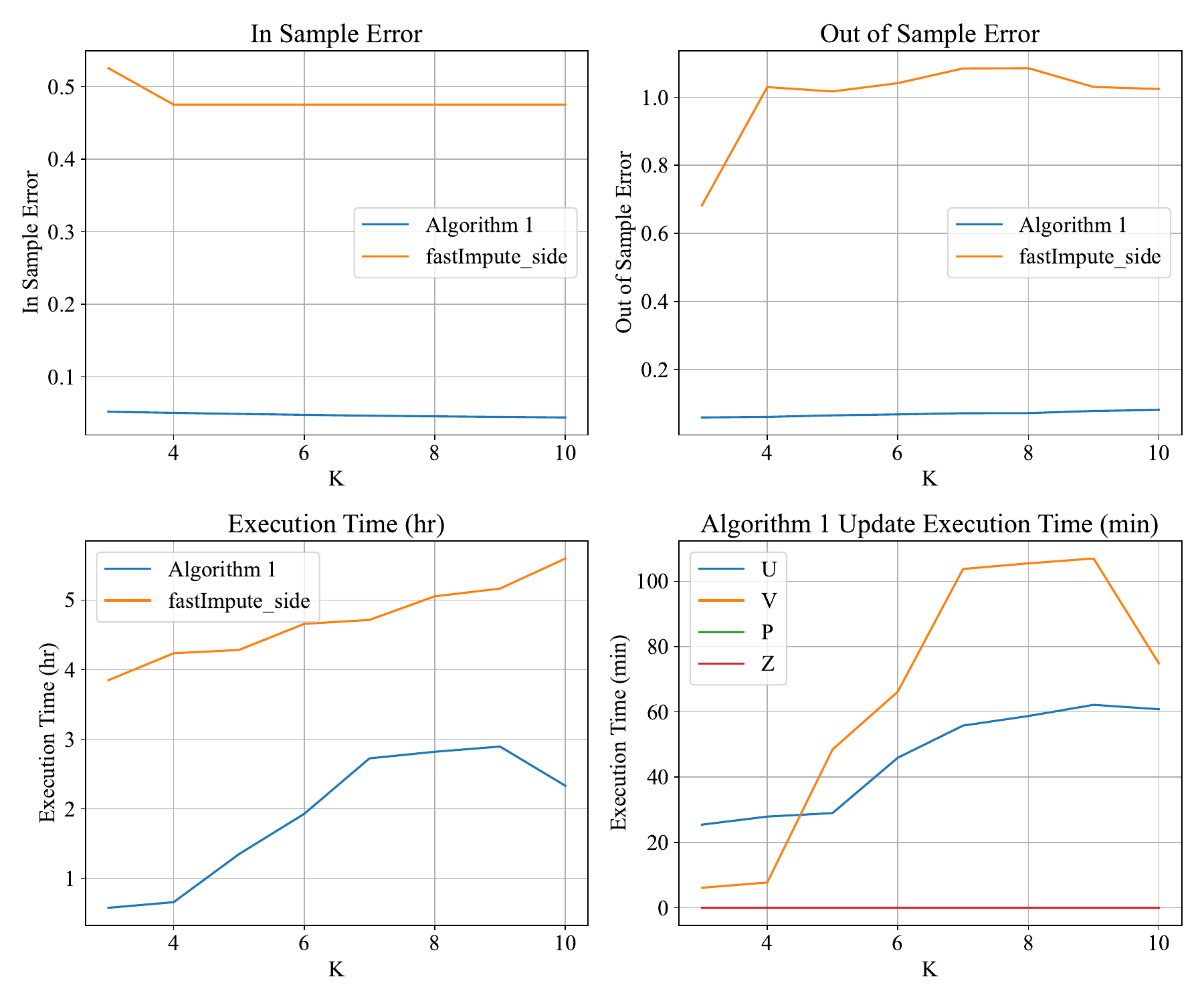}
  \caption{\color{black}In sample $\ell_2$ reconstruction error (top left), out of sample $\ell_2$ reconstruction error (top right), execution time (bottom left) and subproblem execution time (bottom right) versus $k$ on Netflix Prize Dataset $2$. Averaged over $5$ trials.}
  \label{fig_lrml4:netflix_4Y}
\end{figure*}

\begin{figure*}[h]\centering
  \includegraphics[width=0.9\textwidth]{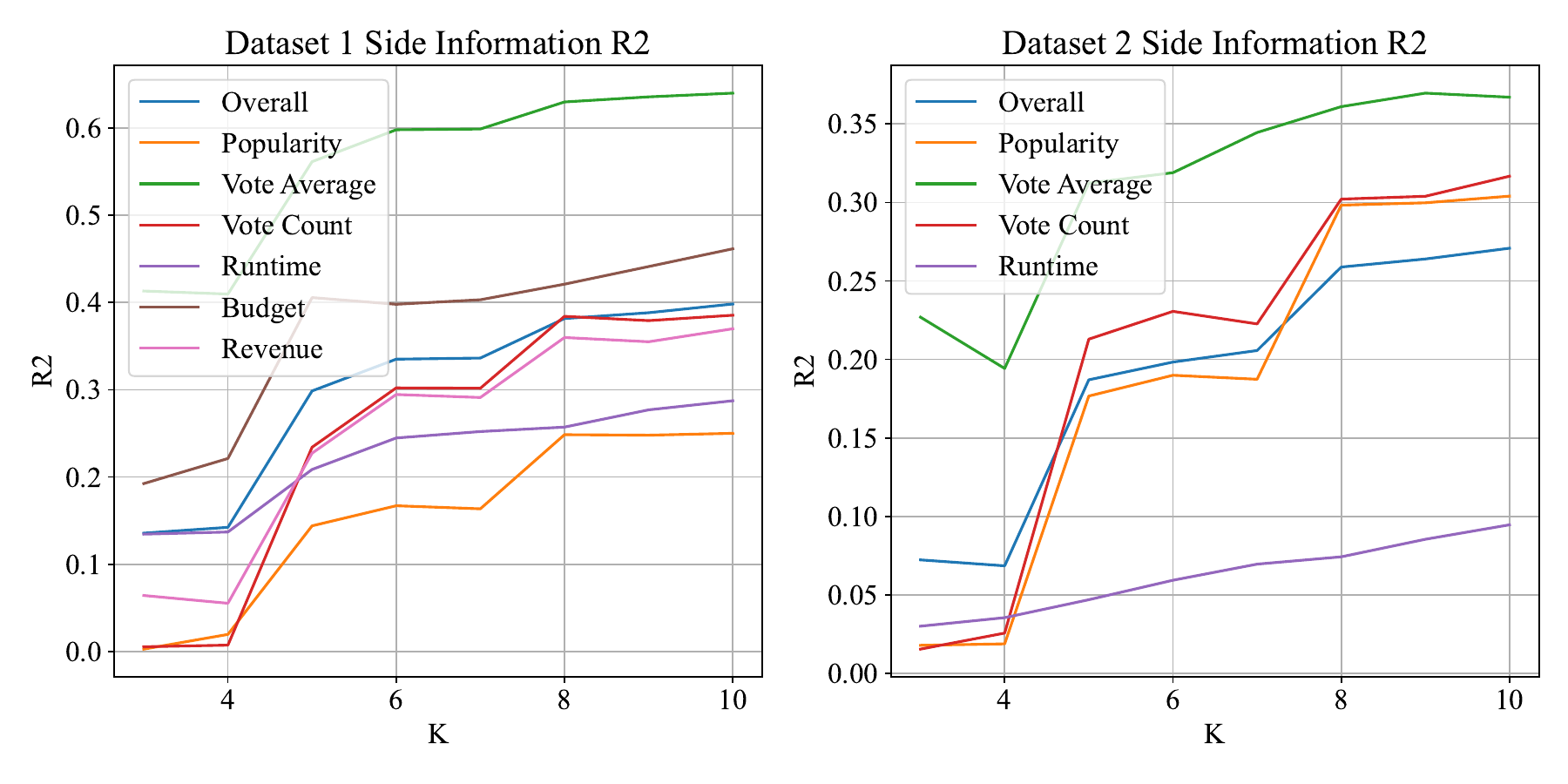}
  \caption{Algorithm \ref{alg:ADMM} side information $R^2$ on Netflix Prize Dataset 1 (left) and Dataset 2 (right) versus $k$. Averaged over $5$ trials.}
  \label{fig_lrml4:netflix_r2}
\end{figure*}

\begin{enumerate}
    \item {\color{black}Fast-Impute in general produced solutions that achieved slightly lower in sample error but significantly higher out of sample error than the solutions produced by Algorithm \ref{alg:ADMM}. Out of sample error is a much more important metric than in sample error as out of sample error captures the ability of a candidate solution to generalize to unseen data. Fast-Impute-Side produced solutions that had both higher in sample error and out of sample error than the solutions produced by Algorithm \ref{alg:ADMM}. Across the experiments in which Fast-Impute terminated within the specified time limit, Algorithm \ref{alg:ADMM} produced solutions that on average achieved $67\%$ lower out of sample error than Fast-Impute. The high out of sample error (relative to in sample error) of Fast-Impute and Fast-Impute-Side suggests that these methods are likely over-fitting the training data. As is expected, in sample error decreased as the rank of the reconstruction increased. In the case of Algorithm \ref{alg:ADMM}, out of sample error increased as the reconstruction rank increased, suggesting that Algorithm \ref{alg:ADMM} was over-fitting the data as rank increased. }

    \item {\color{black}Algorithm \ref{alg:ADMM} exhibited significantly superior scalability than both Fast-Impute and Fast-Impute-Side. Across the experiments in which Fast-Impute terminated within the specified time limit, Algorithm \ref{alg:ADMM} required on average $97\%$ less time to execute than Fast-Impute (the faster of the methods between Fast-Impute and Fast-Impute-Side). The execution time of Algorithm \ref{alg:ADMM} on the largest tested instance (Dataset $2, k=10$) was less than the execution time of Fast-Impute on the smallest tested instance (Dataset $1, k=3$).}
    
    \item The computational bottleneck of Algorithm \ref{alg:ADMM} is the solution time of subproblems \eqref{opt_lrml4:U_prob} and \eqref{opt_lrml4:V_prob}. Solving these two subproblems requires an order of magnitude more time than solving subproblems \eqref{opt_lrml4:P_prob} and \eqref{opt_lrml4:Z_prob}

    \item Figure \ref{fig_lrml4:netflix_r2} illustrates that the reconstructed matrix produced by Algorithm \ref{alg:ADMM} becomes a better predictor of the side information as the value of $k$ increases. This is to be expected as increasing $k$ increases model complexity. We see that popularity and runtime are the most difficult features to predict as a linear function of the reconstructed matrix while vote average and budget are the easiest features to predict.
\end{enumerate}

\subsection{Summary of Findings}

{\color{black} We now summarize our findings from our numerical experiments. In Sections \ref{sssec_lrml4:n_exp}-\ref{sssec_lrml4:k_exp}, we see that across all experiments using synthetic data and target rank $k \leq 10$, Algorithm \ref{alg:ADMM} produces solutions that achieve on average $2.3\%$ lower objective value and $41\%$ lower $\ell_2$ reconstruction error than the solutions returned by the best performing benchmark method (usually Fast-Impute-Side). In the regime where $k > 10$, we see in Section \ref{sssec_lrml4:k_exp} that Fast-Impute-Side outperforms Algorithm \ref{alg:ADMM}. We see that the execution time of Algorithm \ref{alg:ADMM} is competitive with and often notably faster than the benchmark methods on synthetic data. Our computational results are consistent with the complexity analysis performed in Section \ref{sec_lrml4:admm} for Problems \eqref{opt_lrml4:U_prob}, \eqref{opt_lrml4:V_prob}, \eqref{opt_lrml4:P_prob} and \eqref{opt_lrml4:Z_prob}. We observe that solution time for \eqref{opt_lrml4:P_prob} becomes the bottleneck as the target rank $k$ scales, otherwise the solution time for \eqref{opt_lrml4:U_prob} is the bottleneck. On real world data comprised of the Netflix Prize Dataset augmented with features from the TMDB Database, Algorithm \ref{alg:ADMM} produces solutions that achieve $67\%$ lower out of sample error than Fast-Impute, the best performing benchmark, in $97\%$ less execution time. }

\section{Conclusion} \label{sec_lrml4:conclusion}

In this paper, we introduced Problem \eqref{opt_lrml4:MC_primal} which seeks to reconstruct a partially observed matrix that is predictive of fully observed side information. We illustrate that \eqref{opt_lrml4:MC_primal} has a natural interpretation as a robust optimization problem and can be reformulated as a mixed-projection optimization problem. We derive a semidefinite cone relaxation \eqref{opt_lrml4:MC_SDP_relax} to \eqref{opt_lrml4:MC_primal} and we present Algorithm \ref{alg:ADMM}, a mixed-projection alternating direction method of multipliers algorithm that obtains scalable, high quality solutions to \eqref{opt_lrml4:MC_primal}. We rigorously benchmark the performance of Algorithm \ref{alg:ADMM} on synthetic and real world data against benchmark methods Fast-Impute, {\color{black} Fast-Impute-Side,} Soft-Impute, Iterative-SVD and ScaledGD. We find that across all synthetic data experiments with {\color{black}$k \leq 10$}, Algorithm \ref{alg:ADMM} outputs solutions that achieve on average {\color{black}$2.3\%$} lower objective value in \eqref{opt_lrml4:MC_primal} and {\color{black}$41\%$} lower $\ell_2$ reconstruction error than the solutions returned by the best performing benchmark method. For the $5$ synthetic data experiments with $k > 15$, Fast-Impute returns superior quality solutions than Algorithm \ref{alg:ADMM}, however the former takes on average $3$ times as long as Algorithm \ref{alg:ADMM} to execute. The runtime of Algorithm \ref{alg:ADMM} is competitive with and often superior to that of the benchmark methods. Algorithm \ref{alg:ADMM} is able to solve problems with $n = 10000$ rows and $m = 10000$ columns in less than a minute. On real world data from the Netflix Prize competition, Algorithm \ref{alg:ADMM} produces solutions that achieve $67\%$ lower out of sample error than benchmark methods in $97\%$ less execution time.Future work could expand the mixed-projection ADMM framework introduced in this work to incorporate positive semidefinite constraints and general linear constraints. Additionally, future work could empirically investigate the strength of the semidefinite relaxation \eqref{opt_lrml4:MC_SDP_relax} and could explore how to leverage this lower bound to certify globally optimal solutions.

\section*{Declarations}

\paragraph{Funding:} The authors did not receive support from any organization for the submitted work.
\paragraph{Conflict of interest/Competing interests:} The authors have no relevant financial or non-financial interests to disclose.
\paragraph{Ethics approval:} Not applicable.
\paragraph{Consent to participate:} Not applicable.
\paragraph{Consent for publication:} Not applicable.
\paragraph{Availability of data and materials:} The data employed in Section \ref{ssec_lrml4:real_experiments} was taken from the open source Netflix Prize competitions and the TMDB database.
\paragraph{Code availability:} To bridge the gap between theory and practice, we have made our code freely available on \url{GitHub} at \url{github.com/NicholasJohnson2020/LearningLowRankMatrices}
\paragraph{Authors' contributions:} Both authors contributed to the problem formulation and algorithmic ideation. Algorithm implementation, data collection, simulation and data analysis was performed by Nicholas Johnson. The first draft of the manuscript was written by Nicholas Johnson and both authors commented and edited subsequent versions of the manuscript. Both authors read and approved the final manuscript.

\clearpage

\begin{appendices}

\section{Supplemental Computational Results} \label{sec_lrml4:app_syn_exp}

\begin{table}[h]
  \centering
  \caption{\color{black}Comparison of the objective value of ScaledGD, Algorithm \ref{alg:ADMM}, Fast-Impute, Soft-Impute and SVD versus $n$ with $m=100, k=5$ and $d=150$. Averaged over $20$ trials for each parameter configuration.}\label{tbl_lrml4:N_objective}
  \begin{tabular}{c || cccccc}
\toprule
    \multicolumn{1}{c}{} & \multicolumn{6}{c}{Objective} \\
    \cmidrule(l){1-1} \cmidrule(l){2-7}
    N & ScaledGD & Algorithm \ref{alg:ADMM} & Fast-Impute & Fast-Impute-Side & Soft-Impute & SVD \\
\midrule
100 & 249263 & 660 & 4890 & \textbf{638} & 19790 & 28678 \\
200 & 306739 & 1283 & 9448 & \textbf{1217} & 23719 & 44055 \\
400 & 417643 & \textbf{2457} & 13910 & 2549 & 30685 & 61113 \\
800 & 421032 & 4844 & 28784 & \textbf{4805} & 40015 & 93119 \\
1000 & 522586 & 6046 & 42010 & \textbf{6002} & 46788 & 107851 \\
2000 & 563033 & 11975 & 59636 & \textbf{11944} & 76246 & 167459 \\
5000 & 1226490 & 30040 & 225143 & \textbf{29797} & 170248 & 364066 \\
10000 & 1973666 & 60083 & 644293 & \textbf{59411} & 317106 & 642760 \\
\bottomrule
\end{tabular}

\end{table}

\begin{table}[h]
  \centering
  \caption{\color{black}Comparison of the reconstruction error of ScaledGD, Algorithm \ref{alg:ADMM}, Fast-Impute, Soft-Impute and SVD versus $n$ with $m=100, k=5$ and $d=150$. Averaged over $20$ trials for each parameter configuration.}\label{tbl_lrml4:N_error}
  \begin{tabular}{c || cccccc}
\toprule
    \multicolumn{1}{c}{} & \multicolumn{6}{c}{$\ell_2$ Reconstruction Error} \\
    \cmidrule(l){1-1} \cmidrule(l){2-7}
    N & ScaledGD & Algorithm \ref{alg:ADMM} & Fast-Impute & Fast-Impute-Side & Soft-Impute & SVD \\
\midrule
100 & 100.225 & \textbf{0.015} & 0.072 & 0.021 & 0.212 & 0.301 \\
200 & 58.372 & \textbf{0.007} & 0.121 & 0.007 & 0.119 & 0.210 \\
400 & 33.925 & \textbf{0.004} & 0.062 & 0.005 & 0.074 & 0.160 \\
800 & 14.979 & \textbf{0.003} & 0.089 & 0.005 & 0.052 & 0.130 \\
1000 & 12.665 & \textbf{0.003} & 0.236 & 0.005 & 0.049 & 0.126 \\
2000 & 5.544 & \textbf{0.003} & 0.246 & 0.005 & 0.044 & 0.117 \\
5000 & 2.473 & \textbf{0.003} & 0.078 & 0.005 & 0.037 & 0.105 \\
10000 & 1.321 & \textbf{0.003} & 0.080 & 0.005 & 0.035 & 0.102 \\
\bottomrule
\end{tabular}

\end{table}

\begin{table}[h]
  \centering
  \caption{\color{black}Comparison of the side information $R^2$ of ScaledGD, Algorithm \ref{alg:ADMM}, Fast-Impute, Soft-Impute and SVD versus $n$ with $m=100, k=5$ and $d=150$. Averaged over $20$ trials for each parameter configuration.}\label{tbl_lrml4:N_r2}
  \begin{tabular}{c || cccccc}
\toprule
    \multicolumn{1}{c}{} & \multicolumn{6}{c}{Side Information $R^2$} \\
    \cmidrule(l){1-1} \cmidrule(l){2-7}
    N & ScaledGD & Algorithm \ref{alg:ADMM} & Fast-Impute & Fast-Impute-Side & Soft-Impute & SVD \\
\midrule
100 & 0.157 & 0.983 & 0.868 & - & \textbf{1.000} & 1.000 \\
200 & 0.193 & 0.984 & 0.878 & \textbf{0.985} & 0.837 & 0.740 \\
400 & 0.167 & \textbf{0.985} & 0.912 & 0.984 & 0.866 & 0.780 \\
800 & 0.441 & 0.985 & 0.911 & \textbf{0.985} & 0.905 & 0.826 \\
1000 & 0.328 & 0.985 & 0.896 & \textbf{0.985} & 0.906 & 0.829 \\
2000 & 0.557 & 0.985 & 0.924 & \textbf{0.985} & 0.920 & 0.862 \\
5000 & 0.525 & 0.985 & 0.889 & \textbf{0.986} & 0.928 & 0.879 \\
10000 & 0.582 & 0.985 & 0.840 & \textbf{0.985} & 0.932 & 0.888 \\
\bottomrule
\end{tabular}

\end{table}

\begin{table}[h]
  \centering
  \caption{\color{black}Comparison of the execution time of ScaledGD, Algorithm \ref{alg:ADMM}, Fast-Impute, Soft-Impute and SVD versus $n$ with $m=100, k=5$ and $d=150$. Averaged over $20$ trials for each parameter configuration.}\label{tbl_lrml4:N_time}
  \begin{tabular}{c || cccccc}
\toprule
    \multicolumn{1}{c}{} & \multicolumn{6}{c}{Execution Time (ms)} \\
    \cmidrule(l){1-1} \cmidrule(l){2-7}
    N & ScaledGD & Algorithm \ref{alg:ADMM} & Fast-Impute & Fast-Impute-Side & Soft-Impute & SVD \\
\midrule
100 & \textbf{11.05} & 76.74 & 100.68 & 207.84 & 120.84 & 115.21 \\
200 & \textbf{41.95} & 125.53 & 116.53 & 256.05 & 169.37 & 169.00 \\
400 & \textbf{56.63} & 169.16 & 159.47 & 327.89 & 251.79 & 254.37 \\
800 & \textbf{73.11} & 311.63 & 204.53 & 425.74 & 402.47 & 323.79 \\
1000 & \textbf{48.05} & 413.84 & 212.63 & 427.95 & 465.32 & 380.89 \\
2000 & \textbf{134.74} & 611.63 & 288.95 & 511.84 & 775.95 & 600.05 \\
5000 & 822.79 & 1413.00 & \textbf{503.00} & 706.37 & 1840.00 & 1318.32 \\
10000 & 19707.21 & 2275.00 & \textbf{881.58} & 1001.53 & 4016.89 & 2810.84 \\
\bottomrule
\end{tabular}

\end{table}

\begin{table}[h]
  \centering
  \caption{\color{black}Comparison of the objective value of ScaledGD, Algorithm \ref{alg:ADMM}, Fast-Impute, Soft-Impute and SVD versus $m$ with $n=1000, k=5$ and $d=150$. Averaged over $20$ trials for each parameter configuration.}\label{tbl_lrml4:M_objective}
  \begin{tabular}{c || cccccc}
\toprule
    \multicolumn{1}{c}{} & \multicolumn{6}{c}{Objective} \\
    \cmidrule(l){1-1} \cmidrule(l){2-7}
    M & ScaledGD & Algorithm \ref{alg:ADMM} & Fast-Impute & Fast-Impute-Side & Soft-Impute & SVD \\
\midrule
100 & 530097 & 6044 & 37924 & \textbf{5997} & 46917 & 103403 \\
200 & 2483914 & \textbf{6134} & 12019 & 6388 & 28371 & 114448 \\
400 & 14226535 & 6368 & 14060 & \textbf{6329} & 22009 & 90652 \\
800 & 99356630 & \textbf{6839} & 22086 & 8133 & 38745 & 87895 \\
1000 & 105452002 & \textbf{7060} & 23888 & 9470 & 45968 & 128500 \\
2000 & 591164397 & \textbf{14041} & 43504 & 17158 & 96007 & 815807 \\
5000 & 4002088274 & \textbf{42069} & 100878 & 60243 & 310417 & 11294105 \\
10000 & 9826250213 & 115589 & 196625 & \textbf{46335} & 943496 & 60913874 \\
\bottomrule
\end{tabular}

\end{table}

\begin{table}[h]
  \centering
  \caption{\color{black}Comparison of the reconstruction error of ScaledGD, Algorithm \ref{alg:ADMM}, Fast-Impute, Soft-Impute and SVD versus $m$ with $n=1000, k=5$ and $d=150$. Averaged over $20$ trials for each parameter configuration.}\label{tbl_lrml4:M_error}
  \begin{tabular}{c || cccccc}
\toprule
    \multicolumn{1}{c}{} & \multicolumn{6}{c}{$\ell_2$ Reconstruction Error} \\
    \cmidrule(l){1-1} \cmidrule(l){2-7}
    M & ScaledGD & Algorithm \ref{alg:ADMM} & Fast-Impute & Fast-Impute-Side & Soft-Impute & SVD \\
\midrule
100 & 13.68740 & \textbf{0.00323} & 0.18290 & 0.00500 & 0.05010 & 0.12560 \\
200 & 40.58900 & \textbf{0.00154} & 0.00500 & 0.00300 & 0.01150 & 0.06640 \\
400 & 127.71450 & \textbf{0.00075} & 0.00340 & 0.00190 & 0.00340 & 0.02240 \\
800 & 508.24550 & \textbf{0.00036} & 0.00340 & 0.01000 & 0.00310 & 0.00460 \\
1000 & 443.26630 & \textbf{0.00029} & 0.00300 & 0.01340 & 0.00300 & 0.00350 \\
2000 & 1292.61610 & \textbf{0.00013} & 0.00310 & 0.01410 & 0.00290 & 0.00300 \\
5000 & 3658.18840 & \textbf{0.00004} & 0.00320 & 0.01570 & 0.00270 & 0.00540 \\
10000 & 4559.27300 & \textbf{0.00002} & 0.00330 & 0.01530 & 0.00270 & 0.00650 \\
\bottomrule
\end{tabular}

\end{table}

\begin{table}[h]
  \centering
  \caption{\color{black}Comparison of the execution time of ScaledGD, Algorithm \ref{alg:ADMM}, Fast-Impute, Soft-Impute and SVD versus $m$ with $n=1000, k=5$ and $d=150$. Averaged over $20$ trials for each parameter configuration.}\label{tbl_lrml4:M_time}
  \begin{tabular}{c || cccccc}
\toprule
    \multicolumn{1}{c}{} & \multicolumn{6}{c}{Execution Time (ms)} \\
    \cmidrule(l){1-1} \cmidrule(l){2-7}
    M & ScaledGD & Algorithm \ref{alg:ADMM} & Fast-Impute & Fast-Impute-Side & Soft-Impute & SVD \\
\midrule
100 & \textbf{45.26} & 371.16 & 219.84 & 450.11 & 451.95 & 375.37 \\
200 & \textbf{56.79} & 398.68 & 217.47 & 563.11 & 971.16 & 764.37 \\
400 & \textbf{79.89} & 531.11 & 288.74 & 698.42 & 1503.68 & 1594.00 \\
800 & \textbf{131.16} & 662.68 & 385.95 & 1402.89 & 2386.58 & 2721.89 \\
1000 & \textbf{159.79} & 817.53 & 541.84 & 1651.00 & 2781.74 & 3140.95 \\
2000 & 4698.95 & \textbf{1459.00} & 5125.47 & 16279.21 & 5282.89 & 6297.79 \\
5000 & 29291.74 & \textbf{13195.95} & 40036.79 & 143865.75 & 32777.05 & 35080.95 \\
10000 & 108192.58 & \textbf{40368.16} & 158083.84 & 399863.00 & 82190.32 & 89836.58 \\
\bottomrule
\end{tabular}

\end{table}

\begin{table}[h]
  \centering
  \caption{\color{black}Comparison of the objective value of ScaledGD, Algorithm \ref{alg:ADMM}, Fast-Impute, Soft-Impute and SVD versus $d$ with $n=1000, m=100$ and  $k=5$. Averaged over $20$ trials for each parameter configuration.}\label{tbl_lrml4:D_objective}
  \begin{tabular}{c || cccccc}
\toprule
    \multicolumn{1}{c}{} & \multicolumn{6}{c}{Objective} \\
    \cmidrule(l){1-1} \cmidrule(l){2-7}
    D & ScaledGD & Algorithm \ref{alg:ADMM} & Fast-Impute & Fast-Impute-Side & Soft-Impute & SVD \\
\midrule
10 & 11692 & \textbf{478} & 2648 & 553 & 3368 & 7710 \\
50 & 96771 & 2070 & 13402 & \textbf{2058} & 16273 & 37204 \\
100 & 229740 & 4051 & 22644 & \textbf{4008} & 30116 & 68634 \\
150 & 532018 & 6010 & 29433 & \textbf{5996} & 46596 & 106504 \\
200 & 734648 & \textbf{7994} & 49004 & 8029 & 63243 & 141252 \\
250 & 1195066 & \textbf{9984} & 59470 & 10160 & 79625 & 183720 \\
500 & 4165783 & \textbf{20081} & 124737 & 20302 & 157096 & 361741 \\
1000 & 13578263 & \textbf{40188} & 276053 & 40781 & 295635 & 668550 \\
\bottomrule
\end{tabular}

\end{table}

\begin{table}[h]
  \centering
  \caption{\color{black}Comparison of the reconstruction error of ScaledGD, Algorithm \ref{alg:ADMM}, Fast-Impute, Soft-Impute and SVD versus $d$ with $n=1000, m=100$ and  $k=5$. Averaged over $20$ trials for each parameter configuration.}\label{tbl_lrml4:D_error}
  \begin{tabular}{c || cccccc}
\toprule
    \multicolumn{1}{c}{} & \multicolumn{6}{c}{$\ell_2$ Reconstruction Error} \\
    \cmidrule(l){1-1} \cmidrule(l){2-7}
    D & ScaledGD & Algorithm \ref{alg:ADMM} & Fast-Impute & Fast-Impute-Side & Soft-Impute & SVD \\
\midrule
10 & 0.60780 & \textbf{0.00709} & 0.09620 & 0.01520 & 0.04870 & 0.12530 \\
50 & 1.41600 & \textbf{0.00487} & 0.10460 & 0.00980 & 0.05140 & 0.12730 \\
100 & 5.03590 & \textbf{0.00386} & 0.05920 & 0.00660 & 0.05050 & 0.12790 \\
150 & 14.00330 & \textbf{0.00324} & 0.05830 & 0.00500 & 0.04940 & 0.12650 \\
200 & 22.26870 & \textbf{0.00276} & 0.22120 & 0.00380 & 0.04840 & 0.12510 \\
250 & 39.41630 & \textbf{0.00245} & 0.10240 & 0.00300 & 0.04890 & 0.12580 \\
500 & 177.68800 & 0.00170 & 0.32450 & \textbf{0.00142} & 0.05000 & 0.12640 \\
1000 & 679.11770 & 0.00100 & 0.34150 & \textbf{0.00084} & 0.04980 & 0.12660 \\
\bottomrule
\end{tabular}

\end{table}

\begin{table}[h]
  \centering
  \caption{\color{black}Comparison of the side information $R^2$ of ScaledGD, Algorithm \ref{alg:ADMM}, Fast-Impute, Soft-Impute and SVD versus $d$ with $n=1000, m=100$ and  $k=5$. Averaged over $20$ trials for each parameter configuration.}\label{tbl_lrml4:D_r2}
  \begin{tabular}{c || cccccc}
\toprule
    \multicolumn{1}{c}{} & \multicolumn{6}{c}{Side Information $R^2$} \\
    \cmidrule(l){1-1} \cmidrule(l){2-7}
    D & ScaledGD & Algorithm \ref{alg:ADMM} & Fast-Impute & Fast-Impute-Side & Soft-Impute & SVD \\
\midrule
10 & 0.91180 & 0.98800 & 0.90730 & \textbf{0.99199} & 0.90330 & 0.82120 \\
50 & 0.52800 & 0.98560 & 0.89950 & \textbf{0.98640} & 0.90340 & 0.82530 \\
100 & 0.53300 & 0.98510 & 0.91420 & \textbf{0.98549} & 0.91130 & 0.84020 \\
150 & 0.30270 & 0.98490 & 0.92480 & \textbf{0.98506} & 0.90500 & 0.83050 \\
200 & 0.41070 & \textbf{0.98525} & 0.90860 & 0.98520 & 0.90690 & 0.83570 \\
250 & 0.29080 & \textbf{0.98551} & 0.91300 & 0.98530 & 0.90830 & 0.83440 \\
500 & 0.15090 & \textbf{0.98491} & 0.90560 & 0.98470 & 0.90530 & 0.82600 \\
1000 & 0.29250 & \textbf{0.98441} & 0.89150 & 0.98420 & 0.90940 & 0.83810 \\
\bottomrule
\end{tabular}

\end{table}

\begin{table}[h]
  \centering
  \caption{\color{black}Comparison of the execution time of ScaledGD, Algorithm \ref{alg:ADMM}, Fast-Impute, Soft-Impute and SVD versus $d$ with $n=1000, m=100$ and  $k=5$. Averaged over $20$ trials for each parameter configuration.}\label{tbl_lrml4:D_time}
  \begin{tabular}{c || cccccc}
\toprule
    \multicolumn{1}{c}{} & \multicolumn{6}{c}{Execution Time (ms)} \\
    \cmidrule(l){1-1} \cmidrule(l){2-7}
    D & ScaledGD & Algorithm \ref{alg:ADMM} & Fast-Impute & Fast-Impute-Side & Soft-Impute & SVD \\
\midrule
10 & \textbf{81.73684} & 324.73680 & 250.10530 & 106.31580 & 462.84210 & 379.00000 \\
50 & \textbf{91.47368} & 345.84210 & 210.21050 & 168.89470 & 461.89470 & 384.42110 \\
100 & \textbf{91.89474} & 386.47370 & 217.57890 & 289.89470 & 469.42110 & 393.78950 \\
150 & \textbf{98.73684} & 413.57890 & 229.36840 & 521.73680 & 480.36840 & 396.36840 \\
200 & \textbf{99.00000} & 432.36840 & 275.94740 & 738.00000 & 466.73680 & 373.57890 \\
250 & \textbf{151.73684} & 492.31580 & 275.52630 & 1002.94740 & 522.05260 & 415.78950 \\
500 & \textbf{181.15789} & 545.78950 & 266.78950 & 1673.31580 & 463.47370 & 373.00000 \\
1000 & \textbf{148.26316} & 668.57890 & 317.10530 & 2719.10530 & 523.63160 & 422.42110 \\
\bottomrule
\end{tabular}

\end{table}

\begin{table}[h]
  \centering
  \caption{\color{black}Comparison of the objective value of ScaledGD, Algorithm \ref{alg:ADMM}, Fast-Impute, Soft-Impute and SVD versus $k$ with $n=1000, m=100$ and  $d=150$. Averaged over $20$ trials for each parameter configuration.}\label{tbl_lrml4:K_objective}
  \begin{tabular}{c || cccccc}
\toprule
    \multicolumn{1}{c}{} & \multicolumn{6}{c}{Objective} \\
    \cmidrule(l){1-1} \cmidrule(l){2-7}
    D & ScaledGD & Algorithm \ref{alg:ADMM} & Fast-Impute & Fast-Impute-Side & Soft-Impute & SVD \\
\midrule
5 & 514330 & 6040 & 40047 & \textbf{6002} & 46511 & 106391 \\
10 & 1892279 & 7502 & 240910 & \textbf{6315} & 316063 & 805397 \\
15 & 5213393 & 14022 & 113808 & \textbf{6870} & 1121725 & 2495972 \\
20 & 10196280 & 336294 & 103289 & \textbf{7952} & 2629190 & 4910387 \\
25 & 16816443 & 885813 & 115398 & \textbf{8601} & 4499282 & 7541301 \\
30 & 27397868 & 65551564 & 128971 & \textbf{8824} & 6863908 & 10634437 \\
35 & 39536651 & 160010082 & 144816 & \textbf{8832} & 9407573 & 14192828 \\
40 & - & 320532712 & 114058512 & 19625866 & \textbf{12538955} & 18290215 \\
\bottomrule
\end{tabular}

\end{table}

\begin{table}[h]
  \centering
  \caption{\color{black}Comparison of the reconstruction error of ScaledGD, Algorithm \ref{alg:ADMM}, Fast-Impute, Soft-Impute and SVD versus $k$ with $n=1000, m=100$ and  $d=150$. Averaged over $20$ trials for each parameter configuration.}\label{tbl_lrml4:K_error}
  \begin{tabular}{c || cccccc}
\toprule
    \multicolumn{1}{c}{} & \multicolumn{6}{c}{$\ell_2$ Reconstruction Error} \\
    \cmidrule(l){1-1} \cmidrule(l){2-7}
    D & ScaledGD & Algorithm \ref{alg:ADMM} & Fast-Impute & Fast-Impute-Side & Soft-Impute & SVD \\
\midrule
5 & 12.95500 & \textbf{0.00314} & 0.12670 & 0.00480 & 0.05160 & 0.12900 \\
10 & 5.41720 & \textbf{0.00284} & 0.20510 & 0.00350 & 0.08990 & 0.22520 \\
15 & 3.45210 & 0.00940 & 0.01870 & \textbf{0.00561} & 0.14080 & 0.30200 \\
20 & 2.34590 & 0.04860 & 0.00920 & \textbf{0.00505} & 0.18300 & 0.34220 \\
25 & 1.73610 & 0.11940 & 0.00680 & \textbf{0.00409} & 0.21440 & 0.36130 \\
30 & 1.40170 & 0.19840 & 0.00570 & \textbf{0.00350} & 0.23740 & 0.37240 \\
35 & 1.16780 & 0.17700 & 0.00480 & \textbf{0.00301} & 0.25270 & 0.37740 \\
40 & - & 0.21870 & 0.00440 & \textbf{0.00267} & 0.26520 & 0.38020 \\
\bottomrule
\end{tabular}

\end{table}

\begin{table}[h]
  \centering
  \caption{\color{black}Comparison of the execution time of ScaledGD, Algorithm \ref{alg:ADMM}, Fast-Impute, Soft-Impute and SVD versus $k$ with $n=1000, m=100$ and  $d=150$. Averaged over $20$ trials for each parameter configuration.}\label{tbl_lrml4:K_time}
  \begin{tabular}{c || cccccc}
\toprule
    \multicolumn{1}{c}{} & \multicolumn{6}{c}{Execution Time (ms)} \\
    \cmidrule(l){1-1} \cmidrule(l){2-7}
    D & ScaledGD & Algorithm \ref{alg:ADMM} & Fast-Impute & Fast-Impute-Side & Soft-Impute & SVD \\
\midrule
5 & \textbf{65} & 358 & 217 & 455 & 450 & 389 \\
10 & \textbf{85} & 586 & 938 & 737 & 485 & 354 \\
15 & \textbf{112} & 825 & 2217 & 775 & 512 & 346 \\
20 & \textbf{110} & 1390 & 3499 & 857 & 576 & 333 \\
25 & \textbf{118} & 2071 & 4966 & 1310 & 599 & 298 \\
30 & \textbf{108} & 2425 & 6381 & 1272 & 625 & 275 \\
35 & \textbf{120} & 2653 & 8635 & 1508 & 641 & 280 \\
40 & 4532 & 3092 & 11811 & 1797 & 689 & \textbf{270} \\
\bottomrule
\end{tabular}

\end{table}

\begin{table}[h]
  \centering
  \caption{\color{black}Comparison of the in sample $\ell_2$ reconstruction error, out of sample $\ell_2$ reconstruction error and execution time of Algorithm \ref{alg:ADMM} and Fast-Impute versus $k$ on Netflix Prize Dataset $1$. Averaged over $5$ trials.}\label{tbl_lrml4:netflix_6Y}
  \begin{tabular}{c || ccccccccc}
\toprule
    \multicolumn{1}{c}{} & \multicolumn{3}{c}{In Sample Error} & \multicolumn{3}{c}{Out of Sample Error} & \multicolumn{3}{c}{Execution Time (hr)}\\
    \cmidrule(l){1-1} \cmidrule(l){2-4} \cmidrule(l){5-7} \cmidrule(l){8-10}
    K & Alg \ref{alg:ADMM} & FI & FIS & Alg \ref{alg:ADMM} & FI & FIS & Alg \ref{alg:ADMM} & FI & FIS \\
\midrule
 3 &              \textbf{0.0507} &                     0.0516 &        0.5782&          \textbf{0.0573} &                         0.2264 &        0.8929 &         \textbf{0.2512} &                         6.4907 & 2.7021\\
 4 &              0.0490 &                     \textbf{0.0486} &        0.5588&          \textbf{0.0604} &                         0.1942 &        2.2728 &         \textbf{0.2760} &                         8.8641 & 2.8570\\
 5 &              0.0476 &                     \textbf{0.0460} &        0.5095&          \textbf{0.0651} &                         0.1835 &        1.1568 &         \textbf{0.3136} &                        11.8207 & 3.1463\\
 6 &              0.0463 &                     \textbf{0.0438} &        0.4842&          \textbf{0.0676} &                         0.1867 &        1.3890 &         \textbf{0.3654} &                        17.4471 & 3.2974\\
 7 &              \textbf{0.0451} &                    - &              0.4842&    \textbf{0.0718} &                        - &         1.5014 &        \textbf{0.4637} &                        - & 3.4976\\
 8 &              \textbf{0.0442} &                    - &              0.4842&    \textbf{0.0759} &                        - &         1.4145 &        \textbf{0.4941} &                        - & 3.8013\\
 9 &              \textbf{0.0434} &                    - &              0.4842&    \textbf{0.0811} &                        - &         1.1559 &        \textbf{1.0262} &                        - & 3.7692\\
10 &              \textbf{0.0427} &                    - &              0.4842&    \textbf{0.0839} &                        - &         1.3480 &        \textbf{0.8503} &                        - & 3.9254\\
\bottomrule
\end{tabular}

\end{table}

\begin{table}[h]
  \centering
  \caption{\color{black}In sample $\ell_2$ reconstruction error, out of sample $\ell_2$ reconstruction error and execution time of Algorithm \ref{alg:ADMM} versus $k$ on Netflix Prize Dataset $2$. Averaged over $5$ trials.}\label{tbl_lrml4:netflix_4Y}
  \begin{tabular}{c || cccccc}
\toprule
    \multicolumn{1}{c}{} & \multicolumn{2}{c}{In Sample Error} & \multicolumn{2}{c}{Out of Sample Error} & \multicolumn{2}{c}{Execution Time (hr)}\\
    \cmidrule(l){1-1} \cmidrule(l){2-3} \cmidrule(l){4-5} \cmidrule(l){6-7}
    K & Algorithm \ref{alg:ADMM} & FIS & Algorithm \ref{alg:ADMM} & FIS & Algorithm \ref{alg:ADMM} & FIS \\
\midrule
 3 &              \textbf{0.0518} &                     0.5255 &                  \textbf{0.0591} &                         0.6825 &                  \textbf{0.5773} &                         3.8481 \\
 4 &              \textbf{0.0502} &                     0.4753 &                  \textbf{0.0607} &                         1.0302 &                  \textbf{0.6581} &                         4.2354 \\
 5 &              \textbf{0.0488} &                     0.4753 &                  \textbf{0.0652} &                         1.0172 &                  \textbf{1.3482} &                         4.2825 \\
 6 &              \textbf{0.0475} &                     0.4753 &                  \textbf{0.0680} &                         1.0417 &                  \textbf{1.9273} &                         4.6575 \\
 7 &              \textbf{0.0463} &                     0.4753 &                  \textbf{0.0715} &                         1.0849 &                  \textbf{2.7246} &                         4.7145 \\
 8 &              \textbf{0.0454} &                     0.4753 &                  \textbf{0.0718} &                         1.0857 &                  \textbf{2.8198} &                         5.0538 \\
 9 &              \textbf{0.0446} &                     0.4753 &                  \textbf{0.0782} &                         1.0304 &                  \textbf{2.8945} &                         5.1645 \\
10 &              \textbf{ 0.0439} &                    0.4753 &                  \textbf{0.0812} &                         1.0246 &                  \textbf{2.3311} &                         5.5966 \\
\bottomrule
\end{tabular}

\end{table}

\begin{table}[h]
  \centering
  \caption{Algorithm \ref{alg:ADMM} side information $R^2$ on Netflix Prize Dataset 1. Averaged over $5$ trials.}\label{tbl_lrml4:netflix_6Y_r2}
  \begin{tabular}{c || ccccccc}
\toprule
    K & Overall & Popularity & Vote Average & Vote Count & Runtime & Budget & Revenue \\
\midrule
 3 & 0.136 & 0.003 & 0.413 & 0.005 & 0.134 & 0.192 & 0.064 \\
 4 & 0.142 &  0.02 &  0.41 & 0.007 & 0.137 & 0.221 & 0.055 \\
 5 & 0.299 & 0.144 & 0.561 & 0.234 & 0.209 & 0.406 & 0.227 \\
 6 & 0.335 & 0.167 & 0.598 & 0.302 & 0.245 & 0.398 & 0.295 \\
 7 & 0.336 & 0.163 & 0.599 & 0.302 & 0.252 & 0.403 & 0.291 \\
 8 & 0.382 & 0.248 &  0.63 & 0.384 & 0.257 & 0.421 &  0.36 \\
 9 & 0.388 & 0.248 & 0.636 & 0.379 & 0.277 & 0.441 & 0.355 \\
10 & 0.398 &  0.25 &  0.64 & 0.385 & 0.287 & 0.462 &  0.37 \\
\bottomrule
\end{tabular}

\end{table}

\begin{table}[h]
  \centering
  \caption{Algorithm \ref{alg:ADMM} side information $R^2$ on Netflix Prize Dataset 2. Averaged over $5$ trials.}\label{tbl_lrml4:netflix_4Y_r2}
  \begin{tabular}{c || ccccc}
\toprule
    K & Overall & Popularity & Vote Average & Vote Count & Runtime \\
\midrule
 3 & 0.072 & 0.018 & 0.227 & 0.016 &  0.03 \\
 4 & 0.069 & 0.019 & 0.194 & 0.026 & 0.036 \\
 5 & 0.187 & 0.177 & 0.312 & 0.213 & 0.047 \\
 6 & 0.198 &  0.19 & 0.319 & 0.231 & 0.059 \\
 7 & 0.206 & 0.187 & 0.344 & 0.223 &  0.07 \\
 8 & 0.259 & 0.298 & 0.361 & 0.302 & 0.074 \\
 9 & 0.264 &   0.3 &  0.37 & 0.304 & 0.086 \\
10 & 0.271 & 0.304 & 0.367 & 0.317 & 0.095 \\
\bottomrule
\end{tabular}

\end{table}
    
\end{appendices}

\clearpage

\section*{Declarations}

\paragraph{Funding:} The authors did not receive support from any organization for the submitted work.
\paragraph{Conflict of interest/Competing interests:} The authors have no relevant financial or non-financial interests to disclose.
\paragraph{Ethics approval:} Not applicable.
\paragraph{Consent to participate:} Not applicable.
\paragraph{Consent for publication:} Not applicable.
\paragraph{Availability of data and materials:} The data used in Section \ref{ssec_lrml4:real_experiments} was taken from the Netflix Prize Dataset and the TMDB Database, both of which are freely available online.
\paragraph{Code availability:} To bridge the gap between theory and practice, we have made our code freely available on \url{GitHub} at \url{https://github.com/NicholasJohnson2020/LearningLowRankMatrices}
\paragraph{Authors' contributions:} Both authors contributed to formulating the optimization problem. Algorithmic design and implementation along with data collection, simulation and data analysis was performed by Nicholas Johnson. The first draft of the manuscript was written by Nicholas Johnson and both authors commented and edited subsequent versions of the manuscript. Both authors read and approved the final manuscript.




\clearpage


\bibliography{sn-bibliography}

\end{document}